\documentclass[english]{article}

\usepackage{nicefrac}
\usepackage{cancel}
\usepackage{appendix}

\usepackage[linesnumbered,ruled,vlined]{algorithm2e}

\usepackage{geometry}
\usepackage[T1]{fontenc}
\usepackage[latin9]{inputenc}
\usepackage{bm}
\usepackage{amsmath,mathtools}
\usepackage{amssymb}
\usepackage[unicode=true,
 bookmarks=false,
 breaklinks=false,pdfborder={0 0 1},colorlinks=false]
 {hyperref}
\hypersetup{colorlinks,citecolor=blue,filecolor=blue,linkcolor=blue,urlcolor=blue}

\usepackage{tikz}
  \usetikzlibrary{positioning,fit,backgrounds, matrix,calc}
\usepackage{xcolor}

\makeatletter

\usepackage{footnote}
\makesavenoteenv{algorithm}
 
\usepackage{amsthm}
\usepackage{cite}  
\usepackage{comment}
\usepackage{authblk}
\usepackage[sort,compress]{natbib}
\usepackage{booktabs}
\usepackage{mathabx}
\usepackage{graphicx}
\usepackage{array}

\usepackage{bbding}
\usepackage{colortbl}
\usepackage{makecell}

\usepackage[linesnumbered,ruled,vlined]{algorithm2e}

\SetCommentSty{mycommfont}

\usepackage{float}
\usepackage{multirow}
\usepackage{dsfont}
\usepackage{tcolorbox}
\usepackage{color}
\definecolor{yxc}{RGB}{255,0,0}
\definecolor{ytw}{RGB}{255,69,0}
\definecolor{gen}{RGB}{0,0,200}
\definecolor{zhh}{RGB}{200,200,0}

\allowdisplaybreaks

\makeatletter

\def\LatinUpper{A,B,C,D,E,F,G,H,I,J,K,L,M,N,O,P,Q,R,S,T,U,V,W,X,Y,Z}

\newcommand{\genCal}[1]{\expandafter\newcommand\csname c#1\endcsname{{\mathcal #1}}}
\@for\q:=\LatinUpper\do{%
	\expandafter\genCal\q
}

\newcommand{\genBb}[1]{\expandafter\newcommand\csname b#1\endcsname{{\mathbb #1}}}
\@for\q:=\LatinUpper\do{%
	\expandafter\genBb\q
}

\makeatother

\DeclareMathOperator{\ham}{d_H}

\newcommand{\back}[1]{\overset{\leftarrow}{#1}}

\newcommand{\inflow}{\cF}
\newcommand{\tc}{\mathrm{TC}}
\newcommand{\dtc}{\mathrm{DTC}}
\newcommand{\mi}{\mathrm{I}}
\newcommand{\ent}{\mathcal{H}}

\newcommand{\breg}{D}
\newcommand{\wt}{\widetilde}
\newcommand{\wh}{\widehat}
\newcommand{\defn}{\coloneqq}

\newcommand{\mask}{\mathrm{MASK}}
\newcommand{\qtok}{\cQ}

\renewcommand{\d}{\mathrm{d}}

\newcommand{\KL}{\mathsf{KL}}

\newcommand{\escore}{\varepsilon_{\mathrm{score}}}

\newcommand{\indep}{\perp \!\!\! \perp}

\newtheorem{definition}{Definition}

\theoremstyle{plain} \newtheorem{lemma}{\textbf{Lemma}}\newtheorem{proposition}{\textbf{Proposition}}\newtheorem{theorem}{\textbf{Theorem}}

\theoremstyle{assumption}\newtheorem{assumption}{\textbf{Assumption}}
\theoremstyle{remark}
\theoremstyle{Corollary}\newtheorem{corollary}{\textbf{Corollary}}

\usepackage{enumitem}

\usepackage[nosort,capitalise]{cleveref}
\crefname{enumi}{}{}
\crefname{equation}{Eqn.}{Eqns.}

\crefname{theorem}{Theorem}{Theorems}
\crefname{lemma}{Lemma}{Lemmas}
\crefname{corollary}{Corollary}{Corollaries}
\crefname{proposition}{Proposition}{Propositions}
\crefname{definition}{Definition}{Definitions}
\crefname{remark}{Remark}{Remarks}
\crefname{assumption}{Assumption}{Assumptions}
\crefname{conjecture}{Conjecture}{Conjectures}

\Crefname{theorem}{Theorem}{Theorems}
\Crefname{lemma}{Lemma}{Lemmas}
\Crefname{corollary}{Corollary}{Corollaries}
\Crefname{proposition}{Proposition}{Propositions}
\Crefname{definition}{Definition}{Definitions}
\Crefname{remark}{Remark}{Remarks}
\Crefname{assumption}{Assumption}{Assumptions}
\Crefname{conjecture}{Conjecture}{Conjectures}

\title{Provably adaptive sampling with uniform and remasking \\ discrete diffusion models}

\author{Daniil Dmitriev\thanks{Department of Statistics and Data Science, the Wharton School, University of Pennsylvania; email: \texttt{\{daniild,zhihanh,ytwei\}@wharton.upenn.edu}
},  ~~~Zhihan Huang$^{*}$, ~~~Yuting Wei$^{*}$
}

\date{\today}

\makeatother

\begin{document}

\maketitle 

\begin{abstract}
Discrete diffusion models offer a promising alternative to autoregressive generation by enabling parallel updates, but their sampling efficiency can depend strongly on the choice of the forward process and the sampler. 
For the uniform forward process, existing lower bounds for the standard $\tau$-leaping sampler scale linearly with the ambient dimension $d$, raising the question of whether this dependence is intrinsic to the forward process. We answer this question in the negative.
We consider a first-order sampler based on the leave-one-out denoiser for uniform and remasking processes whose coordinate updates can be performed in parallel. In both cases, the sampler can correct denoising mistakes during the sampling process, which becomes necessary when many coordinates are updated together. Our main result establishes an adaptive sampling guarantee: up to logarithmic factors,
\[
N= O\!\left(\frac{\mathrm{DTC}(X_0)}{\varepsilon}\right)
\]
discretization steps suffice to achieve sampling error $O(\varepsilon_{\mathrm{score}}+\varepsilon)$, where $\varepsilon_{\mathrm{score}}$ is the error in score estimation. 
Thus, the sampling complexity is governed by the intrinsic dependence structure of the target distribution, as measured by its dual total correlation $\mathrm{DTC}(X_0)$, rather than directly by the ambient dimension $d$. In particular, this shows that the unfavorable dimension dependence of $\tau$-leaping for uniform diffusion is a consequence of the sampler rather than the forward process itself. 
Our analysis proceeds through a Bayes-optimal auxiliary sampler that separates discretization error from score-estimation error. We also derive an exact information-theoretic representation of the discretization error in terms of the mutual information between different coordinates of the forward process at different times. This representation applies to general forward processes and, in the uniform and remasking cases, can be controlled by $\mathrm{DTC}(X_0)$. Numerical experiments on structured synthetic distributions illustrate the predicted dimension-adaptive behavior.
\end{abstract}

\setcounter{tocdepth}{2}
\tableofcontents

\section{Introduction}

Diffusion models have achieved remarkable success across a wide range of generative modeling tasks and have recently emerged as a compelling alternative to autoregressive (AR) modeling for discrete sequence generation, such as natural language and protein sequences (\cite{austin2021structured,campbell2022continuous,lou2023discrete}). 
Unlike AR models, which generate tokens sequentially according to a rigid left-to-right factorization, discrete diffusion models enable parallel generation and iterative refinement, making them an increasingly important component of the modern generative modeling toolbox. 

Among existing approaches, masking diffusion has been particularly successful: by progressively replacing data tokens with a dedicated mask token and learning to recover them, masking diffusion has demonstrated strong empirical performance at scale (\cite{sahoo2024simple,shi2024simplified,ou2024your}). 
Nevertheless, the vanilla masking diffusion sampling process has an inherent limitation: unmasking is typically monotone, so once a position is assigned a token, it cannot be revisited in subsequent denoising steps. Therefore, incorrect early predictions may persist and become part of the context used to generate other tokens, potentially leading to error propagation (\cite{xu2025energy,kim2025fine,huang2026don}).
Recent remasking and iterative-refinement methods try to overcome this limitation by explicitly allowing uncertain predictions to be remasked and regenerated, suggesting that the ability to revise intermediate decisions is an important property of discrete generative models (\cite{zhao2026informed,wang2026remasking}). 
Uniform diffusion naturally provides this capability. Since corruption and denoising operate entirely within the original vocabulary, a token can transition between different valid states throughout the sampling trajectory, supporting error correction without relying on a special mask state. Importantly, uniform diffusion is not merely of theoretical interest; recent large-scale systems, including Google DeepMind's DiffusionGemma (\cite{diffusiongemma2026}) and the 7B Sumi model (\cite{ye2026sumi}), adopt uniform-state diffusion, while other recent frameworks, such as GIDD (\cite{von2025generalized}) and XDLM (\cite{liu2026xdlm}), have explored hybrid uniform-masking processes. 
These developments motivate a systematic reconsideration of discrete diffusion beyond masking diffusion as a practical and complementary foundation for large-scale discrete generative modeling.

\subsection{Sampling efficiency and algorithm design}
A central question for discrete diffusion models is their sampling efficiency, namely, how many discretization steps or model evaluations are needed to generate a sample to a prescribed accuracy.
This question is closely tied to the particular choice of the sampler, as different samplers may yield different results. Many samplers were proposed recently, including the \(\tau\)-leaping sampler (\cite{campbell2022continuous}) and its variants (\cite{liang2025discrete}), uniformization sampler~(\cite{chen2024convergence}), DMPM sampler~(\cite{pham2025discrete}), and others. Importantly, certain samplers (e.g., uniformization) only allow one transition at a time. Although theoretically interesting, such a restriction diminishes the main advantage of discrete diffusion models, parallel generation. In what follows, we consider the class of samplers that by design may perform several updates per discretization step, which includes the \(\tau\)-leaping sampler.

Recent work by \cite{dmitriev26efficient} reveals a striking adaptivity property of masking diffusion: the sampling complexity can automatically adapt to the intrinsic dimensionality or structural complexity of the target distribution, 
leading to substantially better efficiency than worst-case guarantees when the target distribution exhibits favorable structure. This echoes a growing literature on continuous diffusion models that establishes analogous forms of adaptation to low-dimensional or structured target distributions; see, e.g., \cite{li2024adapting,li2025dimension,huang2024denoising,liang2025low}.
For uniform discrete diffusion, however, the picture appears less optimistic. Beyond a few special cases, \cite{dmitriev26efficient} show that the sampling complexity of the widely adopted $\tau$-leaping algorithm scales linearly with the ambient dimension \(d\) of the target distribution. Such linear dependence on the ambient dimension essentially precludes fast parallel generation, as AR models also require a number of model evaluations linear in \(d\).
This contrast between the adaptive guarantee for masking diffusion and the lower bound for uniform diffusion raises a fundamental question: 
\begin{center}
\emph{
Is this unfavorable dimension dependence an intrinsic limitation of uniform discrete diffusion, \\or merely a consequence of the $\tau$-leaping sampler?}
\end{center}

In this work, we show that the latter is true and propose a sampler that yields adaptive guarantees for the uniform discrete diffusion. 
Resolving this question is important both for understanding the statistical and computational limits of uniform diffusion and for guiding the design of more efficient sampling algorithms. 
We focus on first-order sampling methods, where each denoising step requires only a single evaluation of the learned model, and investigate the fundamental limits of their sampling efficiency, in contrast to higher-order samplers that use multiple model evaluations per step (e.g.,~\cite{ren2026fast}).

A related open problem concerns remasking discrete diffusion models, 
which have become widely adopted mechanisms for revising previously generated tokens but currently lack a comparable theoretical understanding of their sampling efficiency. Our goal is therefore to characterize when uniform and remasking diffusion can exploit low-dimensional or structured target distributions, and to determine how their sampling complexity can adapt to intrinsic structure in a manner analogous to masking diffusion.

\subsection{An information-theoretic perspective on adaptive sampling}

To understand the adaptive sampling behavior described above, we turn to information-theoretic quantities that capture the intrinsic structure of discrete distributions. As a simple example, consider a $d$-dimensional binary distribution that is uniform over the two strings $0^d$ and $1^d$. While the ambient dimension can be arbitrarily large, the distribution contains only one bit of uncertainty. Moreover, this global structure remains detectable after corruption. Under masking diffusion, observing a single unmasked coordinate determines the original string, whereas under uniform diffusion, the imbalance between the numbers of zeros and ones in the corrupted sequence provides information about the initial state. Thus, the effective difficulty of recovering the underlying sample can remain small even in high dimensions.

This intuition can be formalized using information-theoretic measures of dependence. One such quantity, which has appeared in recent analyses of adaptive sampling (\cite{chen2025optimal,dmitriev26efficient,zhao2026adaptation}), is \emph{the dual total correlation}:
\begin{equation}
    \dtc(X) = \ent(X) - \sum_{i=1}^{d} \ent(X^i \mid X^{-i}),
\end{equation}
where $X=(X^1,\ldots,X^d)$. The DTC measures the dependence among the coordinates that remains after conditioning each coordinate on all the others. Importantly, $\dtc(X)\leq H(X) \leq d \log S$, and there exist high-dimensional distributions whose DTC remains bounded independently of $d$. Hence, guarantees expressed in terms of DTC can be substantially sharper than worst-case bounds that scale directly with the ambient dimension.

More broadly, our analysis is based on an information-theoretic characterization of the sampling error rather than on DTC alone. We show that the discretization error can be expressed through mutual information between different coordinates of the forward process at different times. For the uniform and remasking processes, we shall see that this general characterization can then be controlled in terms of the DTC of the target distribution. 

\begin{figure}[t]
\begin{center}
\includegraphics[width=0.7\linewidth]{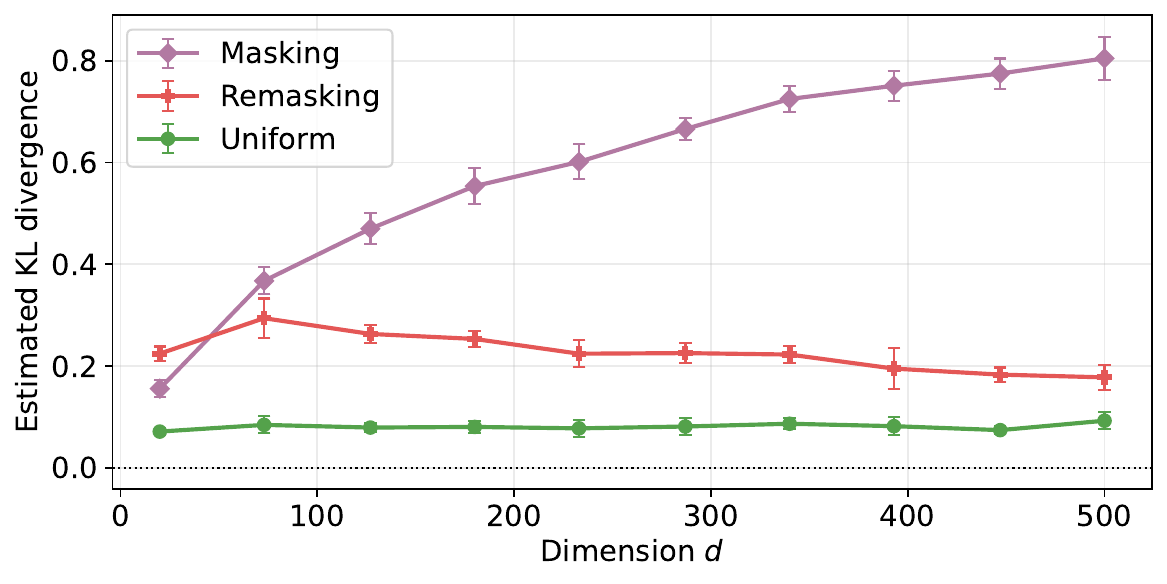}
\end{center}
\caption{\(N = 20\) discretization steps. Target distribution is a Markov chain, see~\Cref{sec:exp}. Remasking process uses \(p_M = 0.5\).  We use early stopping \(\delta = 1\mathrm{e}{-5}\) and time horizon \(T = 8\). 
Different types of discretization schedules are adopted for different noising processes to optimize their performance:
the masking process uses constant discretization \(t_{k+1} - t_k = (T - \delta) / N\); the remasking and uniform processes use geometric discretization \(t_{k+1} - t_k = \kappa \min(1, T - t_{k+1})\) with \(\kappa \approx 1.3\). Results are averaged over 7 runs. 
}
\label{fig:mc-increase-d}
\end{figure}

\subsection{Our main contributions}

Our main contributions are as follows:
\begin{itemize}

\item \textbf{An efficient leave-one-out sampler for uniform and remasking diffusion.}
We study a first-order sampler based on leave-one-out conditional probabilities. On each discretization interval, the resulting approximate reverse process decomposes into independent one-dimensional CTMCs and can be executed for all coordinates in parallel. For the uniform process, our sampler recovers the recently proposed leave-one-out bridge plug-in and cavity ancestral samplers (\cite{gourevitch2026uniform,noguerales2026does}), while our CTMC formulation provides a unified construction that also applies to the remasking process, with the masking process as a special case. 

\item \textbf{Adaptive sampling guarantees beyond ambient dimension.}
Our main result, \Cref{thm:main}, establishes for both the uniform and remasking processes the bound
$$
\KL(q_{T-t_N}\|p_{\mathrm{output}})
\lesssim
\KL(q_T\|q_{\mathrm{noise}})
+\varepsilon_{\mathrm{score}}
+\kappa\dtc(X_0).
$$
Consequently,  it suffices to take 
$
N=\widetilde O\left({\dtc(X_0)}/{\varepsilon}\right),
$
discretization steps to achieve sampling error $O(\varepsilon_{\mathrm{score}}+\varepsilon)$. Thus, the sampling complexity adapts to the intrinsic dependence structure of the target distribution. For the uniform process, this shows that the unfavorable dimension dependence of the standard $\tau$-leaping sampler is not due to the forward process itself, circumventing the lower bound of~\cite{dmitriev26efficient}. Our result also provides an adaptive sampling guarantee for the remasking process, for which theoretical guarantees were previously lacking.

\item \textbf{A Bayes-optimal decomposition of sampling error.}
To disentangle errors arising from discretization and score estimation, we introduce a Bayes-optimal auxiliary sampler that uses the exact leave-one-out conditional probabilities available at each discretization point. The discretization error measures the discrepancy between the true reverse process and the Bayes-optimal sampler, whereas the approximation error measures the additional discrepancy introduced by the learned sampler. The former depends only on the target distribution, forward process, and time discretization, whereas the latter is controlled by the standard score entropy loss and can also be characterized directly through errors in the leave-one-out denoiser. Moreover, in~\Cref{thm:main-discr}, we give an exact information-theoretic representation of the discretization error in terms of mutual information between different coordinates of the forward process at different times. This characterization contains no explicit dependence on ambient parameters such as the dimension or vocabulary size, and may be of independent interest.
\end{itemize}

\subsection{Notation}
For a positive integer \(n\), we denote \([n] = \{1, \ldots, n\}\). We use \(d, S,\) and \(T\) to denote the ambient dimension, the vocabulary size, and the time horizon, respectively. Let \(\cV = [S] \cup \mathrm{Aux}\), where \(\mathrm{Aux}\) is the set of auxiliary states. We have \(\mathrm{Aux} = \emptyset\) or \(\mathrm{Aux} = \{\mask, \mathrm{REMASK}\}\). Let \(q_{\mathrm{data}}\) denote a distribution on \([S]^d\). For \(x = (x^1, \ldots, x^d)\in \cV^d\) and \(i \in [d]\), we denote \(x^{-{i}} \defn (x^1, \ldots, x^{i - 1}, x^{i + 1}, \ldots, x^d) \in \cV^{d - 1}\). For \(x \in \cV^d\), \(i \in [d]\), and \(b \in \cV\), we denote \(x \odot_i b \in \cV^d\) as follows: \((x \odot_i b)^j = x^j\) for \(j \neq i\), and \((x \odot_i b)^i = b\). We use \(\KL, \ent\), and \(\mi\) to denote the KL divergence, the entropy, and the mutual information, respectively. We adopt standard asymptotic notation: \(O(\cdot), o(\cdot), \Omega(\cdot),\) and \(\lesssim\); notation \(\wt O(\cdot)\) hides logarithmic factors in \(d\), \(S\), and \(1 / \varepsilon\). We let \(\breg_{\phi}(x, y) = \phi(x) - \phi(y) - (x - y)^\top \nabla \phi(y) \) be the Bregman divergence for \(\phi: \bR^n \to \bR\), and denote \(\breg(a, b) = \frac a b - 1 - \log \frac a b\) as the Bregman divergence for the scalar function \(\phi(x) = - \log x\).

\subsection{Other related works}

\paragraph{Sampling guarantees for discrete diffusion.} 

A central question in the theory of discrete diffusion models is how many
sampling steps are required to generate an accurate sample. Early work by
\cite{chen2024convergence} studies an exact uniformization-based sampler and
establishes guarantees in both KL divergence and total variation, which eliminates discretization error but requires a number of sampling steps that scales linearly with $d$. 
Subsequent works analyze a broader range of discretized samplers, including
$\tau$-leaping, Euler, and Tweedie $\tau$-leaping
\citep{liang2025discrete}, as well as higher-order schemes
\citep{ren2026fast}. Related convergence guarantees have also been developed
for discrete Markov probabilistic models \citep{pham2025discrete}, absorbing
or masking processes \citep{liang2025absorb,liang2026sharp}, and more general
masked and random-walk dynamics \citep{conforti2025non}.
For the uniform process, \cite{dmitriev26efficient} establish a sharp
$\widetilde O(d/\epsilon)$ complexity for the standard $\tau$-leaping
sampler, together with a matching algorithmic lower bound.

\paragraph{Adaptive guarantees.} 
More recent work has sought sampling guarantees that depend on the intrinsic structure of the target distribution rather than directly on the ambient dimension. 
\cite{libreaking} establish a sampling complexity bound in terms of mutual information, while \cite{chen2025optimal,zhao2026adaptation} sharpen this dependence to information-theoretic quantities such as total correlation and dual total correlation. 
\cite{cai2026confidence} further show that confidence-based unmasking schedules for diffusion language models can achieve sublinear sampling complexity. An interesting recent work by \cite{wainwright2026data} develops an information-theoretic measure of data geometry for masking diffusion, yielding data-dependent sampling guarantees and optimized sampling schedules.
Using a CTMC framework, 
\cite{dmitriev26efficient} establish a guarantee in terms of effective total correlation, which is upper bounded by both total correlation and dual total correlation.
These adaptive guarantees, however, are specific to masking diffusion, either in the diffusion-language-model formulation or in the CTMC framework. In contrast, our results establish adaptive guarantees for the uniform and remasking processes.

\paragraph{Leave-one-out denoisers.} 
Score estimation is a central component of discrete diffusion models, particularly following the score entropy objective introduced by \cite{lou2023discrete}. 
More recently,~\cite{gourevitch2026uniform} and~\cite{noguerales2026does} observe that training the leave-one-out denoiser, or a cavity estimator, is equivalent to score estimation and yields better empirical performance. Our work also adopts the leave-one-out denoiser as a central object, but from a complementary perspective: we show that it arises naturally from the CTMC formulation of the reverse process and use it to construct the sampler and analyze its approximation and discretization errors.

\section{Problem setup}

\subsection{Continuous-time Markov chain}
\label{sec:ctmc}

Let us begin by introducing the framework of discrete diffusion models, which are used to approximate a target distribution \(q_{\mathrm{data}}\) on a discrete domain \([S]^d\).
Analogous to their continuous counterparts, discrete diffusion models consist of a forward process and a reverse process evolving over the discrete state space. Both processes can be formulated in terms of continuous-time Markov chains (CTMCs), which we introduce next, following~\cite{campbell2022continuous}.

\begin{definition}
\label{def:ctmc}
    A stochastic process \((X_t)_{t \in [0, T]}\) on \(\cV^d\) with rate matrices \((Q_t)_{t \in [0, T]}\) and initial distribution \(q_0\) is a \emph{continuous-time Markov chain (CTMC)} if \(X_0 \sim q_0\) and
    \begin{enumerate}
        \item \((X_t)_{t \in [0, T]}\) satisfies the \emph{Markov property}: for any \(0\leq u < v \leq T\), \(X_v\) is conditionally independent of \((X_t)_{t < u}\) given \(X_u\);
        \item As \(\Delta t \downarrow 0\), for any \(x, y \in \cV^d\), \(
            \Pr(X_{t + \Delta t} = y \mid X_t = x) = \bI\{y = x\} + Q_t(x, y)\Delta t + o(\Delta t).\)
    \end{enumerate}
    Here, for any \(t \in [0, T]\), the rate matrix \(Q_t \in \bR^{\cV^d \times \cV^d}\) satisfies:
    \begin{enumerate}
        \item \(Q_t(x, y) \geq 0\) for any \(x \neq y \in \cV^d\),
        \item \(Q_t(x, x) = -\sum_{y \neq x} Q_t(x, y)\).
    \end{enumerate}
\end{definition}
\noindent
For any fixed initial distribution $q_0$, the marginals \((q_t)_{t \in [0, T]}\) of \(X_t\) are the solutions to the Kolmogorov forward equation:
\begin{align*}
    \text{for all } x\in \cV^d, \qquad \frac{\d}{\d t} \Pr(X_t = x) = \sum_{y \in \cV^d} Q_t(y, x) \Pr(X_t = y).
\end{align*}
We refer the reader to \cite{feller1940integro,feinberg2014solutions} for a rigorous treatment of CTMCs.

\paragraph{Forward process.}
We define a forward process as a CTMC \((X_t)_{t \in [0, T]}\) corresponding to the particular choice of rate matrices \((Q_t)_{t \in [0, T]}\) and we assume that there exist \((\qtok_t)_{t \in [0, T]}\), with each \(\qtok_t \in \bR^{\cV \times \cV}\), such that
\begin{enumerate}
    \item \(Q_t(x, y) = 0\) if \(\ham(x, y) \geq 2\),
    \item \(Q_t(x, y) = \qtok_t(x^i, y^i)\), if \(\ham(x, y) = 1\) and \(x^i \neq y^i\).
\end{enumerate}
This requirement states that the forward process acts on each coordinate independently.
In this work, we focus on the following time-homogeneous (\(\qtok_t \equiv \qtok\)) forward processes:

\begin{enumerate}
    \item \emph{Uniform process}: when \(\cV = [S]\) and \(\qtok(a, b) = 1/S\) for all \(a, b \in [S]\), 
    \item \emph{Remasking process}: when \(\cV = [S] \cup \{\mask, \mathrm{REMASK}\}\), and for \(a \in [S]\), \(b \in \cV\), and \(0 < p_M \leq 1\), we have
    \begin{align*}
    \qtok(a, b) = p_M \bI\{b = \mask\} + (1-p_M) \bI\{b = \mathrm{REMASK}\} \quad \text{and} \quad \qtok(\mathrm{REMASK}, a) = 1 / S.
    \end{align*}
    Here, \(\mask\) is an absorbing state and the masking process corresponds to \(p_M = 1\).
\end{enumerate}
For the remasking process, choosing \(p_M < 1\) allows the reverse process to correct unmasked elements, as the coordinates in the forward process will transition through the \(\mathrm{REMASK}\) state one or several times with positive probability. As discussed, this is crucial to mitigate a well-known disadvantage of the masking process, where once a coordinate is unmasked, it cannot be changed later during the sampling process; see~\cite{wang2026remasking,zhao2026informed}.

\paragraph{Self-loops.}
A convenient way to model how the uniform process acts on a single coordinate is as follows: 
the process makes a transition from state \(a \in [S]\) with rate \(1\) to a new state \(b \sim \mathrm{Unif}([S])\), with \(a \indep b\). 
For technical convenience, we define diagonal elements in the uniform case to be \(\qtok(a, a) = 1/S\) for \(a \in [S]\), which makes \(\qtok\) the transition matrix of a discrete-time Markov chain, rather than the rate matrix of a CTMC. As the only difference is in the diagonal elements, we explicitly write \(-\sum_{b \neq a} \qtok(a, b)\) whenever the diagonal entry of the rate matrix is used in the analysis. 
Similarly, for the remasking process, we set \(\qtok(\mask, \mask) = 1\).

\paragraph{Reverse process.}
For the forward process \((X_t)_{t\in [0,T]}\) with its marginal distribution 
\((q_t)_{t\in [0,T]}\), there exists a time-reversed CTMC \((\back X_t)_{t \in [0, T]}\) with an initial distribution \(q_T\) and rate matrices \((\back Q_t)_{t \in [0, T]}:\)
\begin{align*}
    \back Q_t(x, y) = Q_{T - t}(y, x) \frac{\Pr(X_{T - t} = y)}{\Pr(X_{T-t} = x)}, \quad \text{for all } x \neq y \in \cV^d,
\end{align*} 
such that its marginals coincide with the forward process: \(X_t \overset{d}{=} \back X_{T - t}\) for all \(t \in [0, T]\). 
We refer to this CTMC as \emph{the reverse process} (\cite{campbell2022continuous}).
Analogous to the continuous case, where the Stein score function \(\nabla \log p_t(x)\) determines the reverse process,  we define the \emph{(concrete) score function} as the ratio of the probability mass functions 
\begin{align}
    \label{eq:score}
    s_t(y, x) \defn \frac{\Pr(X_t = y)}{\Pr(X_t = x)}. 
\end{align}
As by our construction, \(Q_t(y, x) > 0\) only if \(\ham(x, y) = 1\), we often denote \(y = x \odot_i b\) for some \(i \in [d]\) and \(b \in \cV\).
To construct the reverse process, it is therefore sufficient to know the score functions~\eqref{eq:score}, for every $t \in [0,T]$, \(x \in \cV^d\) and $y = x \odot_i b.$

\subsection{Learning the reverse process}

Computing rate matrices of the reverse process \((\back Q_t)_{t \in [0, T]}\) requires access to score functions at every $t \in [0,T]$, which is not tractable in general. Instead, they are replaced in practice by a data-driven estimator \(\wh s_t(y, x)\) at discrete time points \(0  < T- t_{N - 1} < \ldots < T - t_0 = T\) such that \(\wh s_t(y, x)\approx s_t(y, x) = \Pr(X_t = y) / \Pr(X_t = x)\). The estimated rate matrices are defined as \(\wh Q_t(x, y) = Q_{T - t}(y, x) \wh s_{T - t}(y, x)\).

To measure the accuracy of the estimated score, we use the \emph{score entropy loss} introduced by \cite{lou2023discrete}, which has become a standard objective for training discrete diffusion models. This loss quantifies the discrepancy between the estimated score \(\wh s_t(y, x)\) and the true score $s_t(y, x)$ associated with the forward process:
\begin{align}
\label{eq:se-loss}
    \cL_{\mathrm{SE}}(t, \wh s, s) \defn \bE_{x_t \sim q_t} \left[ \sum_{y \neq x_t} Q_t(y, x_t) s(y, x_t) \breg(\wh s(y, x_t), s(y, x_t))\right].
\end{align}
Here \(t \geq 0\) and \(\wh s, s \in \cV^d \times \cV^d \to \bR_{\geq 0}\).
To disentangle the effect of score estimation from the discretization or optimization errors that govern the efficiency of the sampling algorithms being considered, we isolate the score estimation error and make the following assumption.
\begin{assumption}
\label{asm:se-loss}
    Let \(0 = t_0 < t_1 < \ldots < t_N \leq T\) be the time discretization. We assume that
    \begin{align}
        \sum_{k=0}^{N - 1} (t_{k+1} - t_k) \cL_{\mathrm{SE}}(T - t_k, \wh s_{T - t_k}, s_{T - t_k} )\leq \escore.
    \end{align}
\end{assumption}
This assumption on the score entropy loss is a standard way to control the approximation error of discrete diffusion models and has appeared in several prior works, including~\cite{lou2023discrete,conforti2025non,liang2025discrete,dmitriev26efficient}.

\paragraph{A leave-one-out formulation.} 
We state a leave-one-out denoiser formulation that plays an important role in our analysis and has appeared previously in the context of both remasking~\citep{zhao2026informed} and uniform diffusion~\citep{gourevitch2026uniform}. The key idea is that the estimator must approximate the leave-one-out conditional probabilities \(\Pr(X_0^i = b \mid X_t^{-i} = x^{-i})\) instead of \(\Pr(X_0^i = b \mid X_t = x)\). This can be achieved either by imposing the leave-one-out structure directly through the design of the estimator or by modifying the cross-entropy objective used to train the denoiser. We refer to~\cite{gourevitch2026uniform} for a more detailed discussion.
In our setting, the leave-one-out formulation 
leads to a cleaner characterization of the reverse dynamics and, more importantly, facilitates the control of the resulting sampling errors. This role is different from its use in prior work, where the emphasis is more closely tied to the practical construction and training of the denoiser.

Formally, recall the definition of the score function \(s_t(x \odot_i b, x) = \Pr(X_t = x \odot_i b)\ /\,\Pr(X_t = x)\). Dividing numerator and denominator by \(\Pr(X_t^{-i} = x^{-i})\), we obtain
\begin{align}
\label{eq:score-func}
    s_t(x \odot_i b, x) = \frac{\Pr(X_t = x \odot_i b)}{\Pr(X_t = x)} = \frac{\Pr(X_t^i = b \mid X_t^{-i} = x^{-i})}{\Pr(X_t^i = x^i \mid X_t^{-i} = x^{-i})}.
\end{align}

\noindent
There exists a simple bijection between the sets  \(\left\{ s_t(x\odot_i b, x)\right\}_{b\in\cV}\) and \(\left\{ \Pr(X_t^i = b \mid X_t^{-i} = x^{-i})\right\}_{b\in\cV}\), as
\begin{align*}
    \sum_{b \neq x^i} s_t(x \odot_i b, x) = \frac{\sum_{b \neq x^i} \Pr(X_t^i = b \mid X_t^{-i} = x^{-i})}{\Pr(X_t^i = x^i \mid X_t^{-i} = x^{-i})}=\frac{1}{\Pr(X_t^i = x^i \mid X_t^{-i} = x^{-i})} - 1.
\end{align*}
Indeed, we can use this equality together with the convention \(s_t(x, x) = 1\) to compute for all \(b \in \cV\),
\begin{align*}
\Pr(X_t^i = b \mid X_t^{-i} = x^{-i}) = \frac{s_t(x \odot_i b, x)}{\sum_{c} s_t(x \odot_i c, x) }.
\end{align*}
Therefore, computing the score is equivalent to computing the leave-one-out denoiser probabilities. In the following, in light of this bijection, we refer to both quantities interchangeably.
For both uniform and remasking processes, we find it instrumental to introduce the notation 
\begin{equation}
\label{eq:def-nu}
\nu(t, b) \defn \Pr(X_t^i = b \mid \exists \text{ a jump at the } i\text{-th coordinate on }[0, t]).
\end{equation}
For both uniform and remasking processes, $\Pr(X_t^i = b \mid X_t^{-i} = x^{-i})$ can be expressed in terms of $\nu(t, b)$ and $\Pr(X_0^i = b \mid X_t^{-i} = x^{-i})$:
\begin{equation}
\label{eq:pr-t-from-pr-0}
    \Pr(X_t^i = b \mid X_t^{-i} = x^{-i}) =  \Pr(X_0^i = b \mid X_t^{-i} = x^{-i})e^{-t} +  \nu(t, b)(1 - e^{-t}).
\end{equation}
Indeed, in both cases, for any fixed \(i \in [d]\) and time \(t \geq 0\), either there was a jump at the \(i\)-th coordinate on \([0, t]\), and thus \(X_t^i \indep X_t^{-i}\), or there was no jump and \(X_t^i = X_0^i\), which happens with probability \( e^{-t}\).
\begin{itemize}
\item 
For the uniform process, \(\nu(t, b) = \nicefrac{1}{S}\) and we obtain
\begin{align}
\label{eq:pt-unif}
\Pr(X_t^i = b \mid X_t^{-i} = x^{-i}) =\Pr(X_0^i = b \mid X_t^{-i} = x^{-i}) e^{-t}  + \frac{1}{S}(1 - e^{-t}).
\end{align}

\item For the remasking process,~\Cref{lem:nu-remask} gives the explicit expression for \(\nu(t, b)\). 
\end{itemize}
Observe that the only unknown data-dependent quantities are \(\{\Pr(X_0^i = b \mid X_t^{-i} = x^{-i})\}_{b \in [S]}\).
As a consequence, assuming access to score estimators \(\{\wh s_t(x \odot_i b, x)\}\) is equivalent to assuming access to \(\left\{\wh \Pr(X_0^i = b \mid X_t^{-i} = x^{-i})\right\}_{b\in[S]}\) at the discretization points \(t \in \{T - t_0, \ldots T - t_{N-1}\}\). 

\section{Discrete diffusion sampling with leave-one-out sampler}
\label{sec:sampling}

With the score estimator \(\wh s_t\) that satisfies~\Cref{asm:se-loss},
the remaining task is to construct a tractable approximation to the reverse-time process. Since \(\wh s_t\) is only evaluated at the time discretization \(0 < T - t_{N - 1} < \ldots < T - t_0 = T\), we extend these estimates over $[0,T]$ and use the resulting approximation to simulate the reverse dynamics.
The standard choice, both in theoretical analyses and in practice, is the \emph{\(\tau\)-leaping sampler}~(\cite{campbell2022continuous}) and its variants~(\cite{liang2025discrete}). 
However, recent work by~\cite{dmitriev26efficient} shows that, for the uniform process, the \(\tau\)-leaping sampler can lead to suboptimal sampling complexity. In this section, we study the following leave-one-out sampler which leads to improved sampling efficiency and overcomes the theoretical barrier of $\tau$-leaping. 

\subsection{Leave-one-out sampler}
Consider the \(k\)-th discretization interval and let \(u = T - t_{k+1}\) and \(\ell = T - t_k\).
As discussed above, we assume that the sampling algorithm has access to \(\{\wh \Pr(X_0^i = b \mid X_{\ell}^{-i} = x_{\ell}^{-i})\}\) for \(b \in [S]\) and \(i\in[d]\),
and our goal is to sample \(x_{u}\) given \(x_{\ell}\) and \(\{\wh \Pr(X_0^i = b \mid X_{\ell}^{-i} = x_{\ell}^{-i})\}_{b\in[S], i\in[d]}\).
Following~\Cref{eq:pr-t-from-pr-0}, we define \( \wh \Pr(X_t^i = b \mid X_{\ell}^{-i} = x_{\ell}^{-i})\) as follows:
\begin{align}
\label{eq:whpt-remask}
    \wh \Pr(X_t^i = b \mid X_{\ell}^{-i} = x_{\ell}^{-i}) = \wh \Pr(X_0^i = b \mid X_{\ell}^{-i} = x_{\ell}^{-i}) e^{-t} + \nu(t, b)(1 - e^{-t}).
\end{align}
Together with~\Cref{eq:score}, we define our score estimator~\(\wh s_t\) for \(t \in [u, \ell]\):
\begin{align}
\label{eq:our-shat}
    \wh s_t(x \odot_i b, x) \defn \frac{ \wh \Pr(X_t^i = b \mid X_{\ell}^{-i} = x_{\ell}^{-i})}{\wh \Pr(X_t^i = x^i \mid X_{\ell}^{-i} = x_{\ell}^{-i})} = \frac{\wh \Pr(X_0^i = b \mid X_{\ell}^{-i} = x_{\ell}^{-i}) e^{-t} + \nu(t, b)(1 - e^{-t})}{\wh \Pr(X_0^i = x^i \mid X_{\ell}^{-i} = x_{\ell}^{-i}) e^{-t} + \nu(t, x^i)(1 - e^{-t})},
\end{align}
and the corresponding rate matrix \(\wh Q_t(x, y) = Q_{T - t}(y, x)\wh s_{T - t}(y, x)\). Note that both \(\wh s_t(x \odot_i b, x)\) and \(\wh Q_t(x, x \odot_i b)\) do not depend on \(x^{-i}\). This means that effectively on the discretization interval \([u, \ell]\), the approximate CTMC can be decomposed into \(d\) independent one-dimensional CTMCs, allowing us to simulate the dynamics of the CTMC in parallel for each fixed discretization interval, as is done in~\Cref{alg:sampler}, see~\Cref{prop:sampler}.

In view of the Bayes formula, we can write
\begin{align*}
    \Pr(X_{u}^i = b \mid X_{\ell}=  x_{\ell}) &= \frac{\Pr(X_{\ell}^i = x_{\ell}^i \mid X_u^i = b) \Pr(X_{u}^i =b \mid X_{\ell}^{-i} = x_{\ell}^{-i})}{\Pr(X_{\ell}^i = x_{\ell}^i \mid X_{\ell}^{-i} = x_{\ell}^{-i})} \\
    &\propto\ \Pr(X_{\ell}^i = x_{\ell}^i \mid X_u^i = b)\Pr(X_u^i = b \mid X_{\ell}^{-i} = x_{\ell}^{-i}).
\end{align*}
Here, \(\Pr(X_{\ell}^i = x_{\ell}^i \mid X_u^i = b)\) does not depend on \(q_{\mathrm{data}}\) and can be computed from the properties of \(\qtok\), and 
\begin{align*}\Pr(X_{u}^i =b \mid X_{\ell}^{-i} = x_{\ell}^{-i}) = \underbrace{\Pr(X_0^i = b \mid X_{\ell}^{-i} = x_{\ell}^{-i}) e^{-u}}_{\text{no jump on }[0, u]} + \underbrace{\nu(u, b)(1 - e^{-u})}_{\text{jump on }[0, u]},
\end{align*}
where \(\nu(u, b)\) is defined in~\Cref{eq:def-nu}.
Therefore, the sampling algorithm applies the following in parallel over \(i \in [d]\):
Given $x_{\ell}$, sample $x_u^i$ from  $\wh \mu_i$,
\begin{align}
\label{eq:sampling-alg}
     \wh \mu_i(b)\ \propto\ \Pr(X_{\ell}^i = x_{\ell}^i \mid X_u^i = b)\left(\wh \Pr(X_0^i = b \mid X_{\ell}^{-i} = x_{\ell}^{-i}) e^{-u} + \nu(u, b)(1 - e^{-u})\right).
\end{align}
Finally, we remark that as \(\Pr(X_\ell^i = a \mid X_u^i = b)\) depends only on the matrix \(\qtok\), it can be explicitly computed. 
Thus, we set 
\begin{align}
\label{eq:whpr-eq-pr}
   \wh \Pr(X_\ell^i = a \mid X_u^i = b) = \Pr(X_\ell^i = a \mid X_u^i = b),
\end{align}
for all \(0 \leq u \leq \ell\) and \(a, b \in \cV\).

Putting things together and instantiating the above to the processes that we consider gives:
\begin{itemize}
\item For the uniform process,
\begin{align*}
    \wh \mu_i(b) = \frac{ \left(\bI\{b = x_{\ell}^i\} e^{-(\ell - u)} + \frac{1 - e^{-(\ell - u)}}{S}\right)\left(\wh \Pr(X_0^i = b \mid X_{\ell}^{-i} = x_{\ell}^{-i}) e^{-u} +  \frac{1 - e^{-u}}{S}\right)}{\wh \Pr(X_0^i = x_{\ell}^i \mid X_{\ell}^{-i} = x_{\ell}^{-i}) e^{-\ell} + 
\frac{1 - e^{-\ell}}{S}}.
\end{align*}
    \item 
For the masking process, for \(i \in [d]\) with \(x_{\ell}^i = \mask\),
\begin{align*}
    \wh \mu_i(b) = \frac{e^{-u} - e^{-\ell}}{1 - e^{-\ell}} \wh \Pr(X_0^i = b \mid X_{\ell}^{-i} = x_{\ell}^{-i}) \quad \text{for } b\in[S], \qquad \text{and} \qquad \wh \mu_i(\mask) = \frac{1 - e^{-u}}{1 - e^{-{\ell}}},
\end{align*}
recovering Algorithm 1 from~\cite{dmitriev26efficient}.
\item For the general remasking process,~\Cref{lem:sampling-alg-exact-remask} gives explicit expressions.
\end{itemize}

The next proposition provides a simple algorithm to simulate this CTMC for both uniform and remasking processes.
The proof of this result is deferred to Section~\ref{sec:pf-prof-sampler}.

\begin{algorithm}[t]
\SetAlgoLined
\DontPrintSemicolon 
\caption{Our sampling algorithm}
\label{alg:sampler}

\SetKwInput{KwInput}{Input}
\SetKwInput{KwOutput}{Output}

\KwInput{\\
Initial distribution: \(p_0\), \\
Discretization steps: \(0 = t_0 < t_1 < \ldots < t_N \leq T\), \\
Leave-one-out denoiser: \(\wh \Pr(X_0^i = b \mid X_{T - t}^{-i} = \cdot\ )\) for \(t \in \{t_0, \ldots, t_{N - 1}\}\), \(b \in [S]\), and \(i \in [d]\).}
\KwOutput{Sample \(\wh x \in \cV^d\).}

Sample \(x_T\) from \(p_0\)

\For{\(k = 0, \ldots, N - 1\)}{
    \(u \gets T - t_{k+1}\)\;
    \(\ell \gets T - t_k\)\;
    \For{\(i \in [d]\) \emph{in parallel} }{
    Let \(\wh \mu_i\) be a probability distribution over \(\cV\) with \(\wh \mu_i(b)\ \propto\ \Pr(X_{\ell}^i = x_{\ell}^i \mid X_u^i = b)\left(\wh \Pr(X_0^i = b \mid X_{\ell}^{-i} = x_{\ell}^{-i})e^{-u} + \nu(u, b) (1 - e^{-u})\right)
    \)
        
        Sample \(x_u^i\) from \(\wh \mu_i\)\;

    }
}
\Return{$x_{T - t_N}$}
\end{algorithm}

\begin{proposition}
\label{prop:sampler}
    Fix \(k \in \{0, \ldots, N - 1\}\) and let \(u = T - t_{k+1}, \ell = T - t_{k}\). 
    Let \(x_{\ell}\) and \(x_{u}\) be as in~\Cref{alg:sampler}. Let \(y_{u}\) be the distribution of a CTMC initialized at \(x_{\ell}\) with the rate matrices \(\wh Q_t\) defined in~\Cref{eq:our-shat}. Then, 
    $$x_{u} \overset{d}{=} y_{u}.$$ Consequently,~\Cref{alg:sampler} simulates the full dynamics of the CTMC with initial distribution \(p_0\) and rate matrices \((\wh Q_t)_{t \in [0, t_N]}\).
\end{proposition}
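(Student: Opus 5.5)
Since the rates \(\wh Q_t(x, x\odot_i b) = Q_{T-t}(x\odot_i b, x)\,\wh s_{T-t}(x\odot_i b,x) = \qtok(b,x^i)\,\wh s_{T-t}(x\odot_i b, x)\) on the interval depend only on \((x^i,b)\) and on the frozen quantities computed from \(x_\ell\), and transitions changing two or more coordinates have rate zero, the generator on \([u,\ell]\) is a sum \(\sum_i I\otimes\cdots\otimes A^i_t\otimes\cdots\otimes I\) of one-coordinate generators. Hence the Kolmogorov equation factorizes: the product of the one-dimensional solutions solves the \(d\)-dimensional forward equation, and uniqueness of solutions for finite state spaces gives that the coordinates \(y_u^1,\ldots,y_u^d\) are independent given \(x_\ell\). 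Each \(y^i\) is a one-dimensional CTMC on \(\cV\) run in reverse time from \(x_\ell^i\) with rates \(A^i_t(a,b)=\qtok(b,a)\,\wh p_{T-t}(b)/\wh p_{T-t}(a)\). Here \(\wh p_t(b)\defn \wh \Pr(X_t^i=b\mid X_\ell^{-i}=x_\ell^{-i}) = \wh \Pr(X_0^i=b\mid X_\ell^{-i}=x_\ell^{-i})e^{-t}+\nu(t,b)(1-e^{-t})\) as in~\Cref{eq:whpt-remask}. It therefore suffices to identify the law of a single coordinate with \(\wh\mu_i\).

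The main step is to recognize \(A^i\) as an exact time reversal. Consider an auxiliary one-dimensional forward chain \(Z_t\) on \(\cV\) with generator \(\qtok\), started at \(Z_0\sim\wh \Pr(X_0^i=\cdot\mid X_\ell^{-i}=x_\ell^{-i})\), a distribution on \([S]\). I will check that \(\Pr(Z_t=b)=\wh p_t(b)\) for all \(t\). This holds because the single-coordinate dynamics, for both the uniform and remasking processes, has the structure used to derive~\Cref{eq:pr-t-from-pr-0}. With probability \(e^{-t}\) there is no jump and \(Z_t=Z_0\). Otherwise \(Z_t\) has law \(\nu(t,\cdot)\), independent of \(Z_0\), which requires all states of \([S]\) to have total jump rate \(1\) and the post-jump law not to depend on the starting state. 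For remasking I will verify this from the definition of \(\qtok\) and~\Cref{lem:nu-remask}. The statement \(\Pr(Z_t=b)=\wh p_t(b)\) is then exactly~\Cref{eq:pr-t-from-pr-0} with the true initial law replaced by an arbitrary one, so the identity is linear in the initial law. With this marginal identification, \(A^i_t\) is precisely the time-reversal formula from~\Cref{sec:ctmc} applied to \(Z\) over horizon \(T\). By the reversal theorem, the reversed chain started at \(\back Z_{T-\ell}\sim\) law of \(Z_\ell\) reproduces the joint laws of \(Z\).

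Because the reversed process is Markov, its transition kernel from time \(\ell\) to time \(u\) (in forward-time labels) is the conditional law \(\Pr(Z_u=b\mid Z_\ell=a)\). This kernel applies to every starting point \(a\) with \(\Pr(Z_\ell=a)>0\), so in particular to the deterministic start \(x_\ell^i\). Bayes' rule and the Markov property of \(Z\) give \(\Pr(Z_u=b\mid Z_\ell=x_\ell^i)\propto \Pr(Z_\ell=x_\ell^i\mid Z_u=b)\,\wh p_u(b)\). Since \(\Pr(Z_\ell=a\mid Z_u=b)=\Pr(X_\ell^i=a\mid X_u^i=b)\) depends only on \(\qtok\), this equals \(\wh\mu_i(b)\) in~\Cref{eq:sampling-alg}. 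Independence across coordinates then gives \(x_u\overset{d}{=}y_u\).

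I expect the main obstacle to be technical: states with \(\wh p_t(a)=0\). An example is \(\mask\) before any time has elapsed, or a token outside the support of the estimated denoiser. At such states the rates blow up or the reversal formula degenerates. I plan to argue that \(\wh p_\ell(x_\ell^i)>0\) whenever \(x_\ell^i\) is reachable under the algorithm, and that the reversed chain almost surely never enters a zero-mass state. Failing that, I will treat these states by a limiting or continuity argument. The final claim, that the algorithm simulates the whole CTMC, follows by induction over \(k\) using the Markov property of the piecewise-defined CTMC: on each interval the rates depend only on the current state and on the state at the left endpoint, and that state is exactly what~\Cref{alg:sampler} conditions on.
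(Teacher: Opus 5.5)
Your proposal is correct and follows essentially the same route as the paper. Both arguments factorize the interval dynamics into independent one-coordinate chains, and both recognize each coordinate's rates as the exact time reversal of an auxiliary forward chain $Z$ started from the estimated leave-one-out denoiser, whose marginals are $\wh p_t$ by the jump/no-jump decomposition. Both then read off $\wh\mu_i$ from Bayes' rule for $\Pr(Z_u=b\mid Z_\ell=x_\ell^i)$ and conclude by induction over intervals.

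You are more explicit than the paper on three points: the Kronecker-sum factorization, why the reversed kernel also applies from a deterministic start, and zero-mass states. The paper does not discuss zero-mass states. For the uniform process and for remasking with $p_M<1$ they cannot occur on $[u,\ell]$ with $u>0$, since there every state has $\wh p_t>0$. They only arise in the pure masking case $p_M=1$.
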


For the uniform process, our sampler coincides with prior work, e.g., the leave-one-out bridge plug-in sampler of \cite{gourevitch2026uniform} after identifying their noise schedule with \(\alpha_t=e^{-t}\). The two approaches, however, arise from different perspectives. \cite{gourevitch2026uniform} derive the leave-one-out predictor as the optimal target for the bridge plug-in parameterization and study its implications for training and inference. In contrast, we derive the same sampling transition directly from the CTMC formulation: on each discretization interval, we condition the leave-one-out denoiser on the state available at the beginning of the interval and construct an approximate reverse-time CTMC, which decomposes into independent one-dimensional processes that can be simulated in parallel. This viewpoint provides a natural Bayes-optimal intermediate process for separating approximation and discretization errors and, moreover, extends within a unified framework to both the uniform and remasking processes.

\subsection{Bayes-optimal sampler}
To separate the error due to score estimation from the error due to time discretization, we introduce an oracle counterpart of our sampler. On each discretization interval \([u,\ell]\), this oracle has access to the exact leave-one-out conditional probabilities at the beginning of the interval, but, like our practical sampler, it does not observe the evolving context $X_t^{-i}$ for $t\in[u,\ell]$. Thus, it provides a natural intermediate process between the true reverse process and our approximation. More precisely, we define the following. 
\begin{definition}
\label{def:wt-score}
Fix \(k \in \{0, \ldots, N - 1\}\) and let \(u = T - t_{k+1},\ \ell = T - t_k\). For \(t \in [u, \ell]\), define
    \begin{align}
        \wt s_t(x \odot_i b, x) = 
        \frac{\Pr(X_t^i = b \mid X_{\ell}^{-i} = x_{\ell}^{-i})}{\Pr(X_t^i = x^i \mid X_{\ell}^{-i} = x_{\ell}^{-i})},
    \end{align}
    and the corresponding rate matrix by: \(\wt Q_t(x, y) = Q_{T - t}(y, x) \wt s_{T-t}(y, x)\). 
\end{definition}
Equivalently, $\widetilde{s}_t$ is obtained from our score estimator $\widehat{s}_t$ by replacing the estimated leave-one-out probabilities at the discretization point with their population counterparts.

The following lemma justifies that this term is indeed Bayes optimal: among estimators restricted to the information available at the beginning of the discretization interval, $\wt {Q}_t$ is the conditional expectation of the true reverse rate.

\begin{lemma}
\label{prop:cond-exp}
Fix \(k \in \{0, \ldots, N - 1\}\). Let \(t \leq \ell \defn T - t_k\).
For any \(x_{\ell}^{-i} \in \cV^{d - 1}\) and \(x_t^i, b \in \cV\), it holds that 
    \begin{align*} 
    \wt Q_{T -t}(x \odot_i x_t^i, x \odot_i b) =
        \bE_{x_t^{-i} \sim q_{t}^{-i}} \Bigl[\back Q_{T - t}(x_t, x_t \odot_i b)\ \Bigr\rvert\ X_{\ell}^{-i} = x_{\ell}^{-i}, X_t^i = x_t^i\Bigr].
    \end{align*}
\end{lemma}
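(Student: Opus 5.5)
The plan is to expand both sides and reduce the identity to the Markov property of the forward process together with its coordinatewise independence. For $b \neq x_t^i$ the true reverse rate is
\[
\back Q_{T-t}(x_t, x_t \odot_i b) = Q_t(x_t\odot_i b, x_t)\,\frac{\Pr(X_t = x_t\odot_i b)}{\Pr(X_t = x_t)} = \qtok(b, x_t^i)\,\frac{\Pr(X_t^i = b \mid X_t^{-i} = x_t^{-i})}{\Pr(X_t^i = x_t^i \mid X_t^{-i} = x_t^{-i})},
\]
where the second equality uses \eqref{eq:score-func}. The prefactor $\qtok(b,x_t^i)$ depends only on the $i$-th coordinate. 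Hence it is the same prefactor that appears in $\wt Q_{T-t}(x\odot_i x_t^i, x\odot_i b) = \qtok(b, x_t^i)\,\wt s_t(x\odot_i b, x\odot_i x_t^i)$, and it comes out of the conditional expectation. The case $b = x_t^i$ (diagonal entries) will follow at the end by summing over $b$, since both rate matrices have rows summing to zero.

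It therefore suffices to show
\[
\bE\Bigl[\tfrac{\Pr(X_t^i = b \mid X_t^{-i})}{\Pr(X_t^i = x_t^i \mid X_t^{-i})}\ \Bigm|\ X_\ell^{-i}=x_\ell^{-i},\, X_t^i = x_t^i\Bigr] = \frac{\Pr(X_t^i = b\mid X_\ell^{-i}=x_\ell^{-i})}{\Pr(X_t^i = x_t^i\mid X_\ell^{-i}=x_\ell^{-i})}.
\]
I will write the conditional law of $X_t^{-i}$ given $(X_\ell^{-i}, X_t^i)$ as
\[
\Pr(X_t^{-i}=y \mid X_\ell^{-i}=x_\ell^{-i}, X_t^i=x_t^i) = \frac{\Pr(X_t^i = x_t^i, X_t^{-i} = y, X_\ell^{-i}=x_\ell^{-i})}{\Pr(X_t^i=x_t^i, X_\ell^{-i}=x_\ell^{-i})}.
\]
The key structural fact is that, since the forward process acts independently on each coordinate and $t\le \ell$, $X_\ell^{-i}$ depends on $(X_t^i, X_t^{-i})$ only through $X_t^{-i}$. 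In other words,
\[
\Pr(X_\ell^{-i}=x_\ell^{-i}\mid X_t = (x_t^i, y)) = \Pr(X_\ell^{-i}=x_\ell^{-i}\mid X_t^{-i}=y).
\]
I expect this to be the main point requiring care: one must check that the forward transition kernel from $t$ to $\ell$ factorizes over coordinates. That follows from $Q_t$ acting coordinatewise, so the generator is a sum of single-coordinate generators and the semigroup is a tensor product.

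Using this fact, the joint probability factorizes as
\[
\Pr(X_t^{-i}=y)\,\Pr(X_t^i = x_t^i\mid X_t^{-i}=y)\,\Pr(X_\ell^{-i}=x_\ell^{-i}\mid X_t^{-i}=y).
\]
Multiplying by the ratio inside the expectation cancels $\Pr(X_t^i=x_t^i\mid X_t^{-i}=y)$ and replaces it with $\Pr(X_t^i=b\mid X_t^{-i}=y)$. Summing over $y$ then gives
\[
\sum_y \Pr(X_t^i = b, X_t^{-i}=y)\Pr(X_\ell^{-i}=x_\ell^{-i}\mid X_t^{-i}=y) = \Pr(X_t^i=b, X_\ell^{-i}=x_\ell^{-i}),
\]
where the last step uses the same conditional independence $X_t^i \indep X_\ell^{-i}\mid X_t^{-i}$ in reverse. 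Dividing by the normalizer $\Pr(X_t^i=x_t^i, X_\ell^{-i}=x_\ell^{-i})$ yields exactly the ratio of conditional probabilities in \Cref{def:wt-score}, which completes the off-diagonal case. For the diagonal entry, I take the negative sum over $b\neq x_t^i$ and use linearity of conditional expectation.
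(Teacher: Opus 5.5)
Your off-diagonal argument is correct and is essentially the paper's proof: you factor out $\qtok(b, x_t^i)$, use the coordinatewise factorization of the forward kernel ($X_\ell^{-i} \indep X_t^i \mid X_t^{-i}$) to rewrite the conditional law of $X_t^{-i}$, cancel $\Pr(X_t^i = x_t^i \mid X_t^{-i})$ against the score ratio, and resum to get $\Pr(X_t^i = b, X_\ell^{-i} = x_\ell^{-i})$ over the normalizer $\Pr(X_t^i = x_t^i, X_\ell^{-i} = x_\ell^{-i})$.

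Drop the diagonal remark, because the row-sum argument does not prove a diagonal identity. A negative sum over $b \neq x_t^i$ only recovers the coordinate-$i$ part of a diagonal entry of a rate matrix on $\cV^d$. The true diagonal also collects the exit rates of all other coordinates, and those are not determined by conditioning on $X_\ell^{-i}$ and $X_t^i$. The case $b = x_t^i$ is instead trivial under the paper's convention $s_t(x,x) = \wt s_t(x,x) = 1$. In any case, only off-diagonal entries are used later in the paper.
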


\begin{table}[t]
\centering
\renewcommand{\arraystretch}{1.5}
\begin{tabular}{lccc}
\toprule
& \textbf{True reverse process} & \textbf{Bayes-optimal sampler} & \textbf{Our approximation} \\
\midrule
\textbf{Score function} & \(\displaystyle\frac{\Pr(X_t^i = b \mid X_t^{-i} = x_t^{-i})}{\Pr(X_t^i = x_t^i \mid X_t^{-i} = x_t^{-i})}\) & \(\displaystyle\frac{\Pr(X_t^i = b \mid X_{\ell}^{-i} = x_{\ell}^{-i})}{\Pr(X_t^i = x_t^i \mid X_{\ell}^{-i} = x_{\ell}^{-i})}\) & \(\displaystyle\frac{\wh \Pr(X_t^i = b \mid X_{\ell}^{-i} = x_{\ell}^{-i})}{\wh \Pr(X_t^i = x_t^i \mid X_{\ell}^{-i} = x_{\ell}^{-i})}\) \\[1em]
\midrule
\textbf{LOO denoiser} & \(\Pr(X_0^i = b \mid X_t^{-i} = x_t^{-i})\) & \( \Pr(X_0^i = b \mid X_{\ell}^{-i} = x_{\ell}^{-i})\) &  \(\wh \Pr(X_0^i = b \mid X_{\ell}^{-i} = x_{\ell}^{-i}) \) \\
\bottomrule
\end{tabular}
\caption{Here, \(\ell = T - t_{k}\) is the end point of the \(k\)-th discretization interval and \(T - t_{k+1} \leq t \leq \ell\). The score function row compares the true value \(s_t(x_t \odot_i b, x_t)\), the Bayes-optimal \(\wt s_t(x_t \odot_i b, x_t)\), and ours \(\wh s_t(x_t \odot_i b, x_t)\). The reverse process conditions on the correct context at time \(t\). The Bayes-optimal estimator conditions on all information available at the start of the discretization step, i.e., at time \(\ell\). Our approximation is based on the Bayes optimal one, but instead of the true probabilities \(\Pr(\cdot)\) it uses the estimate \(\wh \Pr(\cdot)\), similarly for the leave-one-out  denoiser. To compute the probability of \(X_t^i = b\) used in the score function, all three approaches use the same formula (see~\Cref{eq:pr-t-from-pr-0,eq:whpt-remask}) but with different leave-one-out denoisers.}
\label{tab:methods_comparison}
\end{table}

\noindent
We refer to~\Cref{tab:methods_comparison} for the comparison of the true score function \(s_t\), Bayes-optimal \(\wt s_t\), and the approximation \(\wh s_t\) that we use. 
We emphasize that, for the time \(t \leq \ell = T - t_k\) and coordinate \(i \in [d]\), the true reverse process evaluates the score function using the full current context $X_t^{-i}$, while the Bayes-optimal sampler and our sampler can only access the context \(X_{\ell}^{-i}\) available at the beginning of the interval. Our practical sampler introduces one additional approximation by replacing the exact conditional probabilities with their learned estimates compared to the Bayes-optimal sampler. 

In contrast, the score function used by the standard \(\tau\)-leaping sampler~(\cite{campbell2022continuous}) is given by:
\begin{equation*}
    s_t^{\tau}(x_t \odot_i b, x_t) = \frac{\wh \Pr(X_\ell^i = x_\ell^i  + (b - x_t^i)\mid X_{\ell}^{-i} = x_{\ell}^{-i})}{\wh \Pr(X_\ell^i = x_\ell^i \mid X_{\ell}^{-i} = x_{\ell}^{-i})}.
\end{equation*}
To evaluate the likelihood of transition \(x_t^i \to b\), the \(\tau\)-leaping sampler uses transition \(x_{\ell}^i \to x_{\ell}^i + (b - x_t^i)\) at the beginning of the discretization interval. This construction implicitly relies on an ordinal structure of the state space, an issue already noted in~\cite{campbell2022continuous}, where truncation was proposed for non-ordered vocabularies.
Moreover, \(\tau\)-leaping ignores the time index \(t\), and computes probabilities with respect to the beginning of the discretization interval \(\ell\). Both our sampler and the Bayes-optimal sampler mitigate these drawbacks.
\section{Main results}

We now present our main sampling guarantees for the uniform and remasking diffusion models. 
Our result shows that, under suitable choices of discretization, the sampling complexity is governed by the intrinsic dependence structure of the target distribution, as measured by its dual total correlation. 
All proofs for the results in this section are given in~\Cref{sec:main-proofs}.

\begin{theorem}
\label{thm:main}
Let \(0 = t_0 < t_1 < \ldots < t_N < T\) and suppose that for some \(\kappa \in (0, 1)\),  \(t_{k+1} - t_k \leq \kappa \min(1, T - t_{k+1})\) for all \(k \in \{0, \ldots, N - 1\}\). 
 Consider either of the following two processes:
     \begin{enumerate}
         \item \emph{Uniform}: \((q_t)_{t \in [0, T]}\) are the marginals of the uniform process,
         \item \emph{Remasking}: \((q_t)_{t \in [0, T]}\) are the marginals of the remasking process.
     \end{enumerate}
     Under~\Cref{asm:se-loss},~\Cref{alg:sampler} initialized from \(p_0 = q_{\mathrm{noise}}\) outputs a sample \(x_{T - t_N} \sim  p_{\mathrm{output}}\) such that
    \begin{align}
        \label{eq:thm-kl-decomp}
        \KL(q_{T - t_N} \| p_{\mathrm{output}}) \lesssim \KL(q_T \| q_{\mathrm{noise}})+ \escore + \kappa\dtc(X_0).
    \end{align}
    In particular, for \(T =  O\left(\log (\varepsilon^{-1}d \log S)\right)\), and corresponding choice of \(q_{\mathrm{noise}}\)\footnote{\(q_{\mathrm{noise}} = \mathrm{Unif}\{[S]^d\}\) for the uniform process and \(q_{\mathrm{noise}} = \mu^{\otimes d}\) with \(\mu(b) = \bI\{b \in [S]\} \nicefrac{e^{-T}  }{S} + \nu(T, b) (1 - e^{-T})\) for the remasking process.}, it suffices to take 
    \begin{equation*}
        N = \wt O\left(\frac{\dtc(X_0)}{\varepsilon}\right),
    \end{equation*}
    discretization steps to guarantee \(\KL(q_{T - t_N} \| p_{\mathrm{output}}) \lesssim \escore + \varepsilon\).
\end{theorem}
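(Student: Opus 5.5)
The plan is to compare path measures with Girsanov's formula for CTMCs and then split the resulting KL into three parts. The data-processing inequality gives
\[
\KL(q_{T-t_N}\|p_{\mathrm{output}}) \le \KL(\mathbb{P}^{\leftarrow}\|\wh{\mathbb{P}}),
\]
where \(\mathbb{P}^{\leftarrow}\) is the law of the true reverse path on \([0,t_N]\), started from \(q_T\) with rates \(\back Q_t\). The measure \(\wh{\mathbb{P}}\) is the law of the path produced by \Cref{alg:sampler}; by \Cref{prop:sampler} it is the CTMC started from \(p_0\) with rates \(\wh Q_t\). The chain rule for KL separates off the initialization term \(\KL(q_T\|q_{\mathrm{noise}})\). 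The remaining term is the standard Girsanov integral
\[
\int_0^{t_N}\bE\sum_{y\neq X}\back Q_t(X,y)\,\breg\!\Bigl(\tfrac{\wh Q_t(X,y)}{\back Q_t(X,y)},1\Bigr)\,\d t .
\]
Written in terms of scores, this is \(\sum_{y\ne x}Q(y,x)s\,\breg(\wh s,s)\) integrated over time.

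Next we split \(\breg(\wh s,s)\) through the Bayes-optimal score \(\wt s\) of \Cref{def:wt-score}. The natural identity is
\[
s\,\breg(\wh s,s) = s\,\breg(\wt s,s) + \bigl[\text{a Bregman term in } (\wh s,\wt s)\bigr] + \text{a cross term}.
\]
The cross term should vanish in expectation by \Cref{prop:cond-exp}: \(\wt Q\) is the conditional expectation of \(\back Q\) given \((X_\ell^{-i},X_t^i)\), and \(\log(\wh s/\wt s)\) is measurable with respect to that \(\sigma\)-field. This is the usual Pythagorean property of Bregman divergences under conditional expectation. The \((\wh s,\wt s)\) term depends on \(t\) only through the fixed denoisers at time \(\ell\) and the explicit mixture formula \Cref{eq:whpt-remask}. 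Since \(u\mapsto ae^{-t}+\nu(1-e^{-t})\) is a common convex-combination map, a joint-convexity or monotonicity argument should bound it by a constant times \(\cL_{\mathrm{SE}}(\ell,\wh s_\ell,s_\ell)\), at least when \(\ell-u\le\kappa\min(1,\ell)\). Summing over intervals and applying \Cref{asm:se-loss} then gives \(\lesssim\escore\).

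What remains is the discretization error
\[
\sum_k\int_u^\ell \bE\sum_i\sum_b Q(\cdot)\,s\,\breg(\wt s,s)\,\d t .
\]
Because \(\wt s\) is a conditional expectation, this is an expected "Bregman information." I expect it to equal the conditional mutual information between the jump indicator or value of coordinate \(i\) near time \(t\) and \(X_t^{-i}\), given \(X_\ell^{-i}\); this is presumably \Cref{thm:main-discr}. In that case the time integral over the interval \([u,\ell]\) telescopes into terms of the form
\[
\mi\bigl(X^i_{\cdot};X_u^{-i}\mid X_\ell^{-i}\bigr)\quad\text{or}\quad \mi\bigl(X_0^i;X_u^{-i}\mid X_\ell^{-i}\bigr)\cdot O\bigl((\ell-u)/\min(1,\ell)\bigr).
\]
The main obstacle is to bound the sum of these per-interval, per-coordinate mutual informations by \(\kappa\,\dtc(X_0)\).

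My approach to that obstacle has three steps. First, use the step-size condition to extract a factor \(\kappa\). Second, bound each interval's contribution by \(\kappa\) times an averaged quantity of the form
\[
\sum_i\bigl[\mi(X_0^i;X_{u}^{-i})-\mi(X_0^i;X_{\ell}^{-i})\bigr]\text{-like increments}.
\]
Third, note that these increments telescope over \(k\). The telescoped sum \(\sum_i \mi(X_0^i;X_{\delta}^{-i})\) is at most \(\sum_i \mi(X_0^i;X_0^{-i})\) by data processing, and this sum can be related to DTC via \(\ent(X^i)-\ent(X^i\mid X^{-i})\). Controlling it in DTC rather than total correlation will probably require a chain-rule argument. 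The idea is that the process at time \(t\) reveals coordinates independently, with noise structure \(e^{-t}\) for masking and uniform resampling, as in \Cref{eq:pr-t-from-pr-0}. This makes \(t\mapsto\sum_i \mi(X_0^i;X_t^{-i}\mid\cdot)\) behave like the masking-process quantity of \cite{dmitriev26efficient}, whose integral is bounded by \(\dtc(X_0)\). For the uniform case, the event "no jump" acts as an unmasking indicator, so one can couple with the masking process; the extra uniform noise only reduces information.

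Finally, the complexity statement follows from three choices. Take \(T\asymp\log(\varepsilon^{-1}d\log S)\), so that \(\KL(q_T\|q_{\mathrm{noise}})\le d\log S\cdot e^{-T}\lesssim\varepsilon\). Take \(\kappa=\varepsilon/\dtc(X_0)\). Use a geometric schedule, which needs \(N=O(\kappa^{-1}\log(T/\delta))=\wt O(\dtc(X_0)/\varepsilon)\) steps.
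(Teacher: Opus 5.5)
Your Girsanov step, the split through \(\wt s\), and the vanishing cross term via \Cref{prop:cond-exp} match the paper. The gap is in the part that carries the theorem: bounding the discretization error by \(\kappa\,\dtc(X_0)\).

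The increments you propose to telescope, \(\sum_i[\mi(X_0^i;X_u^{-i})-\mi(X_0^i;X_\ell^{-i})]\), sum to at most \(\sum_i\mi(X_0^i;X_0^{-i})=\sum_i[\ent(X_0^i)-\ent(X_0^i\mid X_0^{-i})]=\dtc(X_0)+\tc(X_0)\) (\Cref{prop:it-results}(i)). Total correlation can be of order \(d\) while \(\dtc=O(1)\). For the uniform law on \(\{0^d,1^d\}\), \(\dtc=\log 2\) but \(\tc=(d-1)\log 2\). So even a rigorous version of your three steps brings back the ambient-dimension dependence the theorem is meant to remove.

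Your proposed repair does not fix this. A chain rule, or a coupling with masking because extra noise ``only reduces information,'' controls information values. The discretization error is instead a difference of time-derivatives of information: \Cref{thm:main-discr} writes it as \(\sum_i\int_u^\ell\int_t^\ell\partial^2_{tv}\mi(X_t^i;X_v^{-i})\,\d v\,\d t\). Monotonicity under added noise does not by itself control such derivative quantities.

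The missing idea is to keep separate times for the coordinate and for the context. By Fubini,
\(\cL^{(k)}_{\mathrm{discr}}=\sum_i\int_u^\ell\bigl[\partial_v\mi(X_t^i;X_v^{-i})|_{t=v}-\partial_v\mi(X_t^i;X_v^{-i})|_{t=u}\bigr]\d v\).
The key identity, \Cref{prop:it-results}(iv), says that the derivative in the \emph{context} time, evaluated on the diagonal \(t=v\), sums over \(i\) to \(\frac{\d}{\d v}\dtc(X_v)\). The derivative in the coordinate's time would give \(\tc\) instead.

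One then needs \(-\partial_v\mi\) at \(t=u\) to be within a factor \(1+O(\kappa)\) of its value at \(t=v\). The paper gets this from the Gr\"onwall-type bound \(\partial^2_{tv}\mi\lesssim-(1+(1-e^{-t})^{-1})\partial_v\mi\) of \Cref{cor:second-der-ub}. That bound relies on three ingredients:
the unstructured property \(\sum_a\qtok(a,b)\Pr(X_t^i=a\mid X_v^{-i}=x^{-i})=\inflow(t,b)\) from \Cref{lem:sum-prob};
\Cref{lem:inflow-exact,lem:ratio-ub};
and the step condition \(\ell-u\le\kappa\min(1,u)\).
Together these give \(\cL^{(k)}_{\mathrm{discr}}\lesssim\kappa(\dtc(X_u)-\dtc(X_\ell))\), which telescopes.

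Since \(\partial^2_{tv}\mi\ge0\), your \(t=0\) increments dominate these diagonal ones, which is exactly why they overshoot. The comparison cannot be pushed down to \(t=0\), because \(\int_0^v(1-e^{-t})^{-1}\d t=\infty\).

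A smaller issue is in the approximation step, where joint convexity points the wrong way. \(\Pr(X_\ell^i=\cdot\mid x_\ell^{-i})\) is a fixed-noise mixture of \(\Pr(X_t^i=\cdot\mid x_\ell^{-i})\), so convexity gives that the \(\ell\)-term is at most the \(t\)-term; you need the reverse. The paper obtains the reverse comparison from three ingredients:
the exact form in \Cref{lem:approx-exact};
the integral (Hadamard) representation of the two Bregman divergences;
and the lower bound \(\nu(t,b)(1-e^{-t})\) on both probabilities, over a short step.
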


This result separates the sampling error into three sources: initialization error, score-estimation error, and an intrinsic discretization error controlled by the dependence structure of the target distribution. In particular, the resulting step complexity scales with $\dtc(X_0)$ rather than explicitly with the ambient dimension $d$. For structured high-dimensional distributions with $\dtc(X_0)\ll d$, this can yield a substantially sharper guarantee than dimension-dependent worst-case bounds; see concrete examples in \cite{dmitriev26efficient}.

For the uniform process, this result shows that the unfavorable dimension dependence previously established (\cite{dmitriev26efficient}) for the standard $\tau$-leaping sampler is not intrinsic to the forward process itself, but can instead arise from the choice of sampling algorithm. The same analysis also yields an adaptive sampling guarantee for the remasking process, for which, to the best of our knowledge, no comparable theoretical guarantee was previously available.

\begin{proof}[Proof sketch]
Using the Bayes-optimal sampler introduced in~\Cref{sec:sampling}, we decompose the KL divergence between two path measures into approximation and discretization errors, see~\Cref{thm:main-err-decomp}. To upper bound the discretization error, we express it as the integral of the second partial derivative of the mutual information (\Cref{thm:main-discr}) and then upper bound it for both considered forward processes (\Cref{prop:main-discr}). The proof of the latter result is based on Gr\"{o}nwall's inequality.
\end{proof}

The following proposition formalizes the specific error decomposition underlying this argument. It is developed for general forward processes by comparing them with auxiliary CTMCs, and therefore separates the general information-theoretic part of our analysis from the process-specific bounds developed later. The proof of this result is included in Section~\ref{sec:pf-decomposition}.

\begin{proposition}
\label{thm:main-err-decomp}
Let \(0 = t_0 < t_1 < \ldots < t_N \leq T\) be the time discretization. Recall \(s_t\), \(\wh s_t\) from~\Cref{eq:score-func,eq:our-shat}, and \(\wt s_t\) from~\Cref{def:wt-score}. Then,
    \begin{equation*}
       \KL(q_{T - t_N} \| p_{\mathrm{output}}) \leq \KL(q_T \| q_{\mathrm{noise}}) + \sum_{k=0}^{N -1}\cL_{\mathrm{approx}}^{(k)} + \sum_{k=0}^{N - 1}\cL_{\mathrm{discr}}^{(k)}, 
    \end{equation*}
    where, for \(k \in \{0, \ldots, N - 1\}\), \(u = T - t_{k+1}\), and \(\ell = T - t_k\),
    \begin{equation}
    \label{eq:def-approx}
    \cL_{\mathrm{approx}}^{(k)} \defn \int_u^{\ell} \bE_{x_t, x_\ell \sim q_{t, \ell}} \left[ \sum_{y \neq x_t} Q_t(y, x_t) \wt s_t(y, x_t) \breg(\wh s_t(y, x_t), \wt s_t(y, x_t))\right] \d t
    \end{equation}
    and
    \begin{equation}
    \label{eq:def-discr}
    \cL_{\mathrm{discr}}^{(k)} \defn \int_u^{\ell} \bE_{x_t, x_\ell \sim q_{t, \ell}} \left[ \sum_{y \neq x_t} Q_t(y, x_t)  s_t(y, x_t) \breg(\wt s_t(y, x_t), s_t(y, x_t))\right] \d t.
    \end{equation}
\end{proposition}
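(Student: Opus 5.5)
We would prove \Cref{thm:main-err-decomp} by a Girsanov-type comparison of path measures on each discretization interval, followed by an exact splitting of the integrand into a ``score versus Bayes-optimal'' term and a ``Bayes-optimal versus learned'' term.

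\textbf{Step 1: reduce to path measures.}
Let $\mathbb{P}$ be the law of the true reverse process started at $q_T$, and $\widehat{\mathbb{P}}$ the law of the process simulated by \Cref{alg:sampler} started at $q_{\mathrm{noise}}$. By \Cref{prop:sampler}, $\widehat{\mathbb{P}}$ is a CTMC with rates $\wh Q_t$. By data processing (marginalizing to the final time) and the chain rule, $\KL(q_{T-t_N}\|p_{\mathrm{output}})\le \KL(\mathbb{P}\|\widehat{\mathbb{P}})$, and this equals $\KL(q_T\|q_{\mathrm{noise}})$ plus the path-space KL between the two dynamics with a common initial condition.

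The rates $\wh Q_t$ on the $k$-th interval depend on the state $x_\ell$ at the interval start. So we work with the enlarged Markov process $(X_t, X_\ell)$, in which $x_\ell$ is frozen during the interval. Equivalently, we condition on $\mathcal{F}_\ell$ and apply the chain rule across intervals.

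\textbf{Step 2: apply the CTMC Girsanov formula.}
On each interval, the standard formula for the KL between two CTMCs with the same initial law gives
\[
\int \bE\Bigl[\sum_{y\neq x_t}\back Q\,\breg(\wh Q,\back Q)\Bigr]\d t.
\]
Expectations are under the true joint law $q_{t,\ell}$ of $(X_t,X_\ell)$, since the true reverse process given $X_\ell$ remains Markov with rates $\back Q$. Writing the rates as $Q_t(y,x)s$, this yields
\[
\int_u^\ell \bE_{q_{t,\ell}}\Bigl[\sum_{y\neq x_t} Q_t(y,x_t)\, s_t\,\breg(\wh s_t,s_t)\Bigr]\d t.
\]
Here we use the time-reversal convention, so the forward time $t$ ranges over $[u,\ell]$.

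\textbf{Step 3: split the integrand.}
We use the exact Bregman three-point identity for $\phi=-\log$ with $\wt s$ in the middle:
\[
s\,\breg(\wh s,s)=s\,\breg(\wt s,s)+\wt s\,\breg(\wh s,\wt s)+(s-\wt s)\Bigl(\frac1{\wt s}-\frac1{\wh s}\Bigr).
\]
This is easily checked, since $s\,\breg(a,s)=a-s-s\log(a/s)$. The first term gives $\cL_{\mathrm{discr}}^{(k)}$. The second term is not directly $\cL_{\mathrm{approx}}^{(k)}$, since $\cL_{\mathrm{approx}}^{(k)}$ is weighted by $\wt s$. We handle this by pairing weights appropriately. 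Using $y=x_t\odot_i b$ and $Q_t(y,x_t)=\qtok(b,x_t^i)$, the cross term equals
\[
\qtok(b,x_t^i)\,\bigl(s_t(y,x_t)-\wt s_t(y,x_t)\bigr)\,g\bigl(x_t^i,b,x_\ell^{-i}\bigr),
\]
where $g=1/\wt s-1/\wh s$ depends only on $(x_t^i,b,x_\ell^{-i})$. This holds because both $\wt s$ and $\wh s$ depend on $x$ only through $x^i$ and $x_\ell^{-i}$.

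\textbf{Step 4: kill the cross term (main obstacle).}
It then suffices to show
\[
\bE\bigl[s_t(y,X_t)\mid X_t^i=a,\ X_\ell^{-i}\bigr]=\wt s_t(y,\cdot).
\]
This is exactly the content of \Cref{prop:cond-exp}, after dividing by the deterministic factor $Q_t$. Conditioning on $(X_t^i,X_\ell^{-i})$ therefore makes the cross term vanish in expectation.

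The main subtlety is justifying this conditioning. The tower property must be applied over $x_t^{-i}$ with $x_\ell^{-i}$ held fixed, and \Cref{prop:cond-exp} must apply for $t\le \ell$ under the joint law $q_{t,\ell}$. We also need to handle the mismatch between $\wt s$ being defined as a ratio involving $x_t^i$ and $x_\ell^{-i}$ and its use as a function of the pair $(x_t,x_\ell)$. Once this holds, the decomposition is an identity with no inequality loss, except in Step 1.

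Summing over $k$ and adding the initialization term then gives the claim.
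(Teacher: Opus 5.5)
Your strategy is the paper's: reduce by Girsanov to $\int_u^\ell \bE_{q_{t,\ell}}[\sum_{y\neq x_t} Q_t(y,x_t)\,s_t\,\breg(\wh s_t,s_t)]\,\d t$, split the integrand with the Bayes-optimal $\wt s$ in the middle, and kill the cross term with \Cref{prop:cond-exp}. The problem is the identity in Step 3, which is false. Take $s=1$, $\wt s=2$, $\wh s=4$. The left side minus the first two terms on the right is $(3-2\log 2)-(3-3\log 2)=\log 2$, whereas your cross term gives $(1-2)(\tfrac12-\tfrac14)=-\tfrac14$.

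The source of the error is the choice of potential. The weighted quantities are not Bregman divergences of $-\log$, since those carry no prefactor. Instead, $s\,\breg(a,s)=s\log(s/a)-s+a$ is the Bregman divergence of $\psi(x)=x\log x$ with arguments $(s,a)$. The three-point identity for $\psi$, whose derivative is $1+\log x$, gives
\begin{equation*}
s\,\breg(\wh s,s)=s\,\breg(\wt s,s)+\wt s\,\breg(\wh s,\wt s)+(\wt s-s)\log\frac{\wh s}{\wt s}.
\end{equation*}
This is exactly the identity the paper uses. There it is written for the rates $\alpha=\wh Q$, $\beta=\wt Q$, $\gamma=\back Q$; since $\breg(ca,cb)=\breg(a,b)$, multiplying through by $Q_t(y,x_t)$ turns one form into the other.

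The damage is local. Your Step 4 only uses that the cross term is $(s-\wt s)$ times a factor measurable with respect to $(X_t^i,X_\ell^{-i})$ and $b$. The corrected factor $\qtok(b,x_t^i)\log(\wh s/\wt s)$ has this property, because $\wh s_t(x_t\odot_i b,x_t)$ and $\wt s_t(x_t\odot_i b,x_t)$ depend only on $x_t^i$, $b$ and $x_\ell^{-i}$. So the tower property together with \Cref{prop:cond-exp} still makes the cross term vanish.

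Also, the second term is already exactly the integrand of $\cL_{\mathrm{approx}}^{(k)}$, so no re-pairing of weights is needed. With this correction your argument coincides with the paper's proof. Your explicit handling of the dependence of $\wh Q_t$ on $x_\ell$, via the chain rule over intervals, is slightly more careful than the paper's single appeal to Girsanov's theorem.
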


Proposition~\ref{thm:main-err-decomp} provides a general and interpretable decomposition of the sampling error. A key feature of this decomposition is that the discretization error is independent of the particular sampler: it measures the discrepancy between the true reverse process and the Bayes-optimal sampler, and therefore depends only on the target distribution, the forward process, and the chosen time discretization. In contrast, the approximation error measures the discrepancy between a particular sampler and its Bayes-optimal counterpart, thereby isolating the error arising from approximating the reverse dynamics. This separation allows the discretization error to be studied independently of sampler-specific approximations, and the resulting analysis applies beyond Algorithm~\ref{alg:sampler}, including the $\tau$-leaping sampler and its variants.

\paragraph{Discretization error.}

We first characterize the discretization error independently of the particular forward process. The following theorem provides an exact information-theoretic representation in terms of the mutual information between one coordinate and the remaining coordinates at different times. The proof is deferred to~\Cref{sec:discr-proof}.

\begin{theorem}
\label{thm:main-discr}
Fix \(k \in \{0, \ldots, N - 1\}\), let \(u = T - t_{k+1}\) and \(\ell = T - t_k\), and recall \(\cL_{\mathrm{discr}}^{(k)}\) from~\Cref{eq:def-discr}.
Then,
    \begin{equation}
         \label{eq:discr-err-exact}\cL_{\mathrm{discr}}^{(k)} = \sum_{i\in[d]}\int_{u}^{\ell} \int_t^{\ell} \frac{\partial^2}{\partial t \partial v}\mi(X_t^i \ ;\,X_v^{-i})\d v \d t.
    \end{equation}
\end{theorem}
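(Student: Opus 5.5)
The plan is to prove \eqref{eq:discr-err-exact} pointwise in $t$ and coordinate-by-coordinate. For fixed $i$ and $v\ge t$ set $F_i(t,v)\defn \mi(X_t^i;X_v^{-i})$. By the fundamental theorem of calculus, $\int_t^{\ell}\partial_v\partial_t F_i(t,v)\,\d v=\partial_1F_i(t,\ell)-\partial_1F_i(t,t)$, where $\partial_1$ denotes the derivative in the first argument with the second frozen. So it suffices to show that, for every $t\in[u,\ell]$, the integrand of $\cL_{\mathrm{discr}}^{(k)}$ restricted to moves of coordinate $i$ equals $\partial_1F_i(t,\ell)-\partial_1F_i(t,t)$. (I am assuming enough regularity to exchange derivative and integral; this is harmless for finite state spaces.)

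\textbf{Step 1 (simplify the integrand).} Write $y=x_t\odot_i b$ and $a=x_t^i$, and abbreviate $p_t(c\mid z)\defn\Pr(X_t^i=c\mid X_t^{-i}=z)$ and $p_t(c\mid z_\ell)\defn\Pr(X_t^i=c\mid X_\ell^{-i}=z_\ell)$. Expand $s\,\breg(\wt s,s)=\wt s-s-s\log(\wt s/s)$. By \Cref{prop:cond-exp}, $\wt s_t$ is the conditional expectation of $s_t$ given $(X_\ell^{-i},X_t^i)$, so the term $\bE[Q(y,x_t)(\wt s-s)]$ vanishes. What remains is
\[
\bE\Bigl[\sum_{b\ne a}\qtok(b,a)\,\tfrac{p_t(b\mid X_t^{-i})}{p_t(a\mid X_t^{-i})}\bigl(g(b)-g(a)\bigr)\Bigr],
\qquad g(c)\defn\log\frac{p_t(c\mid X_t^{-i})}{p_t(c\mid X_\ell^{-i})}.
\]
Next take the expectation over $a=X_t^i$ given $(X_t^{-i},X_\ell^{-i})$. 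Here I will use that $X_t^i\indep X_\ell^{-i}\mid X_t^{-i}$, which holds because the forward process acts independently on coordinates. This cancels the denominator and yields $\sum_{a\ne b}p_t(b\mid X_t^{-i})\qtok(b,a)(g(b)-g(a))$. Now use the true rate-matrix diagonal $-\sum_{a\ne b}\qtok(b,a)$, being careful about the self-loop convention. With it, this becomes $-\sum_{a,b}p_t(b\mid X_t^{-i})\,\qtok(b,a)\,g(a)$.

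\textbf{Step 2 (derivative of the mutual information).} Fix $v\ge t$. Given $X_0^i$, the path of coordinate $i$ is a $\qtok$-chain independent of $X^{-i}$. Hence
\[
\Pr(X_t^i=a\mid X_v^{-i}=z)=\sum_c\Pr(X_0^i=c\mid X_v^{-i}=z)\,P_t(c,a),
\]
so $\partial_t\Pr(X_t^i=a\mid X_v^{-i}=z)=\sum_b\Pr(X_t^i=b\mid X_v^{-i}=z)\,\qtok(b,a)$. This is a Kolmogorov forward equation for the conditional law. Write $\mi=\bE_z\,\KL\bigl(\Pr(X_t^i\in\cdot\mid z)\,\|\,\Pr(X_t^i\in\cdot)\bigr)$ and differentiate. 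The terms coming from differentiating inside the logarithm sum to zero, which gives
\[
\partial_1F_i(t,v)=\bE_{z\sim X_v^{-i}}\sum_{a,b}\Pr(X_t^i=b\mid z)\,\qtok(b,a)\log\frac{\Pr(X_t^i=a\mid z)}{\Pr(X_t^i=a)}.
\]

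\textbf{Step 3 (match the two expressions).} Split $g(a)=\log\frac{p_t(a\mid X_t^{-i})}{\Pr(X_t^i=a)}-\log\frac{p_t(a\mid X_\ell^{-i})}{\Pr(X_t^i=a)}$. The first part of $-\bE\sum p_t(b\mid X_t^{-i})\qtok(b,a)(\cdot)$ is exactly $-\partial_1F_i(t,t)$. In the second part, the weight $p_t(b\mid X_t^{-i})$ multiplies a function of $X_\ell^{-i}$ only. Conditioning on $X_\ell^{-i}$ and again using $X_t^i\indep X_\ell^{-i}\mid X_t^{-i}$ gives $\bE[p_t(b\mid X_t^{-i})\mid X_\ell^{-i}]=p_t(b\mid X_\ell^{-i})$. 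So the second part becomes $+\partial_1F_i(t,\ell)$. Summing over $i$ and integrating over $t\in[u,\ell]$ gives \eqref{eq:discr-err-exact}.

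The main obstacle I expect is Step 1's bookkeeping: correctly handling the diagonal/self-loop convention of $\qtok$, and justifying the two conditional-independence uses. These rely on the product structure of the forward process and on $t\le\ell$. A secondary point is checking that the differentiation in Step 2 is legitimate when some conditional probabilities vanish, for example the $\mask$ states. There I would restrict sums to the support, or argue by continuity in $t$.
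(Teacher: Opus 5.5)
Your proposal is correct and follows essentially the paper's route. You kill the linear part of the Bregman term with \Cref{prop:cond-exp}, use $X_t^i \indep X_\ell^{-i} \mid X_t^{-i}$ to rewrite the remaining log-ratio term, identify it with $\partial_t \mi(X_t^i;X_v^{-i})$ at $v=\ell$ minus its value at $v=t$, and close with the fundamental theorem of calculus. The differences are only bookkeeping: you integrate out $X_t^i$ to get the forward (Kolmogorov) form, derive the $t$-derivative of the mutual information yourself, and use the tower property to pass from $X_t^{-i}$ to $X_\ell^{-i}$; the paper instead relabels $x_t^i \leftrightarrow b$ into generator form and invokes \Cref{prop:it-results}~(iii).
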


We emphasize that~\Cref{thm:main-discr} holds \emph{for any} forward process and shows that the discretization error does not explicitly depend on the problem parameters, such as ambient dimension \(d\) and vocabulary size \(S\), but instead on the information-theoretic properties of both the data distribution and the forward process. We next specialize this characterization to the uniform and remasking processes, and leave potential applications for other cases, e.g., discrete Gaussian, or semantic-dependent (see, e.g.,~\cite{austin2021structured}), for future work.

\begin{proposition}
\label{prop:main-discr}
    Fix \(k \in \{0, \ldots, N - 1\}\) and let \((X_t)_{t \in [0, T]}\) be the uniform or remasking process. For \(u = T - t_{k+1}\) and \(\ell = T - t_k\) with \(\ell - u \leq \kappa \min(1, u)\), we have
    \begin{equation*}
         \cL_{\mathrm{discr}}^{(k)} \lesssim \kappa \left(\dtc(X_{u}) - \dtc(X_{\ell})\right).
    \end{equation*}
\end{proposition}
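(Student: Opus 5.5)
The plan is to start from the exact representation in \Cref{thm:main-discr} and bound the integrand pointwise. Set $f_i(t,v) \defn \mi(X_t^i ; X_v^{-i})$ for $v \geq t$. Integrating the mixed derivative over $v \in [t,\ell]$ gives
\[
\int_t^{\ell} \partial_t\partial_v f_i(t,v)\,\d v = \partial_t f_i(t,\ell) - \partial_t f_i(t,v)\big|_{v=t}.
\]
So the discretization error measures how fast the time-$t$ information changes, as the context is frozen at $\ell$ rather than moving with $t$.

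The key structural fact for the uniform and remasking processes is the mixture representation \eqref{eq:pr-t-from-pr-0}. Given $X_v^{-i}$ with $v\ge t$, the law of $X_t^i$ is $e^{-t}\Pr(X_0^i=\cdot\mid X_v^{-i})+(1-e^{-t})\nu(t,\cdot)$. Hence the dependence on $t$ enters only through the explicit mixing weight $e^{-t}$ and through $\nu(t,\cdot)$; the latter is data-independent, and for the uniform process it is constant. I would therefore differentiate $\mi(X_t^i;X_v^{-i})$ in $t$ with $v$ held fixed. This yields an expression of the form $e^{-t}\,\bE[\langle \Pr(X_0^i\mid X_v^{-i})-\nu, \log(\cdot)\rangle]$. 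I would then show that the $v$-derivative of this quantity is bounded, in absolute value and up to a constant factor, by the time derivative $-\partial_v\mi(X_v^i;X_v^{-i})$ and related leave-one-out quantities. Concretely, the goal is a differential inequality of the form $|\partial_t\partial_v f_i(t,v)| \lesssim -\partial_v g_i(v)$, where $g_i$ is the ``self'' information $\ent(X_v^i)-\ent(X_v^i\mid X_v^{-i})$ or a close variant. This must hold uniformly for $t \in [v-\min(1,v),v]$, which is where the condition $\ell-u\le\kappa\min(1,u)$ is used.

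To obtain this comparability I plan to use Gr\"onwall's inequality, as the proof sketch of \Cref{thm:main} suggests. The mixing weight ratio $e^{-t}/e^{-v}$ is $O(1)$ when $v-t\le 1$. Moreover, $\nu(t,b)$ is comparable to $\nu(v,b)$ on this range; for the remasking process this needs \Cref{lem:nu-remask} and the restriction $\ell-u\lesssim u$ so that $\nu$ does not blow up near $0$. Together these let me control $\partial_t f_i(t,v)$ by $\partial_t f_i(t,t)$ times a factor $e^{C(v-t)}$. Plugging this into the double integral, the inner integral has length at most $\ell-t\le\ell-u$ and contributes at most $\kappa\min(1,u)$ times a bounded quantity. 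After dividing appropriately, the result is $\cL_{\mathrm{discr}}^{(k)}\lesssim \kappa \sum_i \int_u^\ell |\partial_t \,\mi(X_t^i;X_t^{-i})|\,\d t$, with a $1/\min(1,u)$ normalisation absorbed by the step-size condition.

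The last step is to identify $\sum_i \mi(X_t^i;X_t^{-i})$ with $\dtc(X_t)$ up to the term $\sum_i\ent(X^i_t)-\ent(X_t)$, that is, the total correlation. I would handle this using $\dtc(X_t)=\ent(X_t)-\sum_i\ent(X_t^i\mid X_t^{-i})$ together with monotonicity of the relevant pieces along the forward process: both processes act independently on coordinates, so data processing makes $\dtc(X_t)$ nonincreasing in $t$. Arranged correctly, the integral telescopes to $\dtc(X_u)-\dtc(X_\ell)$. The main obstacle is exactly this point: showing that the signed derivative quantity produced by the second mixed derivative is dominated by $-\frac{\d}{\d t}\dtc(X_t)$ rather than by the derivative of the larger total correlation. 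Here I would first try to write $\partial_t\ent(X_t\mid\cdot)$ via the de Bruijn-type identity for CTMCs, as an expected Bregman divergence $\breg$ of scores. I would then compare the leave-one-out score terms directly, since these are exactly what appear in $\dtc$.
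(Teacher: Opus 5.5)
There is a genuine gap, and it is the one you flag as your main obstacle. You integrate the mixed derivative in $v$ first, so everything reduces to the $t$-partials $\partial_t\mi(X_t^i;X_v^{-i})$ at $v=\ell$ and at $v=t$. On the diagonal these sum to the derivative of the \emph{total} correlation: by \Cref{prop:it-results}~(iv), $\sum_i\partial_t\mi(X_t^i;X_v^{-i})\rvert_{v=t}=\frac{\d}{\d t}\tc(X_t)$.

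Your intermediate bound $\kappa\sum_i\int_u^\ell|\partial_t\mi(X_t^i;X_t^{-i})|\,\d t$ therefore telescopes to a $\tc$ (or $\dtc+\tc$) increment. No monotonicity argument can remove this term, because $\tc(X_t)$ is itself nonincreasing under a product channel, so its increment over $[u,\ell]$ is nonnegative. The loss is real: for the uniform distribution on $\{0^d,1^d\}$, $\tc(X_0)=(d-1)\log 2$ while $\dtc(X_0)=\log 2$.

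Your Gr\"onwall step has a related problem. Comparing $\partial_t\mi(X_t^i;X_v^{-i})$ at $v=\ell$ with its value at $v=t$ requires bounding the mixed derivative by the $t$-partial. The structure you can actually exploit is the noise on the single coordinate $i$, and that controls the mixed derivative by the $v$-partial instead. Comparing $e^{-t}$ with $e^{-v}$, or $\nu(t,\cdot)$ with $\nu(v,\cdot)$, does not produce the inequality you need.

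The missing step is to apply Fubini and integrate in $t$ first:
\[
\cL_{\mathrm{discr}}^{(k)}=\sum_{i\in[d]}\int_u^{\ell}\Bigl(\partial_v\mi(X_t^i;X_v^{-i})\big\rvert_{t=v}-\partial_v\mi(X_t^i;X_v^{-i})\big\rvert_{t=u}\Bigr)\,\d v .
\]
Now only context derivatives appear, and their diagonal sum is exactly $\frac{\d}{\d v}\dtc(X_v)$ by \Cref{prop:it-results}~(iv).

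It remains to show $-\partial_v\mi\rvert_{t=u}\le(1+O(\kappa))\,(-\partial_v\mi\rvert_{t=v})$. This follows by Gr\"onwall in $t$ from the pointwise bound $\partial_t\partial_v\mi(X_t^i;X_v^{-i})\lesssim-\bigl(1+\frac{1}{1-e^{-t}}\bigr)\partial_v\mi(X_t^i;X_v^{-i})$, which you get as follows.
\begin{itemize}
\item Split the bracket in \Cref{prop:it-results}~(v) as $\bigl(r(a)\log r(a)-r(a)+1\bigr)+\bigl(r(c)-1-r(a)\log r(c)\bigr)$. The first piece sums to exactly $-\partial_v\mi$.
\item In the second piece, the sum over $a$ collapses via the inflow identity $\sum_a\qtok(a,c)\Pr(X_t^i=a\mid\cdot)=\cF(t,c)$ of \Cref{lem:sum-prob}, for both contexts. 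This leaves $\cF(t,c)\bigl(r(c)-1-\log r(c)\bigr)$.
\item By \Cref{lem:ratio-ub} and the lower bound $\Pr(X_t^i=c\mid\cdot)\ge\nu(t,c)(1-e^{-t})\gtrsim\cF(t,c)(1-e^{-t})$, this is at most $\frac{1}{1-e^{-t}}$ times the corresponding term of $-\partial_v\mi$.
\end{itemize}
The Gr\"onwall factor $\exp\bigl(C\int_u^v(1+\frac{1}{1-e^{-t}})\,\d t\bigr)$ is $1+O(\kappa)$ precisely because $v-u\le\kappa\min(1,u)$; this is where the step-size condition enters. After summing over $i$, the integrand is $\lesssim-\kappa\,\frac{\d}{\d v}\dtc(X_v)$, and integrating over $[u,\ell]$ gives the claim. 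No de Bruijn-type identity is needed.
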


Using a telescoping sum,~\Cref{prop:main-discr} immediately leads to \(\sum_{k=0}^{N - 1} \cL_{\mathrm{discr}}^{(k)} \leq \kappa \dtc(X_0)\). We remark that while the statement requires \(\ell - u \leq \kappa \min(1, u)\), and thus imposes a geometric discretization grid \(t_{k+1} - t_k \leq \kappa \min(1, T - t_{k+1})\), the discretization error on the interval \([0, \delta]\) can  also be made small, for \(\delta\) small enough. Therefore, the early stopping requirement in~\Cref{thm:main} arises from the control of the approximation error rather than the discretization error, which we shall discuss next.

\paragraph{Approximation error.}

We next study the approximation error \(\cL_{\mathrm{approx}}\) defined in~\Cref{eq:def-approx}, which measures the discrepancy between a practical sampler and its Bayes-optimal counterpart.
We first show that it is directly controlled by the score entropy loss appearing in~\Cref{asm:se-loss}. 
The proofs for the following two results are deferred to~\Cref{sec:approx-proof}.

\begin{proposition}
\label{lem:approx-to-se}
Consider the \(k\)-th interval of the time discretization \(0 = t_0 < t_1 < \ldots < t_N < T\) and let \(u = T - t_{k+1}\), \(\ell = T - t_k\). Recall the true score \(s_t\) from~\Cref{eq:score} and our score estimator \(\wh s_t\) from~\Cref{eq:our-shat}. If \(\ell - u \leq \kappa \min(1, u)\) with \(0 < \kappa < 1\), then
    \begin{equation}
        \cL_{\mathrm{approx}}^{(k)} \lesssim  (\ell - u) \cL_{\mathrm{SE}}(\ell, \wh s_{\ell}, s_{\ell}).
    \end{equation}
\end{proposition}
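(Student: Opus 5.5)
The plan is to reduce the claim to a comparison, for fixed $i$ and fixed context $x_\ell^{-i}$, of two scalar functionals of the pair (true, estimated) leave-one-out denoiser: one at time $t\in[u,\ell]$ and one at time $\ell$. Then I would integrate over $t$.

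\textbf{Step 1 (reduction).} Since $Q_t(y,x_t)\neq 0$ only for $y=x_t\odot_i b$, the integrand in~\Cref{eq:def-approx} splits as a sum over $i$. Write
$\tilde p_t(b)=\Pr(X_t^i=b\mid X_\ell^{-i}=x_\ell^{-i})$ and $\hat p_t(b)=\wh\Pr(X_t^i=b\mid X_\ell^{-i}=x_\ell^{-i})$.
By~\Cref{def:wt-score} and~\Cref{eq:our-shat}, $\wt s_t(x\odot_i b,x)=\tilde p_t(b)/\tilde p_t(x^i)$ and $\wh s_t(x\odot_i b,x)=\hat p_t(b)/\hat p_t(x^i)$. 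Both depend on $x_t$ only through $x_t^i$. Conditionally on $X_\ell^{-i}=x_\ell^{-i}$, the coordinate $X_t^i$ has law $\tilde p_t$. So the $i$-th term of the integrand equals $\bE_{x_\ell^{-i}}[F_t(\hat p_0,p_0)]$, where $p_0,\hat p_0$ are the true and estimated leave-one-out denoisers at context $x_\ell^{-i}$ and
\begin{equation*}
F_t \defn \sum_{a}\sum_{b\neq a}\qtok(b,a)\,\tilde p_t(b)\,\breg\!\Big(\tfrac{\hat p_t(b)/\hat p_t(a)}{\tilde p_t(b)/\tilde p_t(a)},\,1\Big).
\end{equation*}
Here $\tilde p_t,\hat p_t$ are obtained from $p_0,\hat p_0$ via~\Cref{eq:pr-t-from-pr-0,eq:whpt-remask}. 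The same computation applied to $\cL_{\mathrm{SE}}(\ell,\wh s_\ell,s_\ell)$ gives exactly $\sum_i\bE_{x_\ell^{-i}}[F_\ell(\hat p_0,p_0)]$: at time $\ell$ the true score conditions on $X_\ell^{-i}$, and $\wh s_\ell$ is built from the same $\hat p_0$. Hence it suffices to show $F_t\lesssim F_\ell$ pointwise for $t\in[u,\ell]$, and then integrate over an interval of length $\ell-u$.

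\textbf{Step 2 (comparability of marginals).} I would show that $\tilde p_t(b)\asymp\tilde p_\ell(b)$ and $\hat p_t(b)\asymp\hat p_\ell(b)$ uniformly in $b$, with absolute constants. For the uniform process, $\tilde p_t/\tilde p_\ell$ lies between $\min\{e^{\ell-t},(1-e^{-t})/(1-e^{-\ell})\}$ and $\max\{\cdot\}$. The condition $\ell-u\le\kappa\min(1,u)$ gives $e^{\ell-t}\le e$ and $(1-e^{-u})/(1-e^{-\ell})\ge u/\ell\ge 1/2$. For the remasking process I would use the explicit $\nu$ from~\Cref{lem:nu-remask} and check the analogous ratio bounds coordinate by coordinate on $[S]$, $\mask$ and $\mathrm{REMASK}$. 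This is where the requirement $\ell-u\le\kappa u$ is essential: for small $u$, the noise part of $\nu(t,\cdot)(1-e^{-t})$ changes by a relative amount of order $(\ell-u)/u$.

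\textbf{Step 3 (comparing $F_t$ with $F_\ell$; main obstacle).} Multiplicative comparability of the weights $\tilde p_t(b)\asymp\tilde p_\ell(b)$ is immediate. The arguments of $\breg$ at times $t$ and $\ell$, however, differ. The relation $p_\ell=e^{-(\ell-t)}p_t+(1-e^{-(\ell-t)})\nu$ is a convex combination in the wrong direction: by joint convexity it would give $F_\ell\lesssim F_t$. I would therefore rewrite $F_t$ as a sum of Bregman divergences of $-\log$ in the variables $(\hat p_t(b)\tilde p_t(a),\,\tilde p_t(b)\hat p_t(a))$. Then I would use $\breg(x,y)\asymp (x-y)^2/y^2$ when $x/y$ is bounded, and $\breg(x,y)\asymp x/y$ or $\log(y/x)$ otherwise, together with the linearity $\hat p_t-\tilde p_t=e^{-t}(\hat p_0-p_0)$. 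Combined with Step 2, this should yield $F_t\lesssim F_\ell$, since the numerator differences at $t$ and $\ell$ differ only by the factor $e^{\ell-t}\le e$ and the denominators are comparable. Handling the regime where the ratio is unbounded (the logarithmic part of $\breg$) and the mask/remask entries of $\qtok$ is where I expect most of the work. If the direct approach fails, I would try bounding $F_t$ by a $\chi^2$-type quantity that is monotone under the mixing channel.
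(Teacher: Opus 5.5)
Your Steps 1 and 2 are correct and follow the skeleton of the paper's proof. You compare the time-$t$ integrand with the time-$\ell$ one, which is exactly the score-entropy integrand because $\wt s_\ell=s_\ell$, and you use the noise floor to get comparability. The gap is in Step 3: the mechanism you propose there fails in the variables you chose.

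Write $\delta_t=\hat p_t-\tilde p_t=e^{-t}(\hat p_0-p_0)$. The difference of your two arguments is $\hat p_t(b)\tilde p_t(a)-\tilde p_t(b)\hat p_t(a)=\delta_t(b)\tilde p_t(a)-\tilde p_t(b)\delta_t(a)$, which is bilinear. For the uniform process, set $\gamma=e^{-(\ell-t)}$ and $c=(1-\gamma)/S$. Then $\tilde p_\ell=\gamma\tilde p_t+c$ and $\delta_\ell=\gamma\delta_t$, so the time-$\ell$ difference is $\gamma^2[\delta_t(b)\tilde p_t(a)-\tilde p_t(b)\delta_t(a)]+\gamma c\,[\delta_t(b)-\delta_t(a)]$. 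This is not a constant multiple of the time-$t$ difference, and the two pieces can cancel.

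Concretely, the $(a,b)$ summand of $F_\ell$ vanishes iff $\delta_\ell(a)/\tilde p_\ell(a)=\delta_\ell(b)/\tilde p_\ell(b)$. Take $S\ge 3$, $\delta_t(a)=2\delta_t(b)\neq 0$ and $\gamma(\tilde p_t(a)-2\tilde p_t(b))=c$. That summand is then zero at $\ell$ but positive at $t$. So no summand-by-summand bound $F_t\lesssim F_\ell$ exists, however you split the regimes of $\breg$. Your fallback, a $\chi^2$-type divergence monotone under the mixing channel, again gives only the direction $F_\ell\lesssim F_t$.

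The missing idea is to sum over the pairs before comparing. Expand $\breg(X/Y,1)$ inside $F_t$:
\begin{itemize}
\item The non-separable ratio part becomes $\sum_a \frac{\tilde p_t(a)}{\hat p_t(a)}\sum_b\qtok(b,a)\hat p_t(b)$.
\item The constant and the $a$-dependent logarithmic part carry the weight $\sum_b\qtok(b,a)\tilde p_t(b)$.
\item The row sums $\sum_a\qtok(b,a)=1$ turn the $b$-dependent logarithmic term into a KL divergence.
\end{itemize}
By \Cref{lem:sum-prob}, both column sums equal $\inflow(t,a)$, whatever the context and whether the true or the estimated denoiser is used. This is where the ``unstructured'' forward process and \Cref{eq:whpr-eq-pr} enter. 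The result is the separable form $F_t=\KL(\tilde p_t\|\hat p_t)+\sum_b\inflow(t,b)\,\breg(\tilde p_t(b),\hat p_t(b))$ of \Cref{lem:approx-exact}.

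Now each summand is a scalar Bregman divergence (of $x\log x$, resp.\ $-\log x$). Its difference genuinely is $\tilde p_t(b)-\hat p_t(b)=e^{\ell-t}(\tilde p_\ell(b)-\hat p_\ell(b))$. The integral remainder $\breg_\phi(x,y)=(x-y)^2\int_0^1(1-r)\phi''(y+r(x-y))\,\d r$ handles all ratio regimes at once, so the unbounded-ratio case needs no separate treatment. The interpolants satisfy the same affine relation in time as the marginals and are bounded below by the noise floor $(1-e^{-t})\nu(t,b)$. Hence their ratio between times $\ell$ and $t$ is $O(1)$, by the same computation as your Step 2.

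Two further facts finish the argument. First, $\inflow(t,b)\le e^{\ell-t}\inflow(\ell,b)$. Second, the $\mask$ and $\mathrm{REMASK}$ terms vanish, since there $\tilde p_t=\hat p_t$. Together these give $F_t\lesssim F_\ell$, and integrating over $[u,\ell]$ gives the claim.
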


Let us next give an alternative characterization directly in terms of the leave-one-out denoiser. This formulation makes explicit how errors in estimating the conditional distribution of $X_0^i$ from the context $X_t^{-i}$ contribute to the sampling error. Importantly, the coordinate $X_t^i$ itself is excluded from the conditioning context.
\begin{theorem}
\label{thm:main-approx}
   Let \((X_t)_{t \in [0, T]}\) be the CTMC corresponding to the uniform or remasking process. If  \(t_{k+1} - t_k \leq \kappa \min(1, T - t_{k+1})\) for \(k \in \{0, \ldots, N - 1\}\), then
    \begin{equation*}
        \sum_{k=0}^{N - 1} \cL_{\mathrm{approx}}^{(k)} \lesssim \kappa \sum_{k=0}^{N - 1}  \sum_{i \in [d]} e^{-(T - t_{k+1})} \KL \left( \mu_{X_0^i} \ \big\|\ \wh\mu_{X_0^i} \ \Bigr|\  X_{T - t_k}^{-i}\,\right).
    \end{equation*}
    Here, for fixed \(i \in [d]\), \(\KL \left( \mu_{X_0^i} \ \big\|\ \wh\mu_{X_0^i} \ \Bigr|\  X_{T - t_k}^{-i}\,\right)\) is the conditional KL divergence between \(\Pr(X_0^i = \, \cdot \mid X_{T - t_k}^{-i})\) and \(\wh \Pr(X_0^i = \, \cdot \mid X_{T - t_k}^{-i})\).
\end{theorem}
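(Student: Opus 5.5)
We work interval by interval and then sum over \(k\). Fix \(k\), set \(u = T - t_{k+1}\), \(\ell = T - t_k\), and expand \(\cL_{\mathrm{approx}}^{(k)}\) from~\Cref{eq:def-approx}. Since \(Q_t(y, x_t) > 0\) only for \(y = x_t \odot_i b\) with \(\ham(y, x_t) = 1\), the integrand splits as a sum over coordinates \(i \in [d]\) and values \(b \neq x_t^i\). For each such pair we write out \(\wt s_t\) and \(\wh s_t\) as ratios of the affine mixtures from~\Cref{eq:pr-t-from-pr-0,eq:whpt-remask}. Define
\[
\wt p_t(b) = p(b)e^{-t} + \nu(t, b)(1 - e^{-t}), \qquad \wh p_t(b) = \wh p(b)e^{-t} + \nu(t, b)(1 - e^{-t}),
\]
where \(p = \Pr(X_0^i = \cdot \mid X_\ell^{-i} = x_\ell^{-i})\) and \(\wh p\) is its estimate. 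Both \(\wt s_t\) and \(\wh s_t\) depend on \(x_t\) only through \(x_t^i\), and on the rest only through \(x_\ell^{-i}\).

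The first step removes the expectation over \(x_t^i\). Conditionally on \(x_\ell^{-i}\), the marginal of \(X_t^i\) is exactly \(\wt p_t\). Hence \(\wt p_t(x_t^i)\wt s_t(x_t\odot_i b, x_t) = \wt p_t(b)\), and the \(i\)-th contribution becomes
\[
\sum_{a} \sum_{b \neq a} \qtok(b, a)\, \wt p_t(a)\, \frac{\wt p_t(b)}{\wt p_t(a)}\, \breg\!\Bigl(\frac{\wh p_t(b)/\wh p_t(a)}{\wt p_t(b)/\wt p_t(a)}\Bigr).
\]
This is a one-dimensional score-entropy expression between two distributions on \(\cV\). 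The second step is a pointwise inequality of the form
\[
\sum_{a, b} \qtok(b, a)\, \wt p_t(b)\, \breg\bigl(\tfrac{\wh p_t(b)\wt p_t(a)}{\wt p_t(b)\wh p_t(a)}\bigr) \lesssim C_t\, \KL(\wt p_t \,\|\, \wh p_t),
\]
followed by \(\KL(\wt p_t \| \wh p_t) \le e^{-t}\KL(p \| \wh p)\). The latter holds by joint convexity of \(\KL\), since both distributions are mixtures with common component \(\nu(t, \cdot)\) and weight \(e^{-t}\) on \(p\) and \(\wh p\) respectively.

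The main obstacle is proving the second-step inequality with a constant \(C_t\) that stays bounded uniformly, so that integrating over \([u, \ell]\) yields \((\ell - u)e^{-u}\KL \lesssim \kappa e^{-u}\KL\), using \(\ell - u \le \kappa\). For the uniform process, \(\qtok(b, a) = 1/S\), so the \(i\)-th contribution equals \(\frac1S\sum_{a,b}\wt p_t(a)\, \breg(r_b/r_a)\, r_a/r_b\)-type terms with \(r = \wh p_t/\wt p_t\). Rather than bounding \(\breg\) pointwise, we would use the identity
\[
\sum_{a, b} \wt p_t(a)\wt p_t(b)\, \breg\!\Bigl(\frac{r_b}{r_a}\Bigr) = \bE_{\wt p_t}[r]\,\bE_{\wt p_t}[1/r] - 1 + \text{(log terms)}.
\]
Here \(\bE_{\wt p_t}[r] = 1\) and the log terms cancel by symmetry. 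This reduces the problem to bounding \(\chi^2\)-like quantities of the form \(\bE_{\wt p_t}[1/r] - 1 = \chi^2(\wt p_t \| \wh p_t)\). The weight \(1/S\) is dominated by the mixture floor \(\wt p_t \ge (1 - e^{-t})/S\), but \(\chi^2\) is not controlled by \(\KL\) without a bounded likelihood ratio. We would first try to avoid \(\chi^2\) by using \(\breg(x) \le x - 1 - \log x\) directly and writing the sum as a Bregman divergence of \(-\log\) between the vectors \(\wt p_t\) and \(\wh p_t\), which is close to \(\KL\) after normalisation.

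If that fails, we would exploit the small-\(t\) regime: \(\ell - u \le \kappa u\) means the whole interval sits at comparable times. We would also compare with the \(\ell\)-endpoint, where the analogous quantity equals the score-entropy loss. Combined with~\Cref{lem:approx-to-se}, this reduces the claim to relating \(\cL_{\mathrm{SE}}(\ell, \cdot)\) coordinatewise to \(e^{-\ell}\KL(p \| \wh p)\), with \(e^{-\ell} \asymp e^{-u}\) since \(\ell - u \le \kappa < 1\). For the remasking process, the same computation applies with \(\qtok(b, a)\) nonzero only for transitions involving \(\mask\) and \(\mathrm{REMASK}\), using the explicit \(\nu\) from~\Cref{lem:nu-remask}. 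Finally, we take the expectation over \(x_\ell^{-i}\), which produces the conditional \(\KL\), and sum over \(i\) and \(k\).
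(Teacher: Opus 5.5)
\textbf{There is a genuine gap: the middle inequality is never proved, and the form you aim for (uniformly bounded $C_t$) is false.} Two of your steps are correct and agree with the paper: the reduction of each coordinate to a one-dimensional score entropy between $\wt p_t$ and $\wh p_t$ (given $x_\ell^{-i}$, the law of $X_t^i$ is $\wt p_t$), and the last step $\KL(\wt p_t\,\|\,\wh p_t)\le e^{-t}\KL(p\,\|\,\wh p)$.

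\textbf{The missing idea: the sum can be computed exactly.} Expand $\breg$. Use that the rows of $\qtok$ sum to one, and that the inflow $\sum_{b}\qtok(b,a)\wt p_t(b)=\sum_{b}\qtok(b,a)\wh p_t(b)=\cF(t,a)$ does not depend on the data (\Cref{lem:sum-prob}). This gives
\[
\sum_{a,b\in\cV}\qtok(b,a)\,\wt p_t(b)\,\breg\Bigl(\frac{\wh p_t(b)}{\wh p_t(a)},\frac{\wt p_t(b)}{\wt p_t(a)}\Bigr)=\KL(\wt p_t\,\|\,\wh p_t)+\sum_{a\in\cV}\cF(t,a)\,\breg\bigl(\wt p_t(a),\wh p_t(a)\bigr),
\]
which is \Cref{lem:approx-exact}. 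No $\chi^2$ term arises. Your identity weights the pair $(a,b)$ by $\wt p_t(a)\wt p_t(b)$, but the actual weight $\qtok(b,a)\wt p_t(b)$ is constant ($=1/S$) in $a$ for the uniform process.

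The second (Itakura--Saito) term is handled pointwise by \Cref{lem:ratio-ub}: $\breg(P,Q)\le\frac{1}{\min(P,Q)}\bigl(P\log\frac PQ-P+Q\bigr)$. Combine this with the floor $\min(\wt p_t(a),\wh p_t(a))\ge\nu(t,a)(1-e^{-t})\gtrsim\cF(t,a)(1-e^{-t})$ from \Cref{lem:inflow-exact}. After summing over $a$ the $-P+Q$ terms cancel, so $C_t\lesssim 1+\frac{1}{1-e^{-t}}$. You noticed the floor but never tied it to the weight $\cF(t,a)$, which is what makes it usable.

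\textbf{Why no uniform constant exists.} Take the uniform process, $p=\delta_1$ and $\wh p=(1-\epsilon)\delta_1+\epsilon\delta_2$ with $\epsilon e^{-t}=\eta\defn(1-e^{-t})/S$.
\begin{itemize}
\item Then $\wt p_t(2)=\eta$ and $\wh p_t(2)=2\eta$, so the left-hand side is at least $\frac1S\breg(\eta,2\eta)=\frac1S(\log 2-\frac12)$.
\item Meanwhile $\KL(\wt p_t\,\|\,\wh p_t)\le\eta(1-\log 2)+\eta^2e^t$.
\end{itemize}
So for small $t$ necessarily $C_t\gtrsim 1/(1-e^{-t})$, which blows up like $1/t$. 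Hence your integration step using only $\ell-u\le\kappa$ fails.

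\textbf{How the integration must go.} The correct bound is
\[
\int_u^\ell\frac{\d t}{e^t-1}\le\frac{\ell-u}{e^u-1}\lesssim\frac{\ell-u}{\min(1,u)}e^{-u}\le\kappa e^{-u},
\]
and the last step is exactly where the relative condition $\ell-u\le\kappa\min(1,u)$ is needed. Once the decomposition and this constant are in place, the rest of your outline matches the paper's proof.

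\textbf{Other points.} Your fallback through \Cref{lem:approx-to-se} does not remove the difficulty. It only moves the same one-dimensional inequality to time $\ell$, where the factor $1/(1-e^{-\ell})$ must again be handled. For remasking, ``the same computation'' goes through only via the same two process-specific facts: the inflow is data-independent (the $\mask$ and $\mathrm{REMASK}$ marginals do not depend on $q_{\mathrm{data}}$), and $\nu(t,b)\gtrsim\cF(t,b)$. After that, the $\mask$ and $\mathrm{REMASK}$ terms vanish because $\wt p_t$ and $\wh p_t$ coincide there.
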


\Cref{thm:main-approx} gives a quantitative bound for the approximation error with respect to the cross-entropy loss (\cite{austin2021structured,sahoo2024simple}). The cross-entropy loss is widely used in practice for training discrete diffusion models~(e.g.,~\cite{team2026diffusiongemma}), although not in the leave-one-out formulation. This result may be of future interest, e.g., for proving bounds on the sample complexity of score estimation.
\section{Numerical examples}
\label{sec:exp}

\begin{figure}[t]
\includegraphics[width=1.0\linewidth]{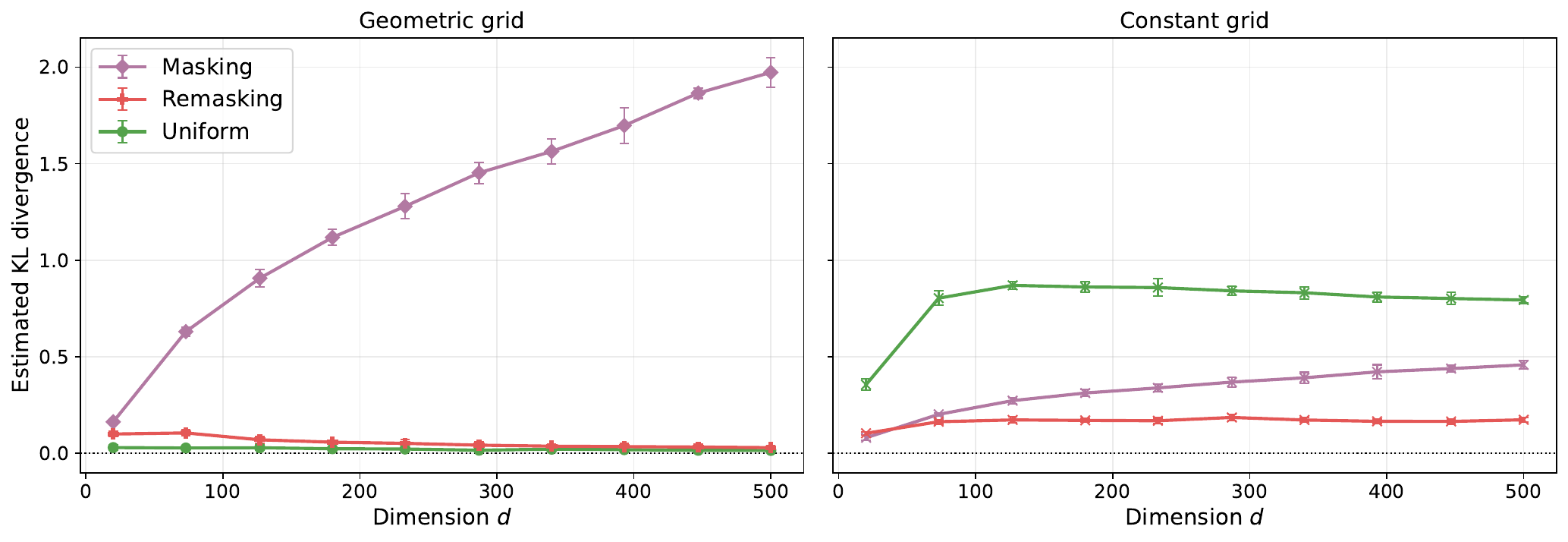}
\caption{\(N = 30\) discretization steps. Target distribution is the Markov chain with \(d p_{\mathrm{flip}} = 2\). The left plot shows the performance on the geometric grid \(t_{k+1} - t_k = \kappa \min(1, T - t_{k+1})\) and the right plot shows the result for the constant grid \(t_{k+1} - t_k = (T-\delta) / N\). We use early stopping at \(\delta = 1\mathrm{e}{-5}\) and \(T = 8\). Results are averaged over \(7\) runs. Given the same number of discretization steps \(N\), the masking process works better with the constant grid, which agrees with the finding in~\cite{dmitriev26efficient} for distributions with small \(\dtc\). Uniform process, in contrast, works better with the geometric grid, consistent with the finding of this work. Remasking process performs robustly on both grids.} 
\label{fig:compare-grid}
\end{figure}

\begin{figure}[t]
\begin{center}
\includegraphics[width=0.7\linewidth]{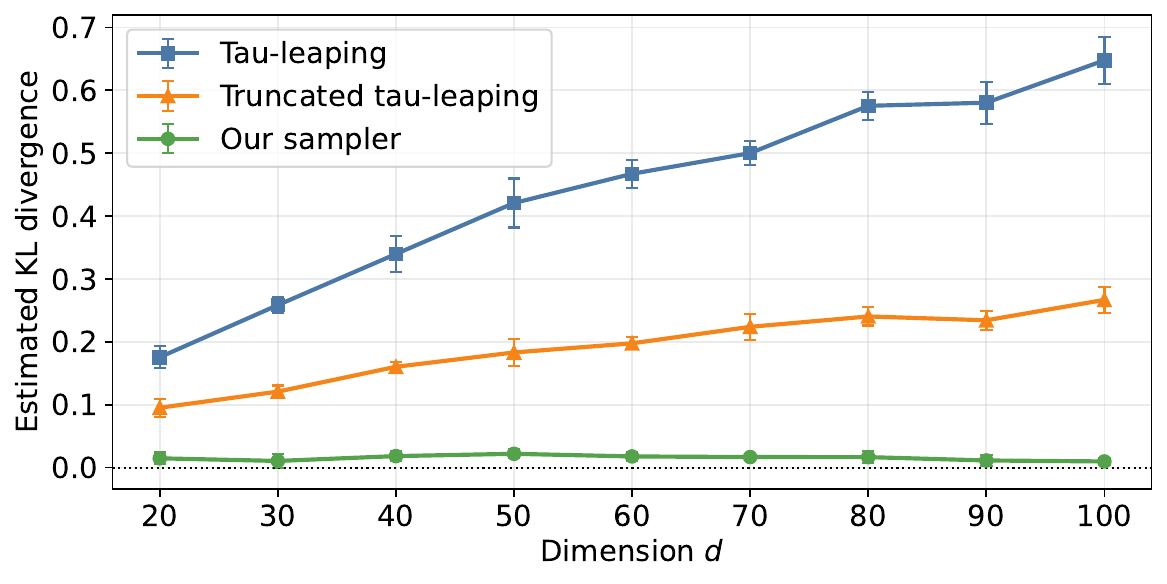}
\end{center}
\caption{\(N = 40\) discretization steps. Target distribution is the binary Markov chain with \(d p_{\mathrm{flip}} = 2\). We use early stopping at \(\delta = 1\mathrm{e}{-5}\) and \(T = 8\). Results are averaged over \(7\) runs. We compare three different samplers for the uniform process: \(\tau\)-leaping, truncated \(\tau\)-leaping, and ours.} 
\label{fig:compare-tau-leaping}
\end{figure}

\begin{figure}[t]
\begin{center}
\includegraphics[width=0.7\linewidth]{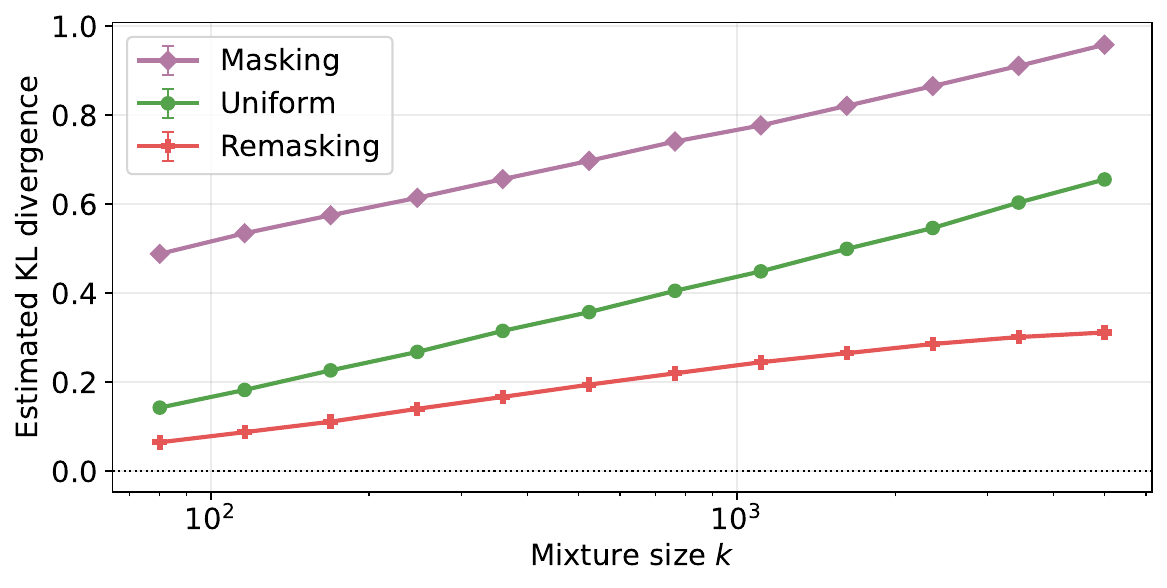}
\end{center}
\caption{\(N = 20\) discretization steps. Dimension \(d = 2000\). Target distribution is a sparse mixture with \(k\) components. We use early stopping at \(\delta = 1\mathrm{e}{-5}\) and \(T = 8\). Results are averaged over \(7\) runs. Masking uses constant grid; uniform and remasking use geometric grid.} 
\label{fig:mixture}
\end{figure}

In this section, we conduct experiments on two synthetic target distributions to illustrate the adaptive sampling behavior predicted by our theory. In particular, we examine how the sampling error depends on the ambient dimension $d$, the choice of time discretization, and the structural complexity of the target distribution.

For both target distributions considered, a binary Markov chain and a mixture of binary strings, the leave-one-out probabilities and thus the Bayes-optimal score function can be computed efficiently. We therefore use the exact values for our sampler, so that the approximation error vanishes and the remaining sampling error is solely due to discretization.

\paragraph{A binary Markov chain.}
We consider a Markov chain \(\{a_k\}_{k=1}^d\) on \(\{0, 1\}\) of length \(d\), where \(a_1 \sim \mathrm{Bern}(1/2)\) and for \(k = 1, \ldots, d - 1\), \(a_{k+1} = a_k\) with probability \(1 - p\), and \(a_{k+1} = 1 - a_k\) otherwise. We choose \(p = 2d^{-1}\), so that a typical sample consists of several (three on average) consecutive blocks of the same digit. Let \(q_{\mathrm{data}}\) be the distribution of this Markov chain. A simple computation shows 
\begin{equation*}
    \dtc(q_{\mathrm{data}}) \leq \ent(q_{\mathrm{data}}) = O(\log d).
\end{equation*}
\Cref{fig:mc-increase-d} compares three processes for the discrete diffusion models: (i) masking process, (ii) remasking process with \(p_M = \nicefrac{1}{2}\), and (iii) uniform process. While all three processes benignly depend on the dimension (given only \(N = 20\) of discretization steps), uniform and remasking processes consistently outperform the standard masking process. This can be attributed to the fact that both uniform and remasking processes allow the sampler to correct early mistakes, thus can perform well under extremely coarse discretization grids. 
We leave the theoretical justifications of this observation for future work.

\Cref{fig:compare-grid} shows two choices of discretization scheme for all three processes. We observe that the masking process performs much better on the constant grid, while uniform, in contrast, takes advantage of the geometric grid. Remasking also performs better on the geometric grid, but also performs well on the constant grid. 

\Cref{fig:compare-tau-leaping} compares three different samplers for the uniform process: \(\tau\)-leaping, its truncated version, where only one jump per coordinate is allowed per discretization step, and our sampler. We observe that our sampler shows the best performance out of the three samplers. The additional errors of the \(\tau\)-leaping and truncated \(\tau\)-leaping samplers arise from the non-zero approximation error of these samplers, as they inaccurately follow the Bayes-optimal sampler.

In these binary Markov chain experiments, we estimate the KL divergence by fitting an autoregressive model to the output samples. We use \(1500\) samples to fit the model parameters and \(2500\) samples for the KL estimation. Exact computations in small dimensions show good empirical agreement with this estimation. 

\paragraph{Mixture of binary strings.}
We independently sample $k$ binary strings \(s_1, \ldots, s_k \) uniformly from \(\{0, 1\}^d\) and define the empirical distribution
\begin{equation*}
    \widetilde q_{\mathrm{data}} = \frac{1}{k} \sum_{i=1}^k \delta_{s_i},
\end{equation*}
where \(\delta_x\) denotes the Dirac delta distribution at point \(x \in \{0, 1\}^d\). To ensure full support on \(\{0, 1\}^d\),
we consider the smoothed distribution
\begin{equation*}
    q_{\mathrm{data}}
    = (1-\varepsilon)\widetilde q_{\mathrm{data}}
      + \varepsilon \operatorname{Unif}(\{0,1\}^d),
    \qquad \varepsilon = 10^{-10}.
\end{equation*}
In the regime of \(\log k \ll d\), we have that
\begin{equation*}
    \dtc(q_{\mathrm{data}}) = O(\log k).
\end{equation*}
\Cref{fig:mixture} shows how the estimated KL divergence scales with increasing \(k\) from \(k = 80\) to \(k = 5000\). As the \(X\)-axis is plotted in logarithmic scale, we see that, for a fixed number of discretization steps \(N = 20\), the KL divergence grows logarithmically with \(k\), consistent with~\Cref{thm:main}. For this setting, we also observe that the uniform and remasking processes incur smaller sampling errors than the widely used masking process. Providing a rigorous explanation for this phenomenon is an interesting direction for future work.

In the mixture experiments, to estimate the KL divergence, we collect all the generated samples that do not match any of the \(k\) binary strings into a single bin and compute the KL divergence between this restricted distribution over \(k+1\) elements. While this only provides a lower bound on the true KL divergence, we find that this approximation is accurate in low dimensions and can scale to higher dimensions.
\section{Discussion}
This paper establishes adaptive sampling guarantees for the uniform and remasking processes. Our results show that the linear dependence on the ambient dimension $d$ exhibited by existing results for uniform discrete diffusion is not intrinsic to the forward process. Instead, the leave-one-out sampler studied here admits guarantees controlled by an information-theoretic measure of the target distribution. Thus, sampling efficiency depends not only on the choice of forward process, but also critically on how the reverse dynamics are approximated and discretized.
To the best of our knowledge, this is the first work to establish such adaptive guarantees for both the uniform and remasking processes. Our analysis also reveals a common structure underlying the two processes and suggests that the techniques developed here extend naturally to other discrete diffusion models used in practice. A key property required by our arguments is that the forward process is \emph{unstructured}: after a coordinate undergoes a jump, its new value is independent of its initial value.

Our work suggests several directions for future investigation. 

\begin{itemize}
    \item In our numerical experiments, the uniform and remasking processes consistently outperform the widely used masking process. Providing a theoretical explanation for this empirical observation is an interesting direction for future work. 
\item Extending the present techniques to other forward processes, such as discrete Gaussian or semantically dependent processes, may help clarify the connections between discrete and continuous diffusion models. 
\item It would also be interesting to determine whether higher-order samplers can further improve the dependence on accuracy or intrinsic complexity. 
\item Finally, characterizing the sample complexity required for accurate score estimation would be an important step toward a more unified theory of sampling with discrete diffusion models.
\end{itemize}

\section*{Acknowledgements}
This work is supported in part by Wharton Dean's Research Fund, the NSF grants CCF-2106778, CCF-2418156 and CAREER award DMS-2143215.  
This work is also supported by the NSF under Cooperative Agreement No. 2433450.

\appendix
\crefalias{section}{appendix}
\crefalias{subsection}{appendix}

\section{Details on the remasking process}
In this section, we provide explicit expressions for the probabilities used in the construction of the remasking process. We emphasize that the exact form of these expressions is not used in the proofs (with the exception of \(\nu(t, c)\), for \(c \in [S] \cup \{\mathrm{REMASK}\}\), used in the proofs of~\Cref{lem:inflow-exact,lem:approx-to-se}), and are given here for completeness. 
\begin{lemma}
\label{lem:nu-remask}
Consider the remasking process with parameter \(p_M\) and let \(\rho = \sqrt{1 - p_M}\). Recall \(\nu(t, c)\) from~\Cref{eq:def-nu}. We have
    \begin{equation*}
        \nu(t, b) = \frac{1}{1 - e^{-t}} \times \begin{cases} \frac{e^{-t}}{S}(\cosh(t \rho) - 1), \quad &\text{for } b \in [S], \\
        e^{-t} \rho \sinh(t\rho), \quad &\text{for } b = \mathrm{REMASK}, \\
        1 - e^{-t}\left( \cosh(t \rho) + \rho \sinh(t \rho)\right) , \quad &\text{for } b = \mask.
        \end{cases}
    \end{equation*}
\end{lemma}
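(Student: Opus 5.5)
The plan is to reduce the computation to a three-state aggregated chain for a single coordinate, solve its linear ODE explicitly, and then use the uniform redistribution from $\mathrm{REMASK}$ to split the unmasked mass among the $S$ tokens.

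\textbf{Reduction.} Fix a coordinate $i$ with $X_0^i = a \in [S]$. By definition~\eqref{eq:def-nu},
\[
\nu(t,b) = \frac{\Pr(X_t^i = b,\ \text{jump on } [0,t])}{1-e^{-t}},
\]
since the probability of no jump is $e^{-t}$. The exit rate from any state of $[S]$ is $p_M + (1-p_M) = 1$. I will show the numerators do not depend on $a$, and then compute them.

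\textbf{Aggregated chain.} Lump the states into $U = [S]$, $R = \mathrm{REMASK}$ and $M = \mask$. The lumped process is a CTMC with rates $U\to M$ equal to $p_M$, $U\to R$ equal to $1-p_M = \rho^2$, and $R\to U$ equal to $1$; the state $M$ is absorbing. Write $u(t) = \Pr(X_t^i \in U)$ and $r(t) = \Pr(X_t^i = \mathrm{REMASK})$. The Kolmogorov forward equations give
\[
u' = -u + r, \qquad r' = \rho^2 u - r, \qquad u(0)=1,\ r(0)=0.
\]
The coefficient matrix has eigenvalues $-1 \pm \rho$. Solving, I expect $u(t) = e^{-t}\cosh(t\rho)$ and $r(t) = e^{-t}\rho\sinh(t\rho)$; I will verify these by direct differentiation. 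Consequently
\[
\Pr(X_t^i = \mask) = 1 - e^{-t}\bigl(\cosh(t\rho) + \rho\sinh(t\rho)\bigr).
\]

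\textbf{The three cases.}
\begin{itemize}
\item For $b = \mathrm{REMASK}$ and $b = \mask$, reaching either state requires at least one jump. Hence the numerator of $\nu(t,b)$ is just $r(t)$, respectively $1-u(t)-r(t)$, which gives the second and third cases of the lemma.
\item For $b \in [S]$, the event $\{X_t^i \in U,\ \text{jump on } [0,t]\}$ has probability $u(t) - e^{-t}$. On this event the coordinate must have passed through $\mathrm{REMASK}$: the only exits from $[S]$ lead to $\mask$ (absorbing) or $\mathrm{REMASK}$. I then condition on the last visit to $\mathrm{REMASK}$ before $t$. At that return, $\qtok(\mathrm{REMASK},\cdot) = 1/S$ draws the new token uniformly and independently of the past. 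After it, the coordinate stays put at least until $t$, because there is no later visit to $\mathrm{REMASK}$ and $\mask$ is absorbing. So the value of $X_t^i$ is uniform on $[S]$, independent of $a$ and of the remaining path. Therefore
\[
\Pr(X_t^i = b,\ \text{jump}) = \frac{1}{S}\,e^{-t}\bigl(\cosh(t\rho)-1\bigr).
\]
\end{itemize}

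\textbf{Main obstacle.} The only step needing care is this uniformity argument. I would make it rigorous with a last-exit decomposition over the time of the final $R\to U$ transition, using the strong Markov property. The remaining steps are routine ODE verification.
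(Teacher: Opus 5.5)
Your proposal is correct and follows the paper's proof. The paper also lumps $[S]$ into a single state and obtains $u(t)=e^{-t}\cosh(t\rho)$ and $r(t)=e^{-t}\rho\sinh(t\rho)$, computing $\exp(tB)$ by a power series where you verify the ODE directly. It then subtracts the no-jump mass $e^{-t}$ and divides by $S$ for $b\in[S]$, and gets the $\mask$ case by complement; your last-exit argument for uniformity on $[S]$ simply makes explicit a step the paper uses without comment.
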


\begin{proof}
    We assume \(0 < p_M < 1\) with the case \(p_M = 1\) interpreted by continuity from \(p_M \to 1\).
    To compute \(\nu(t, b)\), observe that it is enough to study a CTMC where all \([S]\) states are represented by a single state, and the rate matrix is as follows:
    \begin{equation*}
\cR =
\begin{pmatrix}
-1 & 1-p_M & p_M\\
1 & -1 & 0\\
0 & 0 & 0
\end{pmatrix}.
    \end{equation*}
    The order of the three states is \([S], \mathrm{REMASK}, \mask\). 
    Let \(B = {\footnotesize \begin{pmatrix}
-1 & 1-p_M \\
1 & -1 \\
\end{pmatrix}}\) be the top-left submatrix of \(\cR\) and observe that the corresponding part of \(\cR^k\) equals \(B^k\). Next, we have 
\begin{equation}
\label{eq:matrix-exp}
\begin{aligned}
    e^t\exp(t B) &= \sum_{k=0}^{\infty} \frac{t^k}{k!}\begin{pmatrix}
0 & 1-p_M \\
1 & 0 \\
\end{pmatrix}^k  \\
&= \sum_{m=0}^{\infty} \frac{t^{2m}}{(2m)!} (1 - p_M)^m I + \sum_{m=0}^{\infty} \frac{t^{2m + 1}}{(2m+1)!}(1-p_M)^m  \begin{pmatrix}
0 & 1-p_M \\
1 & 0 \\
\end{pmatrix} \\
&= \cosh(t\sqrt{1 - p_M}) I + \frac{1}{\sqrt{1 - p_M}} \sinh(t\sqrt{1 - p_M}) \begin{pmatrix}
0 & 1-p_M \\
1 & 0 \\
\end{pmatrix},
\end{aligned}
\end{equation}
and we compute for \(b \in [S]\) (recall that \(\rho = \sqrt{1 - p_M}\)),
\begin{equation*}
\begin{aligned}
        \nu(t, b) &= \frac{1}{S} \Pr(X_t^i \in [S] \mid \text{ jump on }[0, t]) \\
        &= \frac{1}{(1 - e^{-t}) S} \left(\Pr(X_t^i \in [S]) - \Pr(X_t^i \in [S] \text{ and no jump on }[0, t]\right) \\
        & = \frac{1}{(1 - e^{-t})S} \left(\left(\exp\left(t B \right)\right)_{11} - e^{-t}\right)\\
        &= \frac{e^{-t}}{(1-e^{-t})S} \left(\cosh(t \rho) - 1 \right).
\end{aligned}
    \end{equation*}
    Similarly,
    \begin{equation*}
        \begin{aligned}
        \nu(t, \mathrm{REMASK}) &=  \Pr(X_t^i = \mathrm{REMASK} \mid \text{ jump on }[0, t]) = \frac{\exp(tB)_{12}}{1 - e^{-t}}  = \frac{e^{-t} \rho \sinh(t \rho)}{1 - e^{-t}}.
\end{aligned}
    \end{equation*}
    Finally, \(\nu(t, \mask)\) follows from 
    \begin{equation*}
        \nu(t, \mask) + \nu(t, \mathrm{REMASK}) + \sum_{b \in [S]} \nu(t, b) = 1.
    \end{equation*}
\end{proof}

\begin{lemma}
\label{lem:cond-pr-remask}
Consider the remasking process with parameter \(p_M\) and let \(\rho = \sqrt{1 - p_M}\).
    Let \(0 \leq u < \ell\) with \(\Delta = \ell - u > 0\) and \(a, b \in \cV\). Then,
    \begin{equation*}
        \Pr(X_{\ell}^i = b \mid X_u^i = a) = \begin{cases}
             e^{-\Delta} \bI\{a = b\} + \frac{e^{-\Delta}}{S} (\cosh(\Delta \rho) - 1)&\text{if } a,b \in [S] \\
             e^{-\Delta} \rho \sinh(\Delta \rho)&\text{if } a \in [S], b = \mathrm{REMASK} \\
             1 - e^{-\Delta}(\cosh(\Delta \rho) + \rho \sinh(\Delta \rho))&\text{if } a \in [S], b = \mask, \\
             \frac{e^{-\Delta}}{S} \frac{\sinh(\Delta\rho)}{\rho} &\text{if } a = \mathrm{REMASK}, b \in [S], \\
            e^{-\Delta} \cosh(\Delta \rho) &\text{if } a = b = \mathrm{REMASK}, \\
             1 - e^{-\Delta}\left(\cosh(\Delta \rho) + \frac{\sinh(\Delta \rho)}{\rho}\right)  &\text{if } a = \mathrm{REMASK}, b = \mask \\
             1 &\text{if } a = b = \mask \\
             0 &\text{otherwise.} \\
        \end{cases}
    \end{equation*}
\end{lemma}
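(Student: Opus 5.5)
The plan is to compute the transition kernel of the single-coordinate CTMC as a matrix exponential, reusing the computation from the proof of \Cref{lem:nu-remask}. Order the states as \([S]\) (each token kept separately), \(\mathrm{REMASK}\), \(\mask\). Since the process is time-homogeneous, \(\Pr(X_\ell^i = b \mid X_u^i = a) = \exp(\Delta R)_{ab}\), where \(R\) is the one-coordinate rate matrix. Its off-diagonal entries are \(\qtok(a,b)\), and its diagonal is fixed so that each row sums to zero. Concretely, every state in \([S]\cup\{\mathrm{REMASK}\}\) leaves at total rate \(1\), and \(\mask\) is absorbing.

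The key reduction uses the fact that the only source of dependence on the initial token is the no-jump event. Starting from \(a\in[S]\), the first jump occurs at rate \(1\). Until then \(X^i=a\), which happens with probability \(e^{-\Delta}\) and gives the term \(e^{-\Delta}\bI\{a=b\}\). After leaving \([S]\), any later return lands on a state drawn uniformly from \([S]\), independently of \(a\). The process can therefore be lumped into the three-state chain with generator \(\cR\) from the proof of \Cref{lem:nu-remask}.

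From the lumped chain, \(\Pr(X_\ell^i\in[S]\mid X_u^i=a)=\exp(\Delta B)_{11}=e^{-\Delta}\cosh(\Delta\rho)\). Removing the no-jump contribution \(e^{-\Delta}\) and spreading the remainder uniformly over \([S]\) gives \(\frac{e^{-\Delta}}{S}(\cosh(\Delta\rho)-1)\) for the case \(a,b\in[S]\). This step needs care: a path that leaves and later returns exactly to \(a\) must be counted in the uniform part, not in the indicator. That is precisely why \(e^{-\Delta}\) is subtracted from the lumped probability of being in \([S]\).

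The remaining entries come directly from \Cref{eq:matrix-exp}, namely
\[
e^{\Delta}\exp(\Delta B)=\cosh(\Delta\rho)I+\rho^{-1}\sinh(\Delta\rho)\begin{pmatrix}0&\rho^2\\1&0\end{pmatrix}.
\]
This yields \(\exp(\Delta B)_{12}=e^{-\Delta}\rho\sinh(\Delta\rho)\) for the transition \([S]\to\mathrm{REMASK}\). It yields \(\exp(\Delta B)_{21}=e^{-\Delta}\sinh(\Delta\rho)/\rho\), which is divided by \(S\) for \(\mathrm{REMASK}\to b\in[S]\) because the return is uniform. It yields \(\exp(\Delta B)_{22}=e^{-\Delta}\cosh(\Delta\rho)\) for \(\mathrm{REMASK}\to\mathrm{REMASK}\).

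The \(\mask\) entries follow by conservation of mass, since row sums equal one. For \(a\in[S]\), summing the \([S]\) and \(\mathrm{REMASK}\) entries gives \(e^{-\Delta}(\cosh+\rho\sinh)\), so the \(\mask\) entry is \(1-e^{-\Delta}(\cosh+\rho\sinh)\). For \(a=\mathrm{REMASK}\), the same argument gives \(1-e^{-\Delta}(\cosh+\sinh/\rho)\). Absorption of \(\mask\) gives the last two cases.

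The case \(p_M=1\) (\(\rho=0\)) is handled by continuity, interpreting \(\sinh(\Delta\rho)/\rho\to\Delta\), exactly as in \Cref{lem:nu-remask}. The main, though mild, obstacle is justifying the lumping rigorously. I would do this either by checking that the partition \(\{[S],\mathrm{REMASK},\mask\}\) satisfies the strong lumpability condition for \(R\), or more simply by solving the forward equation for \(\Pr(X_t^i=b)\) with \(b\in[S]\) by symmetry.
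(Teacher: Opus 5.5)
Your proposal is correct and takes the same approach as the paper. The paper's proof simply reads all entries off the matrix exponential in \Cref{eq:matrix-exp} for the lumped three-state chain (as in the proof of \Cref{lem:nu-remask}), separating the no-jump mass $e^{-\Delta}$, spreading the remaining $[S]$-mass uniformly, and obtaining the $\mask$ entries from row sums; your justification of the lumping (strong lumpability, or the forward-equation symmetry argument) supplies details the paper leaves implicit.
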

\begin{proof}
    The proof follows from~\Cref{eq:matrix-exp}.
\end{proof}
\begin{lemma}
    \label{lem:sampling-alg-exact-remask}
    Consider the remasking process with parameter \(p_M\) and let \(\rho = \sqrt{1 - p_M}\).
    Recall \(\wh \mu_i(b)\) defined in~\Cref{eq:sampling-alg}:
    \begin{equation*}
        \wh \mu_i(b) \ \propto\ \Pr(X_{\ell}^i = x_{\ell}^i \mid X_u^i = b)\left(\wh \Pr(X_0^i = b \mid X_{\ell}^{-i} = x_{\ell}^{-i}) e^{-u} + \nu(u, b)(1 - e^{-u})\right).
    \end{equation*}
    Then, for \(x_{\ell}^i \in [S]\),
    \begin{equation*}
        \wh \mu_i(b) \ \propto\ \begin{cases}
            \left(\wh \Pr(X_0^i = b \mid X_{\ell}^{-i} = x_{\ell}^{-i}) + \frac{
            \cosh(u \rho) - 1
            }{S}\right) \left(\bI\{b = x_{\ell}^i\} + \frac{\cosh(\Delta\rho) - 1}{S}\right), \quad &\text{if } b \in [S], \\
            \frac{1}{S} \sinh(u\rho)\sinh(\Delta\rho), \quad &\text{if } b = \mathrm{REMASK}, \\
            0 ,&\text{if } b = \mask.
        \end{cases}
    \end{equation*}
    For \(x_{\ell}^i = \mathrm{REMASK}\),
    \begin{equation*}
        \wh \mu_i(b) \ \propto\ \begin{cases}
                        \left(\wh \Pr(X_0^i = b \mid X_{\ell}^{-i} = x_{\ell}^{-i}) + \frac{
            \cosh(u \rho) - 1
            }{S}\right) \rho \sinh(\Delta \rho), \quad &\text{if } b \in [S], \\
            \rho\sinh(u\rho)\cosh(\Delta\rho), \quad &\text{if } b = \mathrm{REMASK}, \\
            0 ,&\text{if } b = \mask.
        \end{cases}
    \end{equation*}
    For \(x_{\ell}^i = \mask\),
    \begin{equation*}
        \wh \mu_i(b) \ \propto\ \begin{cases}
                        e^{-u}\left(\wh \Pr(X_0^i = b \mid X_{\ell}^{-i} = x_{\ell}^{-i}) + \frac{
            \cosh(u \rho) - 1
            }{S}\right) \left(1 - e^{-\Delta}\left(\cosh(\Delta \rho) + \rho \sinh(\Delta\rho)\right)\right), \quad &\text{if } b \in [S], \\
            e^{-u}\rho\sinh(u\rho)\left(1 - e^{-\Delta} \left(\cosh(\Delta \rho) + \frac{\sinh(\Delta \rho)}{\rho}\right)\right), \quad &\text{if } b = \mathrm{REMASK}, \\
            1 - e^{-u}\left(\cosh(u \rho) + \rho \sinh(u \rho)\right) ,&\text{if } b = \mask.
        \end{cases}
    \end{equation*}
    Normalization factor should be computed separately for each of the three cases: \(x_{\ell}^i \in [S], x_{\ell}^i = \mathrm{REMASK},\) and \(x_{\ell}^i = \mask\).
\end{lemma}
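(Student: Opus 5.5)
The plan is a direct computation: plug the explicit formulas from \Cref{lem:nu-remask} and \Cref{lem:cond-pr-remask} into the defining expression for \(\wh \mu_i(b)\) in \Cref{eq:sampling-alg}, treating separately the three values of \(x_\ell^i \in [S]\), \(\mathrm{REMASK}\), \(\mask\). In each case I will drop any factor that does not depend on \(b\), since only proportionality is claimed.

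The first step is to simplify the second factor. For \(b \in [S]\), \Cref{lem:nu-remask} gives \(\nu(u,b)(1-e^{-u}) = \frac{e^{-u}}{S}(\cosh(u\rho)-1)\). Hence
\[
\wh \Pr(X_0^i = b \mid X_{\ell}^{-i} = x_{\ell}^{-i}) e^{-u} + \nu(u,b)(1-e^{-u}) = e^{-u}\Bigl(\wh \Pr(X_0^i = b \mid X_{\ell}^{-i} = x_{\ell}^{-i}) + \tfrac{\cosh(u\rho)-1}{S}\Bigr).
\]
For \(b \notin [S]\), the data term vanishes, because \(X_0^i \in [S]\) forces \(\wh\Pr(X_0^i=b\mid\cdot)=0\). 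The factor is then \(e^{-u}\rho\sinh(u\rho)\) for \(b=\mathrm{REMASK}\), and \(1-e^{-u}(\cosh(u\rho)+\rho\sinh(u\rho))\) for \(b=\mask\).

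The second step is to multiply by the first factor \(\Pr(X_\ell^i = x_\ell^i \mid X_u^i = b)\), read off from \Cref{lem:cond-pr-remask} with \(\Delta=\ell-u\). Each case then reduces as follows.

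\begin{itemize}
\item \(x_\ell^i\in[S]\). For \(b\in[S]\) the first factor is \(e^{-\Delta}(\bI\{b=x_\ell^i\}+\frac{\cosh(\Delta\rho)-1}{S})\). For \(b=\mathrm{REMASK}\) it is \(\frac{e^{-\Delta}}{S}\frac{\sinh(\Delta\rho)}{\rho}\). For \(b=\mask\) it is \(0\), since \(\mask\) is absorbing. Dividing out the common factor \(e^{-u}e^{-\Delta}\), the REMASK entry becomes \(\rho\sinh(u\rho)\cdot\frac{\sinh(\Delta\rho)}{S\rho}=\frac1S\sinh(u\rho)\sinh(\Delta\rho)\), which matches the claim.
\item \(x_\ell^i=\mathrm{REMASK}\). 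The first factors are \(e^{-\Delta}\rho\sinh(\Delta\rho)\) for \(b\in[S]\), \(e^{-\Delta}\cosh(\Delta\rho)\) for \(b=\mathrm{REMASK}\), and \(0\) for \(b=\mask\). Removing \(e^{-u-\Delta}\) gives the stated entries.
\item \(x_\ell^i=\mask\). Here no common factor is removed. The first factors are the \(\mask\)-column entries of \Cref{lem:cond-pr-remask}: \(1-e^{-\Delta}(\cosh(\Delta\rho)+\rho\sinh(\Delta\rho))\) for \(b\in[S]\), \(1-e^{-\Delta}(\cosh(\Delta\rho)+\sinh(\Delta\rho)/\rho)\) for \(b=\mathrm{REMASK}\), and \(1\) for \(b=\mask\). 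Multiplying by the second factors from the first step gives the stated expressions directly.
\end{itemize}

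The only point needing care is bookkeeping. In each case one must check that the constant removed (\(e^{-u-\Delta}\) in the first two cases, nothing in the third) is the same across all \(b\), so that proportionality within the case is preserved. This is also why the normalization must be done separately per case, as the statement notes. The case \(p_M=1\) (\(\rho=0\)) follows by continuity, interpreting \(\sinh(\Delta\rho)/\rho\to\Delta\).
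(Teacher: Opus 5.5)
Your proposal is correct and is essentially the paper's argument: the paper's proof likewise just substitutes \Cref{lem:nu-remask} and \Cref{lem:cond-pr-remask} into \Cref{eq:sampling-alg}, case by case. Your bookkeeping checks out entry by entry. In particular, you divide by the $b$-independent factor $e^{-u-\Delta}$ only when $x_\ell^i\in[S]$ or $x_\ell^i=\mathrm{REMASK}$, you keep $e^{-u}$ when $x_\ell^i=\mask$, and you read the transition probabilities with $b$ as the earlier state.
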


\begin{proof}
    The proof follows from~\Cref{lem:nu-remask,lem:cond-pr-remask}.
\end{proof}
\section{Technical preparations}
\label{sec:technical}
This section contains results that are used in the proofs. Importantly, all results here concern only the forward process, and not the reverse process. Recall the definitions of \emph{the total correlation} and \emph{the dual total correlation}: for a random vector \(X = (X^1, \ldots, X^d)\),
\begin{equation*}
\tc(X) \defn \sum_{i \in [d]} \ent(X^i) - \ent(X) \quad \text{and} \quad \dtc(X) \defn \ent(X) - \sum_{i \in [d]} \ent(X^i \mid X^{-i}).    
\end{equation*}
The following proposition is the basis of our main results, as it provides explicit expressions for the first and second partial derivatives of the mutual information.
\begin{proposition}
\label{prop:it-results}
Let $(X_t)_{t\in[0,T]}$ be a CTMC with rate matrices $(Q_t)_{t\in[0,T]}$ as in~\Cref{def:ctmc}. Then,
\begin{enumerate}[label=(\roman*)]
    \item 
    \[ \dtc(X_t) + \tc(X_t) = \sum_{i \in [d]} \mi(X_t^i\ ;\, X_t^{-i}); \]

    \item for $i \in [d]$, it satisfies 
    \[ \frac{\partial}{\partial v}\mi(X_t^i\ ;\,X_v^{-i}) = \bE_{x_t^i, x_v^{-i}} \left[\left(Q^{-i}_v \log \Pr(X_t^i = x_t^i \mid \cdot\ )\right)(x_v^{-i})\right]; \]

    \item for $i \in [d]$, we have
    \[ \frac{\partial}{\partial t}\mi(X_t^i\ ;\,X_v^{-i}) = \frac{\partial}{\partial t} \ent(X_t^i) + \bE_{x_t^i, x_v^{-i}} \left[\left(Q^{i}_t \log \Pr(\,\cdot \mid X_v^{-i} = x_v^{-i}\,)\right)(x_t^i)\right]; \]

    \item 
    \begin{align*}
        \sum_{i \in [d]} \frac{\partial}{\partial v} \mi(X_t^i\ ;\, X_v^{-i}) \bigg\rvert_{t = v} &= \frac{\d}{\d v} \dtc(X_v), \qquad \text{and} \qquad
        \sum_{i \in [d]} \frac{\partial}{\partial t} \mi(X_t^i\ ;\, X_v^{-i}) \bigg\rvert_{v = t} = \frac{\d}{\d t} \tc(X_t);
    \end{align*}

    \item for fixed $x_v^{-i}$, $j$, and $b$, let us define  
    \[ r(a) = \frac{\Pr(X_t^i = a \mid X_v^{-i} = x_v^{-i})}{\Pr(X_t^i = a \mid X_v^{-i} = x_v^{-i} \odot_j b)}.\]
    It then obeys
    \begin{align*}
        \frac{\partial^2}{\partial t \partial v} \mi(X_t^i\ ;\, X_v^{-i}) 
        &= \bE_{x_v^{-i}} \sum_{j \neq i} \sum_{a, b, c}\qtok_v(x_v^j, b) \qtok_t(a, c)\Pr(X_t^i = a \mid x_v^{-i} \odot_j b) \times \left[r(a) \log \frac{r(a)}{r(c)} + r(c) - r(a) \right].
    \end{align*}
\end{enumerate}
\end{proposition}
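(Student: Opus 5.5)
We plan to prove the five items in order, since (iv) and (v) build on (ii) and (iii). Item (i) is a pure entropy identity: writing $\mi(X_t^i;X_t^{-i}) = \ent(X_t^i) - \ent(X_t^i\mid X_t^{-i})$ and summing over $i$ gives
\[
\sum_i \ent(X_t^i) - \sum_i \ent(X_t^i\mid X_t^{-i}) = \bigl(\tc(X_t)+\ent(X_t)\bigr) + \bigl(\dtc(X_t)-\ent(X_t)\bigr),
\]
which is the claim.

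For (ii) and (iii) we work with the joint law of $(X_t^i, X_v^{-i})$ at two times. The forward process acts on coordinates independently, and the rate matrix has the stated coordinatewise form. Hence, conditionally on the initial value, the marginal processes $(X^i_s)_s$ and $(X^{-i}_s)_s$ are independent CTMCs with rate matrices $Q^i$ and $Q^{-i}$, and they stay independent Markov chains after conditioning on $X_0$. Consequently, for fixed $t$, the map $v\mapsto \Pr(X_t^i=a, X_v^{-i}=y)$ satisfies a Kolmogorov forward equation in $y$ with generator $Q_v^{-i}$. Symmetrically, in $t$ it satisfies a forward equation in $a$ with generator $Q_t^i$.

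To get (ii), we write $\mi = \sum p(a,y)\log\bigl(p(a\mid y)/p(a)\bigr)$ with $p(a)=\Pr(X_t^i=a)$ independent of $v$. Differentiating in $v$, the term involving $\partial_v$ of $\log p(a\mid y)$ integrates out: $\sum_a p(a,y)\partial_v \log p(a\mid y) = p(y)\,\partial_v\sum_a p(a\mid y)=0$. The remaining term is $\sum_{a,y}\partial_v p(a,y)\log p(a\mid y)$. Substituting the forward equation and moving the generator onto the test function by summation by parts yields $\bE[(Q_v^{-i}\log \Pr(X_t^i=x_t^i\mid\cdot))(x_v^{-i})]$. Item (iii) follows in the same way by decomposing $\mi = \ent(X_t^i) + \sum p(a,y)\log p(a\mid y)$ and differentiating in $t$. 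The term $\sum_a p(a\mid y)\partial_t\log p(a\mid y)$ again vanishes, and the generator $Q^i_t$ is moved onto $\log p(\cdot\mid y)$.

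For (iv), we set $t=v$ in (ii), sum over $i$, and compare with $\frac{\d}{\d v}\ent(X_v)$, using the coordinatewise structure $Q_v = \sum_j Q_v^{(j)}$. The key identity is
\[
\frac{\d}{\d v}\bigl[\ent(X_v) - \sum_i \ent(X_v^i\mid X_v^{-i})\bigr] = \sum_i \partial_v \mi(X_t^i;X_v^{-i})\big|_{t=v}.
\]
This follows by writing $\ent(X_v)=\ent(X_v^{-i})+\ent(X_v^i\mid X_v^{-i})$ and checking that the moving-context derivative accounts exactly for the $j\neq i$ generators. The TC statement is analogous using (iii) together with $\sum_i\partial_t\ent(X_t^i)$. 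As a consistency check, the sum of the two partial derivatives is $\frac{\d}{\d t}\sum_i\mi(X^i_t;X^{-i}_t)$, which agrees with (i).

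The main obstacle is (v). We plan to differentiate (ii) in $t$. Since the $t$-dependence of $\Pr(X_t^i=a\mid x_v^{-i})$ enters through the forward equation with $Q^i_t$ applied in $a$, we expand $Q_v^{-i}$ as $\sum_{j\neq i}\sum_b \qtok_v(x_v^j,b)\,[f(x_v^{-i}\odot_j b)-f(x_v^{-i})]$, using the self-loop convention for the diagonal. Then $\partial_t$ acts on both the weight $\Pr(X_t^i=a, x_v^{-i})$ and the inside of the logarithm. After a change of measure to $\Pr(X_t^i=a\mid x_v^{-i}\odot_j b)$, the expression should collapse to the Bregman-type combination $r(a)\log(r(a)/r(c)) + r(c)-r(a)$. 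The difficulty is the bookkeeping: one must check that the diagonal terms of $\qtok$ cancel so that summing over all $a,b,c$ (including self-loops) is legitimate, and that the $\partial_t\ent(X_t^i)$ contributions drop out. I would first verify the formula for $d=2$ and a single flip, then generalize.
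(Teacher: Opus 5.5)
Your plan follows the paper's proof. Items (i)--(iv) rest on the same generator formulas for the derivatives of $\ent(X_t^i\mid X_v^{-i})$; you derive these from the forward equations conditional on $X_0$, whereas the paper simply states them, and you obtain the TC half of (iv) directly rather than from (i) plus the DTC half. For (v) the paper likewise differentiates (ii) in $t$, letting $\partial_t$ act on both the weight and the logarithm.

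The bookkeeping you defer closes in a few lines. Set $P(a)=\Pr(X_t^i=a\mid x_v^{-i})$, $P'(a)=\Pr(X_t^i=a\mid x_v^{-i}\odot_j b)$ and $r=P/P'$. The weight term contributes $-\sum_c P(c)\,(Q_t^i\log r)(c)$, and the logarithm term contributes $\sum_c P'(c)\,(Q_t^i r)(c)$. Writing $P=rP'$, expanding $Q_t^i$ through $\qtok_t$ and relabeling gives exactly the bracket in (v). The $a=c$ terms vanish, so the self-loop convention is immaterial. No $\partial_t\ent(X_t^i)$ term arises, because (ii) contains none.
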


We use the following lemma to upper bound Bregman (Itakura-Saito) divergence by the KL divergence.
\begin{lemma}
\label{lem:ratio-ub}
    Let \(p, q \in [\alpha, 1]\) for some \(\alpha > 0\). Then,
    \begin{equation*}
        \frac{\frac p q  - 1 - \log \frac p q}{q\left(\frac p q \log \frac p q - \frac p q + 1 \right)}  \leq \frac 1 {\min(p, q)} \leq \frac 1 \alpha.   
    \end{equation*}
    The case \(p = q\) is interpreted by continuity.
\end{lemma}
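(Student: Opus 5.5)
The plan is to reduce the claim to a one-variable inequality in the ratio \(x \defn p/q \in (0,\infty)\) and then verify it by elementary calculus, splitting into the cases \(x \geq 1\) and \(x < 1\). The second inequality \(\frac{1}{\min(p,q)} \leq \frac{1}{\alpha}\) is immediate from \(p, q \geq \alpha\). For the first, note that for \(x \neq 1\) the function \(g(x) \defn x\log x - x + 1\) is strictly positive, since it is convex with \(g(1) = g'(1) = 0\). Multiplying through by \(q\, g(x) > 0\), and writing \(f(x) \defn x - 1 - \log x\), the claim becomes
\begin{equation*}
f(x) \;\leq\; \frac{q}{\min(p,q)}\, g(x) \;=\; \max\!\left(1, \tfrac{1}{x}\right) g(x).
\end{equation*}
Here I use that \(q/\min(p,q)\) equals \(1\) when \(p \geq q\) and equals \(q/p = 1/x\) when \(p < q\). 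The case \(p = q\), i.e.\ \(x = 1\), is handled by continuity, since both \(f\) and \(g\) vanish to second order there with \(f''(1) = g''(1) = 1\).

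\emph{Case \(x \geq 1\).} I need \(h(x) \defn g(x) - f(x) = x\log x - 2x + 2 + \log x \geq 0\). We have \(h(1) = 0\) and \(h'(x) = \log x + \frac1x - 1\). The derivative is nonnegative because \(\log y \leq y - 1\) applied to \(y = 1/x\) gives \(\log x \geq 1 - \frac1x\). Hence \(h\) is nondecreasing on \([1,\infty)\), so \(h \geq 0\) there.

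\emph{Case \(0 < x < 1\).} I need \(g(x)/x \geq f(x)\), i.e.\ \(g(x) - x f(x) \geq 0\). This expression simplifies to \(k(x) \defn 2x\log x + 1 - x^2\). Again \(k(1) = 0\), and \(k'(x) = 2\log x + 2 - 2x \leq 0\) by \(\log x \leq x - 1\). Therefore \(k\) is nonincreasing on \((0,1]\), so \(k(x) \geq k(1) = 0\).

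There is no real obstacle. The only step needing care is the bookkeeping identity \(q/\min(p,q) = \max(1, 1/x)\), which determines which polynomial-log inequality must be checked in each regime. After that, both cases follow from the single inequality \(\log y \leq y - 1\).
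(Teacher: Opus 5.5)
Your proof is correct and follows the paper's argument. Both reduce to the ratio \(x = p/q\), check \(x - 1 - \log x \leq x\log x - x + 1\) for \(x \geq 1\) and \(x(x - 1 - \log x) \leq x\log x - x + 1\) for \(x \leq 1\), and handle \(p = q\) by continuity. The paper only asserts these two elementary inequalities; your monotonicity arguments for \(h\) and \(k\) via \(\log y \leq y - 1\) supply the verification it leaves implicit.
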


The next definition introduces \(\cF(t, b)\), which informally quantifies the probability mass that is moved to the state \(b\) at time \(t\), given that a jump to the state \(b\) appeared. We recall that, in the uniform process case, this includes a possible self-loop jump \(b \to b\).
\begin{definition}
    \label{eq:def-ftb}
    Let \(t \geq 0\) and \(b \in \cV\). Define \(\cF: \bR_{\geq 0} \times \cV \to [0, 1]\) as follows:
    \begin{equation*}
    \inflow(t, b) \defn \frac{\d}{\d t} \Pr(X_t^i = b) + \Pr(X_t^i = b) = \sum_{a \in \cV} \qtok(a, b) \Pr(X_t^i = a).
    \end{equation*}
\end{definition}

\begin{lemma}
\label{lem:inflow-exact}
Let \(0 < t \leq T\) and \(b \in [S]\).
    Recall \(\nu(t, b) \defn \Pr(X_t^i = b \mid \exists \text{ a jump at the } i\text{-th coordinate on }[0, t]).\) Then,
    \begin{enumerate}
        \item for the uniform process, \(\inflow(t, b) = \nicefrac{1}{S}\),
        \item for the remasking process, \begin{equation*}
        \inflow(t, b) = \frac{e^{-t}}{S} \sqrt{1 - p_M} \sinh(t\sqrt{1 - p_M} ).
    \end{equation*}
    \end{enumerate}
    Consequently, for both processes, \(\nu(t, b) \gtrsim \inflow(t, b)\).
\end{lemma}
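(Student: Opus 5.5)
For both processes the plan is to compute $\inflow(t,b)$ from the second expression in its definition, $\inflow(t,b)=\sum_{a\in\cV}\qtok(a,b)\Pr(X_t^i=a)$, and then compare the result with the closed form of $\nu(t,b)$ from~\Cref{eq:pt-unif} and~\Cref{lem:nu-remask}.

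\emph{Uniform process.} With the self-loop convention, $\qtok(a,b)=1/S$ for every $a\in[S]$. Hence
\[
\inflow(t,b)=\frac1S\sum_{a}\Pr(X_t^i=a)=\frac1S .
\]
Since $\nu(t,b)=1/S$, we get $\nu(t,b)=\inflow(t,b)$, and the comparison is immediate.

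\emph{Remasking process.} For $b\in[S]$ the only state with a positive rate into $b$ is $\mathrm{REMASK}$, with rate $1/S$. Therefore $\inflow(t,b)=\Pr(X_t^i=\mathrm{REMASK})/S$. Since $X_0^i\in[S]$, I would use the reduced three-state chain from the proof of~\Cref{lem:nu-remask}. The matrix-exponential formula~\eqref{eq:matrix-exp} gives $\Pr(X_t^i=\mathrm{REMASK})=(\exp(tB))_{12}=e^{-t}\rho\sinh(t\rho)$ with $\rho=\sqrt{1-p_M}$. This yields the claimed expression. If $p_M=1$, then $\rho=0$ and $\inflow(t,b)=0$, so that case is trivial.

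The only step needing care is the comparison $\nu(t,b)\gtrsim\inflow(t,b)$ in the remasking case, uniformly in $t$ and $\rho\in(0,1]$. After cancelling the common factor $e^{-t}/S$, and using~\Cref{lem:nu-remask}, the claim reduces to showing
\[
\cosh(x)-1\ \ge\ c\,(1-e^{-t})\,\rho\sinh(x),\qquad x=t\rho,
\]
for an absolute constant $c>0$. I would use $1-e^{-t}\le\min(1,t)$ and split into two cases.
\begin{itemize}
\item If $x\ge1$, then $\cosh x-1\ge c_0\sinh x$ for an absolute $c_0>0$ (the ratio is increasing in $x$ and tends to $1$). Combined with $\rho\le1$ and $1-e^{-t}\le 1$, this gives the bound.
\item If $x\le1$, then $\cosh x-1\ge x^2/2$. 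On the other side, $t\rho\sinh x\le t\rho\,x\sinh(1)=x^2\sinh(1)$, using $\sinh x\le x\sinh 1$ on $[0,1]$.
\end{itemize}
So $c=\min\bigl(c_0,\tfrac1{2\sinh 1}\bigr)$ works in both cases, and this completes the proof.
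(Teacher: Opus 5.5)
Your proof is correct. The computation of $\inflow(t,b)$ in both cases matches the paper's: in the remasking case the only inflow into $b\in[S]$ is from $\mathrm{REMASK}$, and $\Pr(X_t^i=\mathrm{REMASK})=e^{-t}\rho\sinh(t\rho)$ follows from the matrix exponential.

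You differ from the paper only in the final comparison for the remasking case. The paper forms the ratio $f(t)=\nu(t,b)/\inflow(t,b)=\frac{\cosh(t\rho)-1}{\rho(1-e^{-t})\sinh(t\rho)}$. It asserts that $f$ is strictly increasing in $t$ with $\lim_{t\to 0}f(t)=1/2$, and concludes $\nu(t,b)\ge \inflow(t,b)/2$. This yields the sharp constant $1/2$, but the monotonicity is stated without proof. Writing $f(t)=\tanh(t\rho/2)/(\rho(1-e^{-t}))$, it reduces to $\sinh(t\rho)\le \rho(e^t-1)$.

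Your case split on $x=t\rho$ avoids any monotonicity claim and is fully elementary and self-contained. The cost is a smaller constant, $\min(\tanh(1/2),1/(2\sinh 1))\approx 0.42$ instead of $1/2$. This does not matter, since only $\nu\gtrsim\inflow$ is used later.
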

\begin{proof}
    For the uniform process, the result is immediate, as \(\qtok(a, b) = \nicefrac{1}{S}\) for all \(a, b \in [S]\) which implies \(\cF(t, b) = \nicefrac{1}{S}\) for all \(b \in [S]\). 
    For the remasking process, for \(b \in [S]\), the only non-zero element is \(\qtok(\mathrm{REMASK}, b) = \nicefrac{1}{S}\), which gives
    \begin{equation}
        \inflow(t, b) = \frac{\Pr(X_t^i = \mathrm{REMASK})}{S}.
    \end{equation}
Using~\Cref{lem:nu-remask}, we compute
\begin{equation*}
    \Pr(X_t^i = \mathrm{REMASK}) = \nu(t, \mathrm{REMASK})(1 - e^{-t}) = e^{-t} \sqrt{1 - p_M} \sinh(t \sqrt{1 - p_M}),
\end{equation*}
    and thus
    \begin{equation*}
        \frac{\nu(t, b)}{\inflow(t, b)} = \frac{\cosh(t \sqrt{1 - p_M}) - 1}{\sqrt{1 - p_M}(1 - e^{-t})\sinh(t \sqrt{1 - p_M})} =: f(t).
    \end{equation*}
Observe that \(f(t)\) is a strictly increasing function and \(\lim_{t \to 0} f(t) = 1/2\).
This shows that \(\nu(t, b) / \inflow(t, b) \geq 1/2\) and concludes the proof.
\end{proof}

Importantly, for both uniform and remasking processes, \(\cF(t, b)\) quantifies not only the marginal probability mass (averaged over the initial state \(X_0^i\)) moved to the state \(b\), but also when conditioning on a specific value \(X_0^i = c\) or on the context \(X_u^{-i} = x^{-i}\) at any time \(u\), as the following lemma shows.

\begin{lemma}
\label{lem:sum-prob}
    Consider the uniform or remasking process. For \(i \in [d]\), \(b \in \cV\), \(c\) in the support of \(X_0^i\), and \(t \geq 0\), we have
    \begin{equation}
    \label{eq:indep-initial}
        \sum_{a\in\cV} \qtok(a, b) \Pr(X_t^i = a \mid X_0^i = c) = \inflow(t, b).
    \end{equation}
    Consequently, for  \(i \in [d]\), \(b \in \cV\), \(x \in \cV^d\), and \(t, u \geq 0\), we have
    \begin{equation}
    \label{eq:indep-second}
        \sum_{a\in\cV} \qtok(a, b) \Pr(X_t^i = a \mid X_{u}^{-i} = x^{-i}) = \inflow(t, b),   
    \end{equation} 
    as long as \(\Pr(X_u^{-i} = x^{-i}) > 0\). Furthermore, the same holds for \(\wh \Pr(\cdot):\)
    \begin{equation*}
        \inflow(t, b) = \sum_{a\in\cV} \qtok(a, b) \wh \Pr(X_t^i = a \mid X_0^i = c) = \sum_{a\in\cV} \qtok(a, b) \wh \Pr(X_t^i = a \mid X_{u}^{-i} = x^{-i}).
    \end{equation*}
\end{lemma}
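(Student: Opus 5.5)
The first identity~\eqref{eq:indep-initial} will be proved by decomposing on whether the \(i\)-th coordinate has jumped on \([0,t]\), and the remaining claims will then follow from it by linearity in the initial distribution. The key structural fact is that, for both processes, the new state after any jump is independent of the initial value \(X_0^i\). For the uniform process this holds by construction: each jump, including self-loops, draws a fresh \(\mathrm{Unif}([S])\) value. For the remasking process, the first jump out of \(c\in[S]\) goes to \(\mask\) or \(\mathrm{REMASK}\) with probabilities \(p_M,1-p_M\), independently of \(c\), and the subsequent dynamics are also independent of \(c\).

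Conditioning on \(X_0^i=c\), we write
\[
\Pr(X_t^i=a\mid X_0^i=c)=e^{-t}\bI\{a=c\}+(1-e^{-t})\nu(t,a),
\]
exactly as in~\eqref{eq:pr-t-from-pr-0}. The sum \(\sum_a\qtok(a,b)\Pr(X_t^i=a\mid X_0^i=c)\) then becomes \(e^{-t}\qtok(c,b)+(1-e^{-t})\sum_a\qtok(a,b)\nu(t,a)\), and it remains to show that this does not depend on \(c\).

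\begin{itemize}
\item In the uniform case, \(\qtok(c,b)=1/S\) for all \(c\) (using the self-loop convention) and \(\sum_a\nu(t,a)/S=1/S\), so the sum equals \(1/S\).
\item In the remasking case with \(b\in[S]\), the only nonzero entry is \(\qtok(\mathrm{REMASK},b)=1/S\), and \(c\in[S]\) since \(c\) is in the support of \(X_0^i\). Hence \(\qtok(c,b)=0\), and the sum equals \((1-e^{-t})\nu(t,\mathrm{REMASK})/S\), which is independent of \(c\).
\item For \(b\in\{\mask,\mathrm{REMASK}\}\), the term \(\qtok(c,b)\) is the same for every \(c\in[S]\), so \(c\)-independence again holds.
\end{itemize}

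Since the expression is \(c\)-independent, averaging over \(X_0^i\) shows that it equals \(\sum_a\qtok(a,b)\Pr(X_t^i=a)=\inflow(t,b)\) by \Cref{eq:def-ftb}. I expect this bookkeeping over auxiliary states to be the only real care point. In particular, I need to check that the diagonal and self-loop conventions (\(\qtok(a,a)=1/S\) in the uniform case, \(\qtok(\mask,\mask)=1\)) are used consistently in both \(\inflow\) and the sum.

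For~\eqref{eq:indep-second}, I use the forward process's coordinatewise independence together with the Markov structure: conditionally on \(X_0^i\), the trajectory of coordinate \(i\) is independent of \((X_s^{-i})_s\). This gives
\[
\Pr(X_t^i=a\mid X_u^{-i}=x^{-i})=\sum_c\Pr(X_t^i=a\mid X_0^i=c)\Pr(X_0^i=c\mid X_u^{-i}=x^{-i}).
\]
Plugging this in and applying~\eqref{eq:indep-initial} term by term, the mixture weights sum to one and we obtain \(\inflow(t,b)\). This step requires \(\Pr(X_u^{-i}=x^{-i})>0\) so that the conditioning is well defined, and only values \(c\) in the support of \(X_0^i\) receive positive weight.

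For the \(\wh\Pr\) statements, \Cref{eq:whpr-eq-pr} gives \(\wh\Pr(X_t^i=a\mid X_0^i=c)=\Pr(X_t^i=a\mid X_0^i=c)\), and \Cref{eq:whpt-remask} writes \(\wh\Pr(X_t^i=\cdot\mid X_u^{-i}=x^{-i})\) as the same mixture, with weights \(\wh\Pr(X_0^i=c\mid\cdot)\) supported on \([S]\). Since the identity above holds for every \(c\in[S]\), and any probability vector on \([S]\) is a valid set of mixture weights, the conclusion carries over unchanged.
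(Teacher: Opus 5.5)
Your proposal is correct and follows essentially the paper's argument. The paper uses the same jump/no-jump decomposition and subtracts the marginal version, which reduces \eqref{eq:indep-initial} to checking \(\qtok(c,b)=\sum_a\qtok(a,b)\Pr(X_0^i=a)\) case by case (the same as your \(c\)-independence check); it then gets \eqref{eq:indep-second} and the \(\wh\Pr\) versions from the same mixture over \(X_0^i\). Your explicit remark that the identity holds for every \(c\in[S]\), so the learned weights \(\wh\Pr(X_0^i=c\mid\cdot)\) may charge values outside the support of \(X_0^i\), slightly tightens the paper's one-line treatment of the \(\wh\Pr\) case.
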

\section{Proof of our main results}
\label{sec:main-proofs}

\subsection{Proof of~\Cref{thm:main}}
    \Cref{prop:sampler} shows that a sample obtained from~\Cref{alg:sampler} has the same distribution as the one obtained using a CTMC \(\wh Q_t\) defined by~\Cref{eq:our-shat}.  In the following we analyze this CTMC. Using~\Cref{thm:main-err-decomp}, we have
    \begin{equation}
    \label{eq:main-thm-proof1}
        \KL(q_{T - t_N} \| p_{\mathrm{output}}) \leq \KL(q_T \| q_{\mathrm{noise}}) + \sum_{k=0}^{N -1}\cL_{\mathrm{approx}}^{(k)} + \sum_{k=0}^{N - 1}\cL_{\mathrm{discr}}^{(k)}.
    \end{equation}
    In view of \Cref{prop:main-discr}, the discretization error satisfies 
    \begin{equation}
    \label{eq:main-thm-proof2}
    \begin{aligned}
        \sum_{k=0}^{N - 1}\cL_{\mathrm{discr}}^{(k)} &\lesssim \sum_{k=0}^{N - 1} \frac{t_{k+1} - t_k}{\min(1, T - t_{k+1})} (\dtc(X_{T - t_{k+1}}) - \dtc(X_{T - t_k})) \\
        & \leq \kappa \sum_{k = 0}^{N - 1} (\dtc(X_{T - t_{k+1}}) - \dtc(X_{T - t_k})) \\
        & \leq \kappa \dtc(X_{T - t_{N}}) \\
        & \leq \kappa \dtc(X_0).
    \end{aligned}
    \end{equation}
    Next,~\Cref{lem:approx-to-se} gives
    \begin{equation}
    \label{eq:main-thm-proof3}
        \sum_{k=0}^{N - 1} \cL_{\mathrm{approx}}^{(k)} \leq \sum_{k=0}^{N - 1} (t_{k+1} - t_k) \cL_{\mathrm{SE}}(T - t_{k}, \wh s_{T - t_{k}}, s_{T - t_{k}}) \leq \escore,
    \end{equation}
    where the last inequality follows from~\Cref{asm:se-loss}.
    Collecting~\Cref{eq:main-thm-proof1,eq:main-thm-proof2,eq:main-thm-proof3} concludes the proof of~\Cref{eq:thm-kl-decomp}.

    Next, observe that under our condition on the step size, we can pick \(\kappa = O\left(\frac{T + \log \delta^{-1}}{N}\right),\) where \(\delta = T - t_N\) is the early stopping parameter. Let \(q_{\mathrm{noise}} = \mu^{\otimes d}\), where:
    \begin{equation*}
        \mu(b) = \bI\{b \in [S]\} \frac{e^{-T}}{S} + \nu(T, b) (1 - e^{-T}).
    \end{equation*}
    We have 
    \begin{equation}
    \label{eq:kl_qT_noise_1}
        \KL(q_T \| q_{\mathrm{noise}}) = \KL\left(q_T \bigg\lVert \bigotimes_{i\in[d]} q_T^i\right) + \sum_{i \in [d]} \KL(q_T^i \| \mu),
    \end{equation}
    where \(q_T^i\) is the \(i\)-th marginal of \(q_T\). Let \(X, Y \sim q_{0,T}\). By the convexity of the KL divergence,
    \begin{equation}
    \label{eq:kl_qT_noise_2}
        \KL(q_T^i \| \mu) \leq e^{-T} \KL\left(q_0^i \| \mathrm{Unif}([S])\right) = e^{-T}\left(\log S - \ent(X_i)\right).
    \end{equation}
     We also have
    \begin{equation}
    \label{eq:kl_qT_noise_3}
    \begin{aligned}
        \KL\left(q_T \bigg\lVert \bigotimes_{i\in[d]} q_T^i\right) &= \tc(Y) \\
        & = \sum_{i \in [d]} \ent(Y_i) - \ent(Y) \\
        & \leq \sum_{i \in [d]} \ent(Y_i) - \ent(Y \mid X) \\
        & = \sum_{i \in [d]} \left( \ent(Y_i) - \ent(Y_i \mid X_i)\right),
    \end{aligned}
    \end{equation}
 where the last line follows as the coordinates of \(Y\) are conditionally independent given \(X\). For fixed \(i \in [d]\),
 \begin{equation}
 \label{eq:kl_qT_noise_4}
     \ent(Y_i) - \ent(Y_i \mid X_i) = \mi(X_i\, ; Y_i) \leq \mi(X_i\, ; Y_i, B_i),
 \end{equation}
 for \(B_i = \bI\{\text{jump at the } i\text{-th coordinate on [0, T]}\}\) with \(B_i \sim \mathrm{Bern}(1 - e^{-T})\). As \(B_i \indep X_i, \) we obtain
 \begin{equation}
 \label{eq:kl_qT_noise_5}
     \mi(X_i\, ; Y_i, B_i) = \mi(X_i\, ; Y_i \mid B_i) = e^{-T} \ent(X_i).
 \end{equation}
 Collecting~Eqns.~(\ref{eq:kl_qT_noise_1}) to~(\ref{eq:kl_qT_noise_5})~gives
 \begin{equation*}
     \KL(q_T \| q_{\mathrm{noise}}) \leq e^{-T} \sum_{i \in [d]} \ent(X_i) + e^{-T} \sum_{i \in [d]}\left(\log S - \ent(X_i)\right) = e^{-T} d \log S.
 \end{equation*}
    Choosing \(T = O \left(\log (\varepsilon^{-1}d \log S)\right)\) concludes the proof.

\subsection{Proof of~\Cref{prop:sampler}}
\label{sec:pf-prof-sampler}

Fix a discretization interval $[u,\ell]$ and condition on the current
state $x_\ell$. By Eqn.~\eqref{eq:our-shat}, for a transition that changes
only coordinate $i$, the rate
\[
    \widehat Q_t(x,x\odot_i b)
\]
depends on the frozen context $x_\ell^{-i}$, but not on the evolving
coordinates $x^{-i}$. Consequently, on the interval $[u,\ell]$,
the approximate CTMC decomposes into $d$ independent one-dimensional
CTMCs. It therefore suffices to verify that, for each $i\in[d]$,
the update in Algorithm~\ref{alg:sampler} coincides with the transition
of the corresponding one-dimensional CTMC.

Fix $i\in[d]$ and $x_\ell^{-i}\in V^{d-1}$. To make the argument
explicit, introduce an auxiliary one-dimensional forward process
$(Z_t^i)_{t\in[0,\ell]}$ with rate matrix $Q$ and initial distribution
\[
    \Pr(Z_0^i=b)
    =
    \widehat{\Pr}
    \bigl(X_0^i=b\mid X_\ell^{-i}=x_\ell^{-i}\bigr).
\]
By construction and Eqn.~\eqref{eq:whpt-remask}, its marginal at time
$t$ is
\[
    \Pr(Z_t^i=b)
    =
    \widehat{\Pr}
    \bigl(X_t^i=b\mid X_\ell^{-i}=x_\ell^{-i}\bigr).
\]
Hence, the score in Eqn.~\eqref{eq:our-shat} is precisely the score of
this auxiliary one-dimensional process:
\[
    \widehat s_t(x\odot_i b,x)
    =
    \frac{\Pr(Z_t^i=b)}
         {\Pr(Z_t^i=x^i)}.
\]
It follows from the standard time-reversal formula for CTMCs that the
restriction of $\widehat Q_t$ to coordinate $i$ is exactly the reverse
generator of $(Z_t^i)$.

We initialize this reverse process at the observed endpoint
$Z_\ell^i=x_\ell^i$. Therefore, its distribution at time $u$ is $\Pr(Z_u^i=b\mid Z_\ell^i=x_\ell^i).$
By Bayes' rule,
\[
\begin{aligned}
    \Pr(Z_u^i=b\mid Z_\ell^i=x_\ell^i)
    &~\propto~
    \Pr(Z_\ell^i=x_\ell^i\mid Z_u^i=b)
    \Pr(Z_u^i=b) \\
    &=
    \Pr(X_\ell^i=x_\ell^i\mid X_u^i=b)
    \left[
        \widehat{\Pr}
        \bigl(X_0^i=b\mid X_\ell^{-i}=x_\ell^{-i}\bigr)e^{-u}
        +\nu(u,b)(1-e^{-u})
    \right],
\end{aligned}
\]
where the last equality follows from
Eqn.~\eqref{eq:whpt-remask}. The right-hand side is exactly
$\mu_i(b)$ in Eqn.~\eqref{eq:sampling-alg}. Thus, the update of
coordinate $i$ in Algorithm~\ref{alg:sampler} has the same law as the
corresponding coordinate of the CTMC generated by $\widehat Q_t$.

Since the coordinate processes are independent on each discretization
interval conditional on $x_\ell$, Algorithm~\ref{alg:sampler}, which
samples all coordinates independently in parallel, has the same
transition kernel from $x_\ell$ to $x_u$ as the CTMC with rate matrices
$\widehat Q_t$. Applying this argument successively over all
discretization intervals proves the claim.

\subsection{Proof of~\Cref{thm:main-err-decomp}}
\label{sec:pf-decomposition}
For the divergence \(\breg(\alpha, \gamma) = \frac \alpha \gamma - 1 - \log \frac \alpha \gamma\), a straightforward calculation shows, for any \(\alpha, \beta, \gamma > 0\):
\begin{equation*}
    \gamma \breg(\alpha, \gamma) = \beta \breg(\alpha, \beta) + \gamma\breg(\beta, \gamma) + (\beta - \gamma) \log \frac \alpha \beta.
\end{equation*}
Fix \(x_t \in \cV^d\) and let \(y = x \odot_i b\)  for \(i \in [d]\), \(b \in \cV\), such that \(Q_t(y, x_t) > 0\). We pick \(
    \alpha = \wh Q_{T-t}(x_t, y)\), \(\beta = \wt Q_{T-t}(x_t, y)\), and \(\gamma = \back Q_{T-t}(x_t, y)\)
and note that both \(\wh Q_{T-t}(x_t, y)\) and \(\wt Q_{T-t}(x_t, y)\) are functions of \(x_{\ell}\) and \(x_t^i\) but not of \(x_{t}^{-i}\). The law of total expectation gives
\begin{equation*}
\begin{aligned}
    &\bE_{x_{\ell}, x_t} \left[ \left(\wt Q_{T-t}(x_t, y) - \back Q_{T-t}(x_t, y)\right) \log \frac {\wh Q_{T-t}(x_t, y)} {\wt Q_{T-t}(x_t, y)} \right] \\
    &\qquad = \bE_{x_{\ell}, x_t^i} \left[ \left( \wt Q_{T-t}(x_t, y) - \bE_{x_t^{-i}}\left[\back Q_{T-t}(x_t, y) \mid X_
    {\ell} = x_{\ell}, X_t^i = x_{t}^i\right] \right) \log \frac {\wh Q_{T-t}(x_t, y)} {\wt Q_{T-t}(x_t, y)} \right] = 0,
\end{aligned}
\end{equation*}
as by~\Cref{prop:cond-exp}, \(\wt Q_{T-t}(x_t, y) = \bE_{x_{t}^{-i}}\left[\back Q_{T-t}(x_t, y) \mid X_{\ell} = x^{\ell}, X_t^i = x_{t}^i\right]\)\footnote{As \(X_t^{-i} \indep X_{\ell}^i \mid X_t^i\) from the Markovian property, conditioning on \(X_{\ell} = x_{\ell}\) and \(X_t^i = x_t^{i}\) is equivalent to conditioning on \(X_{\ell}^{-i} = x_{\ell}^{-i}\) and \(X_t^i = x_t^i\).}.

Putting these together, we have obtained the following decomposition:
        \begin{equation*}
        \begin{aligned}
        &\bE_{x_{\ell}, x_t}\  \Bigl[\back Q_{T-t}(x_t, y)\breg\Bigl(\wh Q_{T-t}(x_t, y), \back Q_{T-t}(x_t, y)\Bigr)\Bigr] \\
        &\quad = \bE_{x_{\ell}, x_t}\  \Bigl[\wt Q_{T-t}(x_t, y) \breg \Bigl(\wh Q_{T-t}(x_t, y), \wt Q_{T-t}(x_t, y)\Bigr)\Bigr] +  \bE_{x_{\ell}, x_t}\Bigl[\back Q_{T-t}(x_t, y) \breg \Bigl(\wt Q_{T-t}(x_t, y), \back Q_{T-t}(x_t, y)\Bigr)\Bigr].
    \end{aligned}
    \end{equation*}
    Using Girsanov's change-of-measure theorem~(\cite{campbell2022continuous}), we arrive at 
\begin{equation*}
\begin{aligned}\KL(q_{T - t_N} \| p_{\mathrm{output}}) \leq \KL(q_T \| q_{\mathrm{noise}}) &+ \int_{t_N}^{T} \bE \sum_{y \neq x_t} \back Q_{T-t}(x_t, y)\breg\Bigl(\wh Q_{T-t}(x_t, y), \back Q_{T-t}(x_t, y)\Bigr) \d t \\
 = \KL(q_T \| q_{\mathrm{noise}}) &+ \sum_{k=0}^{N - 1} \int_{T - t_{k+1}}^{T - t_k} \bE\sum_{y \neq x_t} \  \wt Q_{T-t}(x_t, y) \breg \Bigl(\wh Q_{T-t}(x_t, y), \wt Q_{T-t}(x_t, y)\Bigr) \d t \\
 &+ \sum_{k=0}^{N - 1} \int_{T - t_{k+1}}^{T - t_k} \bE\sum_{y \neq x_t} \  \back Q_{T-t}(x_t, y) \breg \Bigl(\wt Q_{T-t}(x_t, y), \back Q_{T-t}(x_t, y)\Bigr) \d t \\
 = \KL(q_T \| q_{\mathrm{noise}}) &+ \sum_{k=0}^{N -1}\cL_{\mathrm{approx}}^{(k)} + \sum_{k=0}^{N - 1}\cL_{\mathrm{discr}}^{(k)},
\end{aligned}
    \end{equation*}
    which concludes the proof.

\subsection{Discretization error control}
\label{sec:discr-proof}

\subsubsection{Proof of~\Cref{thm:main-discr}} 
Fix \(t \in [u, \ell]\).
    Using~\Cref{prop:cond-exp}, we arrive at
    \begin{equation*}
        \begin{aligned}
            &\bE_{x_t, x_{\ell}}\sum_{y \neq x_t} \back Q_{T - t}(x_t, y) \breg \Bigl(\wt Q_{T - t}(x_t, y), \back Q_{T - t}(x_t, y)\Bigr) \\
            &\quad =\bE_{x_t, x_{\ell}}\sum_{y \neq x_t} \left[\wt Q_{T - t}(x_t, y) - \back Q_{T - t}(x_t, y) - \back Q_{T - t}(x_t, y) \log \left(\frac{\wt Q_{T - t}(x_t, y)}{\back Q_{T - t}(x_t, y)}\right)\right] \\
            &\quad = \bE_{x_t, x_{\ell}}\sum_{y \neq x_t} \back Q_{T - t}(x_t, y) \log \frac{\back Q_{T - t}(x_t, y)}{\wt Q_{T - t}(x_t, y)} \\
            &\quad  = \bE_{x_t, x_{\ell}}\sum_{y \neq x_t}  Q_t(y, x_t) s_t(y, x_t) \log \frac{s_t(y, x_t)}{\wt s_t(y, x_t)} \\
            &\quad = \bE_{x_t, x_{\ell}} \sum_{i \in [d]} \sum_{b \in \cV} \qtok_t(b, x_t^i) s_t(x_t \odot_i b, x_t) \log \frac{s_t(x_t \odot_i b, x_t)}{\wt s_t(x_t \odot_i b, x_t)}.
    \end{aligned}
    \end{equation*}
    In the last line, we add the term corresponding to \(b = x_t^i\), as both \(s_t(x_t, x_t)\) and \(\wt s_t(x_t, x_t)\) equal to 1, and therefore this term equals to 0.
    Furthermore, note that the overall expression does not depend on \(x_{\ell}^i\), as 
    \begin{equation*}
        \wt s_t(x_t \odot_i b, x_t) = \frac{\Pr(X_t^i =  b \mid X_{\ell}^{-i} = x_{\ell}^{-i})}{\Pr(X_t^i =  x_t^i \mid X_{\ell}^{-i} = x_{\ell}^{-i})}.
    \end{equation*}
    We obtain
    \begin{equation*}
        \bE_{x_t, x_{\ell}} \sum_{i \in [d]} \sum_{b \in \cV} \qtok_t(b, x_t^i) s_t(x_t \odot_i b, x_t) \log \frac{s_t(x_t \odot_i b, x_t)}{\wt s_t(x_t \odot_i b, x_t)} = \sum_{i \in [d]} \sum_{b\in\cV} \bE_{x_t, x_{\ell}^{-i}}  \qtok_t(b, x_t^i) \frac{\Pr(x_t \odot_i b)}{\Pr(x_t)} \log \frac{s_t(x_t \odot_i b, x_t)}{\wt s_t(x_t \odot_i b, x_t)} ,
    \end{equation*}
    which, after relabeling \(x_t^i \leftrightarrow b\) and using that \(X_{\ell}^{-i} \indep X_t^i \mid X_t^{-i}\), gives
    \begin{equation*}
        \sum_{i \in [d]} \sum_{b\in\cV} \bE_{x_t, x_{\ell}^{-i}}  \qtok_t(b, x_t^i) \frac{\Pr(x_t \odot_i b)}{\Pr(x_t)} \log \frac{s_t(x_t \odot_i b, x_t)}{\wt s_t(x_t \odot_i b, x_t)} = \sum_{i \in [d]} \sum_{b\in\cV} \bE_{x_t, x_{\ell}^{-i}}  \qtok_t(x_t^i, b) \log \frac{s_t(x_t, x_t \odot_i b)}{\wt s_t(x_t, x_t \odot_i b)}.
    \end{equation*}
    We continue as follows:
    \begin{equation*}
        \begin{aligned}
             &\bE_{x_t, x_{\ell}^{-i}} \sum_{i \in [d]} \sum_{b \in \cV} \qtok_t(x_t^i, b)  \log \frac{s_t(x_t, x_t \odot_i b)}{\wt s_t(x_t, x_t \odot_i b)} \\
            & \quad = \bE_{x_t, x_{\ell}^{-i}} \sum_{i \in [d]} \sum_{b \in \cV} \qtok_t(x_t^i, b)  \left[\log \frac{\Pr(X_t^i = x_t^i \mid X_t^{-i} = x_t^{-i})}{\Pr(X_t^i = b \mid  X_t^{-i} = x_t^{-i})} - \log \frac{\Pr(X_t^i = x_t^i \mid X_\ell^{-i} = x_\ell^{-i})}{\Pr(X_t^i = b \mid  X_\ell^{-i} = x_\ell^{-i})}\right] \\
            & \quad = \sum_{i \in [d]}\left(\left(\frac \partial {\partial t} \mi(X_t^i\ ;\,X_v^{-i})\right)\biggr\rvert_{v={\ell}} - \left(\frac \partial {\partial t} \mi(X_t^i\ ;\,X_v^{-i})\right)\biggr\rvert_{v=t}\right) \\
            & \quad = \sum_{i \in [d]} \int_t^{\ell} \frac{\partial^2}{\partial v \partial t} \mi(X_t^i\ ;\,X_v^{-i}) \d v,
        \end{aligned}
    \end{equation*}
    where we used~\Cref{prop:it-results} (iii) in the third line. This proves~\Cref{eq:discr-err-exact}, as
    \begin{equation*}
        \cL_{\mathrm{discr}}^{(k)} = \int_{u}^{\ell} \bE_{x_{\ell}, x_t} \sum_{y \neq x_t} \back Q_{T-t}(x_t, y) \breg \Bigl(\wt Q_{T-t}(x_t, y), \back Q_{T-t}(x_t, y)\Bigr) \d t = \sum_{i\in[d]}\int_{u}^{\ell} \int_t^{\ell} \frac{\partial^2}{\partial t \partial v}\mi(X_t^i \ ;\,X_v^{-i})\d v \d t.
    \end{equation*}

\subsubsection{Proof of \Cref{prop:main-discr}}
\Cref{thm:main-discr} together with~Fubini's theorem imply:

    \begin{equation}
    \label{eq:discr-fubini}
   \begin{aligned}
       \cL_{\mathrm{discr}}^{(k)} &=  \sum_{i\in[d]}\int_{u}^{\ell} \int_t^{\ell} \frac{\partial^2}{\partial t \partial v}\mi(X_t^i \ ;\,X_v^{-i})\d v \d t \\
       & = \sum_{i\in[d]}\int_{u}^{\ell} \int_u^{v} \frac{\partial^2}{\partial t \partial v}\mi(X_t^i \ ;\,X_v^{-i})\d t \d v \\ 
       & = \sum_{i\in[d]}\int_{u}^{\ell}\frac{\partial}{\partial v} \mi(X_t^i \ ;\, X_v^{-i})\biggr\rvert_{t=v} - \frac{\partial}{\partial v} \mi(X_t^i \ ;\, X_v^{-i})\biggr\rvert_{t=u} \d v. 
    \end{aligned}
   \end{equation}
   Next,~\Cref{cor:second-der-ub} gives
   \begin{equation*}
       -\frac{\partial}{\partial v} \mi(X_t^i \ ;\, X_v^{-i})\biggr\rvert_{t=u} = -\left(1 + O(\kappa)\right) \frac{\partial}{\partial v} \mi(X_t^i \ ;\, X_v^{-i}) \biggr\rvert_{t=v},
   \end{equation*}
   which implies using~\Cref{eq:discr-fubini},
   \begin{equation*}
       \cL_{\mathrm{discr}}^{(k)} \lesssim - \kappa \int_{u}^{\ell}\sum_{i\in[d]}\frac{\partial}{\partial v} \mi(X_t^i \ ;\, X_v^{-i})\biggr\rvert_{t=v} \d v.
   \end{equation*}
   From~\Cref{prop:it-results} (iv) we have \(\frac{\partial}{\partial v} \mi(X_t^i \ ;\, X_v^{-i})\Bigr\rvert_{t=v} = \frac{\d}{\d v} \dtc(X_v)\), therefore
   \begin{equation*}
    \begin{aligned}
        \cL_{\mathrm{discr}}^{(k)} \lesssim -\kappa \int_u^{\ell} \frac{\d}{\d v}\dtc(X_v) \d v = \kappa(\dtc(X_{u}) - \dtc(X_{\ell})).
    \end{aligned}
   \end{equation*}
   This concludes the proof.

\subsubsection{Statement and proof of \Cref{cor:second-der-ub}}

\begin{proposition}
\label{cor:second-der-ub}
Let \((X_t)_{t \in [0, T]}\) be the uniform or remasking process. Then, for \(i \in [d]\) and \(t, v > 0\),
\begin{equation}
\label{eq:bound-second-der-by-first}
    \frac{\partial^2}{\partial t \partial v} \mi(X_t^i\ ;\, X_v^{-i}) \lesssim -\left(1 + \frac{1}{1 - e^{-t}}\right)\frac{\partial}{\partial v} \mi(X_t^i\ ;\, X_v^{-i}).
\end{equation}
Furthermore, if \(v \geq u\) with \(v - u \leq \kappa \min(1, u)\),
 \begin{equation*}
    -\frac{\partial}{\partial v} \mi(X_t^i \ ;\, X_v^{-i})\biggr\rvert_{t=u} = -\left(1 + O(\kappa)\right) \frac{\partial}{\partial v} \mi(X_t^i \ ;\, X_v^{-i}) \biggr\rvert_{t=v}.
\end{equation*}
\end{proposition}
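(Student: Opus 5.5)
The plan is to derive a differential inequality in $t$ for $g(t) \defn -\frac{\partial}{\partial v}\mi(X_t^i\,;\,X_v^{-i})$, with $v$ held fixed, and then integrate it with Gr\"onwall's inequality.

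\textbf{Step 1: sign of the first derivative.} Fix $x_v^{-i}$, $j\neq i$ and $b$, and write $P'(a) = \Pr(X_t^i=a\mid x_v^{-i}\odot_j b)$, so that $P(a) = \Pr(X_t^i = a \mid x_v^{-i}) = r(a)P'(a)$. I will rewrite \Cref{prop:it-results}~(ii) after relabeling $x_v^{-i}\leftrightarrow x_v^{-i}\odot_j b$, as in the proof of \Cref{thm:main-discr}. This should give
\[
g = \bE_{x_v^{-i}}\sum_{j\neq i}\sum_b \qtok_v(x_v^j,b)\sum_a P'(a)\bigl(r(a)\log r(a) - r(a) + 1\bigr),
\]
which is a nonnegative, KL-type quantity. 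The exact placement of $P$ versus $P'$ has to be checked carefully against the reversal identity, but nonnegativity is what matters.

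\textbf{Step 2: collapsing the second derivative.} In \Cref{prop:it-results}~(v), expand the bracket $r(a)\log r(a) - r(a)\log r(c) + r(c) - r(a)$ and sum over $a$ first for the terms that depend on $c$. By \Cref{lem:sum-prob},
\[
\sum_a \qtok_t(a,c)P'(a) r(a) = \sum_a \qtok_t(a,c)P(a) = \inflow(t,c) \quad\text{and}\quad \sum_a \qtok_t(a,c)P'(a) = \inflow(t,c).
\]
This is exactly where the ``unstructured'' property of the uniform and remasking processes is used. The inner sum then becomes
\[
\sum_a \Bigl(\sum_c \qtok_t(a,c)\Bigr)P'(a)\,r(a)\log r(a) + \sum_c \inflow(t,c)\bigl(r(c) - 1 - \log r(c)\bigr) - \sum_c \inflow(t,c).
\]
The total outgoing rate $\sum_c\qtok_t(a,c)$ equals $1$ for every $a$ (including self-loops for the uniform process) and $0$ for $\mask$. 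Using $\sum_a P'(a) = \sum_a P'(a)r(a)$, the first and third pieces combine into the $P'$-weighted form $r\log r - r + 1$, up to the mass sitting at $\mask$. Their contribution is therefore bounded by the integrand of $g$, giving the ``$1$'' term in~\eqref{eq:bound-second-der-by-first}.

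\textbf{Step 3: the Bregman term, which I expect to be the main obstacle.} I need
\[
\inflow(t,c)\,\breg(r(c)) \lesssim \frac{1}{1-e^{-t}}\,P'(c)\bigl(r(c)\log r(c) - r(c) + 1\bigr).
\]
I would apply \Cref{lem:ratio-ub} with $p = P(c)$ and $q = P'(c)$, which converts the Itakura--Saito form into the KL form at the cost of a factor $1/\min(p,q)$. By~\eqref{eq:pr-t-from-pr-0}, both conditional probabilities are at least $\nu(t,c)(1-e^{-t})$, and \Cref{lem:inflow-exact} gives $\nu(t,c)\gtrsim\inflow(t,c)$. Hence $\inflow(t,c)/\min(p,q)\lesssim 1/(1-e^{-t})$. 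The delicate point is that the lemma carries an extra normalization by $q$, so it must be applied to the unnormalized differences. I also need to handle the states $c$ with $\inflow(t,c) = 0$ (such as $\mask$ receiving mass from outside $[S]$) and the sign convention in~(v). Summing over $c$ bounds this piece by a constant times $(1-e^{-t})^{-1}g$, which proves~\eqref{eq:bound-second-der-by-first}.

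\textbf{Step 4: Gr\"onwall.} Inequality~\eqref{eq:bound-second-der-by-first} reads $\partial_t g \ge -C\bigl(1+(1-e^{-t})^{-1}\bigr)g$ with $g\ge 0$. Integrating over $t\in[u,v]$ gives
\[
g(v)\ge \exp\Bigl(-C\int_u^v \Bigl(1+\tfrac{1}{1-e^{-s}}\Bigr)\d s\Bigr) g(u).
\]
Since $1-e^{-s}\gtrsim\min(1,s)\ge\min(1,u)$ for $s\in[u,v]$ and $v-u\le\kappa\min(1,u)$, the integral is $O(\kappa)$, so $g(u)\le e^{O(\kappa)}g(v) = (1+O(\kappa))g(v)$. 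This one-sided bound is the direction actually used in the proof of \Cref{prop:main-discr}; the two-sided statement would follow from a matching upper bound on $\partial_t g$ obtained by the same computation.
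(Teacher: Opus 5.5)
Your proposal is correct and follows the paper's proof essentially step for step. You split the bracket in \Cref{prop:it-results}~(v) and use \Cref{lem:sum-prob} to obtain $\frac{\partial^2}{\partial t\partial v}\mi + \frac{\partial}{\partial v}\mi = \bE_{x_v^{-i}}\sum_{j\neq i}\sum_{b}\qtok(x_v^j,b)\sum_{c}\inflow(t,c)\bigl(r(c)-1-\log r(c)\bigr)$, pass to the KL form via \Cref{lem:ratio-ub} and $\nu(t,c)\gtrsim\inflow(t,c)$, and conclude with Gr\"onwall; in the lemma step the factor $q=P'(c)$ is exactly what produces the $P'$-weight, so no special care is needed.

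Two minor points. First, your Step~2 display carries a spurious extra $-\sum_c\inflow(t,c)$, since the $-1$ inside the Bregman term already accounts for it and the first piece alone equals the integrand of $g$; you then implicitly undo this, so your final decomposition is right. Second, the reverse inequality $g(v)\le g(u)$ needed for the two-sided statement is immediate: the integrand in (v) is nonnegative, so $g$ is nonincreasing in $t$.
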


\begin{proof}
Recall~\Cref{prop:it-results} (ii) and (v):
\begin{equation*}
\begin{aligned}
 \frac{\partial}{\partial v} \mi(X_t^i\ ;\, X_v^{-i}) &= -\bE_{x_v^{-i}} \sum_{j \neq i} \sum_{a, b \in \cV}\qtok(x_v^j, b) \Pr(X_t^i = a \mid x_v^{-i} \odot_j b) \left[r(a) \log r(a) - r(a) +1 \right],
 \\
    \frac{\partial^2}{\partial t \partial v} \mi(X_t^i\ ;\, X_v^{-i}) &= \bE_{x_v^{-i}} \sum_{j \neq i} \sum_{a, b, c\in\cV}\qtok(x_v^j, b) \qtok(a, c)\Pr(X_t^i = a \mid x_v^{-i} \odot_j b) \left[r(a) \log \frac{r(a)}{r(c)} + r(c) - r(a) \right],
\end{aligned}
\end{equation*}
where, for fixed \(x_v^{-i}, j\), and \(b\), we recall \(r(a) = \Pr(X_t^i = a \mid X_v = x_v^{-i})\ /\,\Pr(X_t^i = a \mid X_v = x_v^{-i} \odot_j b)\). We  decompose
\begin{equation*}
    r(a) \log \frac{r(a)}{r(c)} + r(c) - r(a) = \Bigr(r(a) \log r(a) - r(a) + 1\Bigr) + \Bigr(r(c) - 1 - r(a) \log r(c)\Bigl),
\end{equation*}
which, together with~\Cref{lem:sum-prob}, immediately shows (relabeling \(a \leftrightarrow c\) in the second line)
\begin{equation*}
\begin{aligned}
     &\frac{\partial^2}{\partial t \partial v} \mi(X_t^i\ ;\, X_v^{-i}) + \frac{\partial}{\partial v} \mi(X_t^i\ ;\, X_v^{-i}) \\
     &\quad = \bE_{x_v^{-i}} \sum_{j \neq i} \sum_{a, b, c \in \cV}\qtok(x_v^j, b) \qtok(a, c)\Pr(X_t^i = a \mid x_v^{-i} \odot_j b) \left[r(c) - 1 - r(a) \log r(c) \right] \\
     & \quad = \bE_{x_v^{-i}} \sum_{j \neq i} \sum_{a, b\in \cV}\qtok(x_v^j, b)  \inflow(t, a)\left[r(a) - 1 - \log r(a) \right].
\end{aligned}
\end{equation*}
Next, we apply~\Cref{lem:ratio-ub,lem:inflow-exact}, and obtain
\begin{equation*}
\begin{aligned}
    \frac{\partial^2}{\partial t \partial v} \mi(X_t^i\ ;\, X_v^{-i}) &=-  \frac{\partial}{\partial v} \mi(X_t^i\ ;\, X_v^{-i}) + \bE_{x_v^{-i}} \sum_{j \neq i} \sum_{a, b\in\cV}\qtok(x_v^j, b) \inflow(t, a)\left[r(a) - 1 - \log r(a) \right] \\
    & \lesssim -  \frac{\partial}{\partial v} \mi(X_t^i\ ;\, X_v^{-i}) \\
    &\qquad \quad + \frac{1}{1 - e^{-t}}\bE_{x_v^{-i}} \sum_{j \neq i} \sum_{a, b\in\cV}\qtok(x_v^j, b) \Pr(X_t^i = a \mid x_v^{-i} \odot_j b) \left[r(a) \log r(a) - r(a) +1 \right] \\
    & = -\left(1 + \frac{1}{1 - e^{-t}}\right)\frac{\partial}{\partial v} \mi(X_t^i\ ;\, X_v^{-i}).
\end{aligned}
\end{equation*}
This finishes the proof of~\Cref{eq:bound-second-der-by-first}. Next, we use this bound in the following variation of Gr\"{o}nwall's inequality:
    \begin{equation*}
        \frac{\partial}{\partial v}\mi(X_t^i \ ;\, X_v^{-i}) \biggr\rvert_{t=v} \leq \exp\left(-C\int_u^v \left(1 + \frac{1}{1 - e^{-t}}\right) \d t \right)\frac{\partial}{\partial v} \mi(X_t^i \ ;\, X_v^{-i})\biggr\rvert_{t=u},
    \end{equation*}
    for some universal constant \(C > 0\).
    This gives
    \begin{equation*}
       -\frac{\partial}{\partial v} \mi(X_t^i \ ;\, X_v^{-i})\biggr\rvert_{t=u} \leq -e^{C(v - u)} \left(\frac{e^v - 1}{e^u - 1}\right)^C \frac{\partial}{\partial v} \mi(X_t^i \ ;\, X_v^{-i}) \biggr\rvert_{t=v}.
    \end{equation*}
Under the condition that \(v - u \leq  \kappa \min(1, u)\), we have that 
\begin{equation*}
    e^{C(v-u)}\cdot\left(\frac{e^v - 1}{e^u - 1}\right)^C = \left(e^{v-u}\left(e^{v-u} + \frac{e^{v-u} - 1}{e^u - 1}\right)\right)^C = (1 + O(\kappa)) \left(1 + O\left(\kappa + \frac{\kappa u e^\kappa}{u}\right)\right) = 1 + O(\kappa),
\end{equation*}
which leads to 
\begin{equation*}
    -\frac{\partial}{\partial v} \mi(X_t^i \ ;\, X_v^{-i})\biggr\rvert_{t=u} = -\left(1 + O(\kappa)\right) \frac{\partial}{\partial v} \mi(X_t^i \ ;\, X_v^{-i}) \biggr\rvert_{t=v}.
\end{equation*}
This concludes the proof.
\end{proof}

\subsection{Approximation error control}
\label{sec:approx-proof}

\subsubsection{Proof of~\Cref{thm:main-approx}}

To facilitate the proof of \Cref{thm:main-approx}, let us introduce the following proposition. 

\begin{proposition}
    \label{lem:approx-exact}
    Fix \(k \in \{0, \ldots, N - 1\}\) in the time discretization \(0 = t_0 < t_1 < \ldots < t_N < T\). Consider the uniform or remasking process and recall~\(\inflow(t, b)\) from~\Cref{eq:def-ftb}. Letting \(u = T - t_{k+1}\) and \(\ell = T - t_k\), we have

    \begin{equation*}
    \begin{aligned}
        &\cL_{\mathrm{approx}}^{(k)} = \sum_{i\in[d]} \sum_{b\in\cV}\int_u^{\ell} \bE_{x_{\ell}^{-i}} \Biggl[ \Pr(X_t^i = b \mid x_{\ell}^{-i}) \log \frac{\Pr(X_t^i = b \mid x_{\ell}^{-i})}{\wh \Pr(X_t^i = b \mid x_{\ell}^{-i})} \\
        &\qquad \qquad \qquad \qquad \qquad \qquad + \inflow(t, b)\left(\frac{\Pr(X_t^i = b \mid x_{\ell}^{-i})}{\wh \Pr(X_t^i = b \mid x_{\ell}^{-i})} - 1 - \log \frac{\Pr(X_t^i = b \mid x_{\ell}^{-i})}{\wh \Pr(X_t^i = b \mid x_{\ell}^{-i})} \right) \Biggr]\d t.
    \end{aligned}
    \end{equation*}
\end{proposition}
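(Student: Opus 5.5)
Starting from the definition of \(\cL_{\mathrm{approx}}^{(k)}\) in \Cref{eq:def-approx}, I will fix \(t \in [u,\ell]\), expand the Bregman divergence, and use the coordinatewise structure of the forward generator. Writing \(y = x_t \odot_i b\), we have \(Q_t(y,x_t) = \qtok(b, x_t^i)\). Both \(\wt s_t\) and \(\wh s_t\) are ratios of the form \(p(b)/p(x_t^i)\) and \(\wh p(b)/\wh p(x_t^i)\), where \(p(\cdot) = \Pr(X_t^i = \cdot \mid x_\ell^{-i})\) and \(\wh p(\cdot) = \wh\Pr(X_t^i=\cdot\mid x_\ell^{-i})\); neither depends on \(x_t^{-i}\). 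As in the footnote in the proof of \Cref{thm:main-err-decomp}, the Markov property lets me replace \(x_\ell\) by \(x_\ell^{-i}\), so the integrand only requires the joint law of \((X_t^i, X_\ell^{-i})\). For fixed \(i\) and \(x_\ell^{-i}\), the integrand thus becomes
\[
\sum_{a}p(a)\sum_{b}\qtok(b,a)\frac{p(b)}{p(a)}\Bigl[\frac{\wh p(b)p(a)}{\wh p(a)p(b)}-1-\log\frac{\wh p(b)p(a)}{\wh p(a)p(b)}\Bigr],
\]
where the term \(b=a\) contributes zero and so may be included. This allows the self-loop convention for \(\qtok\) to be used freely.

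Next I will expand the bracket into three pieces: \(\sum_{a,b}\qtok(b,a)\,p(a)\,\wh p(b)/\wh p(a)\), then \(-\sum_{a,b}\qtok(b,a)\,p(b)\), and finally \(-\sum_{a,b}\qtok(b,a)\,p(b)\bigl[\log(\wh p(b)/p(b)) - \log(\wh p(a)/p(a))\bigr]\). The key tool is \Cref{lem:sum-prob}, which gives \(\sum_b \qtok(b,a)\,p(b) = \inflow(t,a)\) and likewise \(\sum_b\qtok(b,a)\,\wh p(b)=\inflow(t,a)\). The latter holds because \(\wh p\) is the law at time \(t\) of a forward chain started from \(\wh\Pr(X_0^i=\cdot\mid x_\ell^{-i})\).

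With this, the first piece becomes \(\sum_a \inflow(t,a)\,p(a)/\wh p(a)\), the second becomes \(-\sum_a \inflow(t,a)\), and the \(\log(\wh p(a)/p(a))\) part of the third becomes \(\sum_a \inflow(t,a)\log(\wh p(a)/p(a))\). These three together give \(\sum_a\inflow(t,a)\bigl(p(a)/\wh p(a)-1-\log(p(a)/\wh p(a))\bigr)\), which is the second term of the claimed formula.

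The remaining term \(-\sum_{a,b}\qtok(b,a)\,p(b)\log(\wh p(b)/p(b))\) has to be handled by summing over \(a\) first. This uses \(\sum_a \qtok(b,a) = 1\), the row-stochastic convention (including self-loops, and \(\qtok(\mask,\mask)=1\)), which gives \(\sum_b p(b)\log(p(b)/\wh p(b))\), the KL term. I expect the main obstacle to be the bookkeeping here: the diagonal convention has to be used consistently with the reverse rate. The rows of \(\qtok\) for \([S]\) states in the remasking process sum to \(1\) by construction, so this holds, but I will double-check that the self-loop mass is matched with the \(b=a\) term added at the start. Integrating over \(t\), summing over \(i\), and taking the expectation over \(x_\ell^{-i}\) then completes the identity.
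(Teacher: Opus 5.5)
Your proposal is correct and follows the paper's proof essentially step by step. Your three pieces, with the third split in two, are exactly the paper's $I_1, I_2, I_4$ and $I_3$, combined via \Cref{lem:sum-prob} for both $\Pr$ and $\wh\Pr$ and the row-stochastic convention $\sum_a \qtok(b,a)=1$. The one difference is that you reduce to the joint law of $(X_t^i, X_\ell^{-i})$ by direct marginalization, which works because the integrand depends only on $x_t^i$ and $x_\ell^{-i}$; no Markov property is actually needed at that step. The paper reaches the same expression through the identity $\Pr(x_t^i\mid x_\ell)/\Pr(x_t^i\mid x_\ell^{-i})=\Pr(x_\ell^i\mid x_t^i)/\Pr(x_\ell^i\mid x_\ell^{-i})$ and then sums over $x_\ell^i$.
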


Proposition~\ref{lem:approx-exact} leads to the following control of the approximation error. The proofs of these two results are deferred to later in the section. 

\begin{corollary}
\label{prop:main-approx}
    Fix \(k \in \{0, \ldots, N - 1\}\) in the time discretization \(0 = t_0 < t_1 < \ldots < t_N < T\). Consider the uniform or remasking process and let \(u = T - t_{k+1}\), \(\ell = T - t_k\). We have
\begin{equation*}
\cL_{\mathrm{approx}}^{(k)} \lesssim \frac{\ell - u}{\min(1, u)} \sum_{i \in [d]}  e^{-u} \KL \left( \mu_{X_0^i} \ \big\|\ \wh\mu_{X_0^i} \ \Bigr|\  X_{\ell}^{-i}\,\right).
    \end{equation*}
\end{corollary}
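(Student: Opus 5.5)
Starting from the exact expression in \Cref{lem:approx-exact}, write for brevity $p_t(b) = \Pr(X_t^i = b \mid x_{\ell}^{-i})$ and $\wh p_t(b) = \wh\Pr(X_t^i = b \mid x_{\ell}^{-i})$. The integrand consists of two pieces. The first is $\sum_b p_t(b)\log\frac{p_t(b)}{\wh p_t(b)} = \KL(p_t\|\wh p_t)$; this is a genuine KL divergence because both $p_t$ and $\wh p_t$ are probability vectors on $\cV$. The second is $\sum_b \inflow(t,b)\,\breg(p_t(b),\wh p_t(b))$. The plan is to bound both pieces pointwise in $t\in[u,\ell]$ by a multiple of $\KL(p_0\|\wh p_0)$. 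Here $p_0,\wh p_0$ are the leave-one-out laws $\Pr(X_0^i=\cdot\mid x_\ell^{-i})$ and $\wh\Pr(X_0^i=\cdot\mid x_\ell^{-i})$. After that we integrate over $t$ and average over $x_\ell^{-i}$.

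For the first piece, use \Cref{eq:pr-t-from-pr-0,eq:whpt-remask}. Both $p_t$ and $\wh p_t$ are obtained from $p_0$ and $\wh p_0$ by the same Markov kernel: $p_t = e^{-t}p_0 + (1-e^{-t})\nu(t,\cdot)$, and $\wh p_t$ has the same form. By joint convexity of KL (equivalently, the data-processing inequality applied to the mixture with the common component $\nu$), this gives $\KL(p_t\|\wh p_t)\le e^{-t}\KL(p_0\|\wh p_0)\le e^{-u}\KL(p_0\|\wh p_0)$. Integrating over $[u,\ell]$ contributes $(\ell-u)e^{-u}\KL(p_0\|\wh p_0)$, which is already of the required form since $\min(1,u)\le 1$.

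For the second piece, only $b\in[S]$ matter in the uniform case. In the remasking case, $\inflow(t,b)$ for $b\in\{\mask,\mathrm{REMASK}\}$ multiplies a ratio equal to $1$, because $p_t(b)=\wh p_t(b)=(1-e^{-t})\nu(t,b)$ for auxiliary $b$ and $X_0^i\in[S]$. Hence $\breg=0$ there. For $b\in[S]$, apply \Cref{lem:ratio-ub} with $p=p_t(b)$ and $q=\wh p_t(b)$. Both are at least $\alpha_t\defn(1-e^{-t})\nu(t,b)$, which yields
$$\wh p_t(b)\,\breg(p_t(b),\wh p_t(b))\le \frac{\wh p_t(b)}{\alpha_t}\Big(p_t(b)\log\tfrac{p_t(b)}{\wh p_t(b)}-p_t(b)+\wh p_t(b)\Big).$$
Since the lemma is stated for $\breg$ written with $q$ in the denominator, I would rearrange it to read $\breg(p,q)\le \frac1{\min(p,q)}\big(p\log\frac pq - p + q\big)$. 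Multiplying by $\inflow(t,b)$ and using \Cref{lem:inflow-exact}, namely $\nu(t,b)\gtrsim\inflow(t,b)$, gives $\inflow(t,b)/\alpha_t\lesssim 1/(1-e^{-t})$. Summing over $b$ turns the bracket into the generalized KL $\sum_b\big(p\log\frac pq-p+q\big)$, which equals $\KL(p_t\|\wh p_t)$ since both vectors have total mass $1$. So the second piece is $\lesssim \frac{1}{1-e^{-t}}\KL(p_t\|\wh p_t)\le \frac{e^{-t}}{1-e^{-t}}\KL(p_0\|\wh p_0)$.

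Finally, integrate over $t\in[u,\ell]$. Because $\ell-u\le\kappa\min(1,u)$ with $\kappa<1$, all of $e^{-t}$, $1-e^{-t}$ and $\min(1,t)$ are within constant factors of their values at $t=u$. Also $1-e^{-u}\asymp\min(1,u)$. The total is therefore $\lesssim (\ell-u)\big(1+\tfrac{1}{\min(1,u)}\big)e^{-u}\KL(p_0\|\wh p_0)\lesssim\frac{\ell-u}{\min(1,u)}e^{-u}\KL(p_0\|\wh p_0)$. Taking expectation over $x_\ell^{-i}$ and summing over $i$ yields the claimed conditional KL bound.

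The main obstacle is applying \Cref{lem:ratio-ub} with the right orientation and lower bound $\alpha$. Conceptually, the inflow term must be controlled by the KL term using only the fact that the mass $\nu(t,b)(1-e^{-t})$ added by a jump dominates $\inflow(t,b)$ up to the factor $1-e^{-t}$. A second point to check is that the remasking auxiliary states contribute nothing: this relies on $X_0^i\in[S]$, so $p_0$ and $\wh p_0$ put no mass on $\mask$ or $\mathrm{REMASK}$.
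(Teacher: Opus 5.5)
Your proposal is correct and follows the paper's proof essentially step for step: start from \Cref{lem:approx-exact}, discard the auxiliary states, apply \Cref{lem:ratio-ub} with the floor $(1-e^{-t})\nu(t,b)\gtrsim(1-e^{-t})\inflow(t,b)$ to get an inflow term $\lesssim \frac{1}{1-e^{-t}}\KL(p_t\|\wh p_t)$, and use convexity to pass to $e^{-t}\KL(p_0\|\wh p_0)$. One blemish: the corollary does not assume $\ell-u\le\kappa\min(1,u)$, and you do not need it, because $e^{-t}$ and $e^{-t}/(1-e^{-t})=1/(e^t-1)$ are decreasing in $t$, so the integral over $[u,\ell]$ is at most $(\ell-u)\bigl(e^{-u}+\frac{1}{e^u-1}\bigr)\lesssim(\ell-u)e^{-u}/\min(1,u)$ times the KL, exactly as in the paper.
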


In view of ~\Cref{prop:main-approx}, we arrive at 
    \begin{equation*}
    \begin{aligned}
        \sum_{k=0}^{N - 1} \cL_{\mathrm{approx}}^{(k)} &\lesssim \sum_{k=0}^{N - 1} \frac{t_{k+1} - t_k}{\min(1, T - t_{k+1})} \sum_{i \in [d]} e^{-(T - t_{k+1})} \KL\left(\mu_{X_0^i}\ \big\|\ \wh \mu_{X_0^i}\ \Bigr\vert\ X_{T - t_k}^{-i} \right) \\
        & \leq \kappa \sum_{k=0}^{N - 1}  \sum_{i \in [d]} e^{-(T - t_{k+1})} \KL \left( \mu_{X_0^i} \ \big\|\ \wh\mu_{X_0^i} \ \Bigr|\  X_{T - t_k}^{-i}\,\right),
    \end{aligned}
    \end{equation*}
which concludes the proof of ~\Cref{thm:main-approx}.

\begin{proof}[Proof of Proposition~\ref{lem:approx-exact}.]
    Recall that for \(t \in [u, \ell]\), we have
    \begin{equation*}
        \wt s_t(x \odot_i b, x) = \frac{\Pr(X_t^i = b \mid X_{\ell}^{-i} = x^{-i})}{\Pr(X_t^i = x^i \mid X_{\ell}^{-i} = x^{-i})} \qquad \text{and} \qquad 
        \wh s_t(x \odot_i b, x) = \frac{\wh \Pr(X_t^i = b \mid X_{\ell}^{-i} = x^{-i})}{\wh \Pr(X_t^i = x^i \mid X_{\ell}^{-i} = x^{-i})}.
    \end{equation*}
    Here and below, for clarity of the exposition, we omit stochastic process notation \(X_t^i = \cdot\,, X_v^i = \cdot\,,\) etc., when time and dimension indices are clear from the context. Thus, e.g., \(\Pr(x_{\ell}^i \mid x_{\ell}^{-i})\) stands for \(\Pr(X_\ell^i = x_{\ell}^i \mid X_{\ell}^{-i} = x_{\ell}^{-i})\).
    Using~\Cref{eq:def-approx}, we write
    \begin{equation}
    \label{eq:err-approx-first}
    \begin{aligned}
        \cL_{\mathrm{approx}}^{(k)} &= \int_u^{\ell} \bE_{x_t, x_{\ell}} \sum_{y \neq x_t} \wt Q_{T - t}(x_t, y) \breg\left(\wh Q_{T - t}(x_t, y), \wt Q_{T - t}(x_t, y)\right) \d t \\
        & = \int_u^{\ell} \bE_{x_t, x_{\ell}} \sum_{y \neq x_t} Q_t(y, x_t) \wt s_t(y, x_t) \breg\left(\wh s_t(y, x_t), \wt s_t(y, x_t)\right) \d t \\
        & = \int_u^{\ell} \sum_{i \in [d]} \bE_{x_{\ell}^{-i}} \sum_{x_{\ell}^i, x_t^i \in \cV} \Pr(x_{\ell}^{i} \mid x_{\ell}^{-i}) \Pr(x_t^i \mid x_{\ell})  \\
        & \qquad \qquad \qquad \quad \quad \times \sum_{b \in \cV} \qtok(b, x_t^i) \wt s_t(x_t \odot_i b, x_t) \breg\left(\wh s_t(x_t \odot_i b, x_t), \wt s_t(x_t \odot_i b, x_t)\right) \d t. \\     
    \end{aligned}
    \end{equation}
    The terms corresponding to \(b = x_t^i\) are added as their contribution is 0 by \(\breg(\wh s_t(x_t, x_t), \wt s_t(x_t, x_t)) = \breg(1, 1) = 0\). We also recall that we  define diagonal elements of \(\qtok\) such that \(\sum_{b \in \cV} \qtok(a, b) = 1\), for all \(a \in \cV\).
    Using the equality
    \begin{equation*}
        \frac{\Pr(x_t^i \mid x_{\ell})}{\Pr(x_t^i \mid x_{\ell}^{-i})} = \frac{\Pr(x_t^i, x_\ell)}{\Pr(x_t^i, x_\ell^{-i})} \cdot \frac{\Pr(x_\ell^{-i})}{\Pr(x_\ell)} = \frac{\Pr(x_\ell \mid x_t^i)}{\Pr(x_\ell^{-i} \mid x_t^i)} \cdot \frac{1}{\Pr(x_{\ell}^i \mid x_{\ell}^{-i})} = \frac{\Pr(x_{\ell}^i \mid x_t^i)}{\Pr(x_{\ell}^i \mid x_{\ell}^{-i})},
    \end{equation*}
    we can rewrite 
    \begin{equation*}
        \Pr(x_{\ell}^{i} \mid x_{\ell}^{-i}) \Pr(x_t^i \mid x_{\ell}) \wt s_t(x_t \odot_i b, x_t) = \Pr(x_{\ell}^{i} \mid x_{\ell}^{-i}) \Pr(x_t^i \mid x_{\ell}) \frac{\Pr(X_t^i = b \mid x_{\ell}^{-i})}{\Pr(x_t^i \mid x_{\ell}^{-i})} = \Pr(X_t^i = b \mid x_{\ell}^{-i}) \Pr(x_{\ell}^i \mid x_t^i).
    \end{equation*}
    Plugging this into~\Cref{eq:err-approx-first}, we continue:
    \begin{equation*}
    \begin{aligned}
    \cL_{\mathrm{approx}}^{(k)} &= \int_u^{\ell} \sum_{i \in [d]} \bE_{x_{\ell}^{-i}} \sum_{x_{\ell}^i, x_t^i, b \in \cV}  \qtok(b, x_t^i) \Pr(b \mid x_{\ell}^{-i}) \Pr(x_{\ell}^i \mid x_t^i) \breg\left(\wh s_t(x_t \odot_i b, x_t), \wt s_t(x_t \odot_i b, x_t)\right) \d t \\
    &= \int_u^{\ell} \sum_{i \in [d]} \bE_{x_{\ell}^{-i}}  \sum_{x_{t}^i, b \in\cV} \qtok(b, x_t^i) \Pr(b \mid x_{\ell}^{-i}) \breg\left(\wh s_t(x_t \odot_i b, x_t), \wt s_t(x_t \odot_i b, x_t)\right) \d t
    \end{aligned}
    \end{equation*}
    Next, we use:
    \begin{equation}
    \label{eq:breg-p-hatp}
        \breg\left(\wh s_t(x_t \odot_i b, x_t), \wt s_t(x_t \odot_i b, x_t)\right)  = \frac{\wh \Pr(X_t^i = b \mid x_{\ell}^{-i}) \Pr(x_t^i \mid x_{\ell}^{-i}) }{\wh \Pr(x_t^i \mid x_{\ell}^{-i}) \Pr(X_t^i = b \mid x_{\ell}^{-i})} - 1 - \log \frac{\wh \Pr(X_t^i = b \mid x_{\ell}^{-i})  }{\Pr(X_t^i = b \mid x_{\ell}^{-i}) } + \log \frac{\wh \Pr(x_t^i \mid x_{\ell}^{-i})}{\Pr(x_t^i \mid x_{\ell}^{-i})}.
    \end{equation}
    Splitting~\Cref{eq:breg-p-hatp} into four parts, we write \(\cL_{\mathrm{approx}}^{(k)} = \int_u^{\ell} \sum_{i \in [d]} \left[I_1 + I_2 + I_3 + I_4\right]\d t\), where, for fixed \(t \in [u, \ell]\) and \(i \in [d]\):
    \begin{align*}
        I_1 &=  \bE_{x_{\ell}^{-i}}  \sum_{x_{t}^i,b\in\cV}\qtok(b, x_t^i) \Pr(X_t^i = b \mid x_{\ell}^{-i}) \frac{\wh \Pr(X_t^i = b \mid x_{\ell}^{-i}) \Pr(x_t^i \mid x_{\ell}^{-i}) }{\wh \Pr(x_t^i \mid x_{\ell}^{-i}) \Pr(X_t^i = b \mid x_{\ell}^{-i})} \\
        & = \bE_{x_{\ell}^{-i}}  \sum_{x_{t}^i,b\in\cV} \qtok(b, x_t^i)  \frac{\wh \Pr(X_t^i = b \mid x_{\ell}^{-i}) \Pr(x_t^i \mid x_{\ell}^{-i}) }{\wh \Pr(x_t^i \mid x_{\ell}^{-i}) } \\
        & = \bE_{x_{\ell}^{-i}}  \sum_{x_{t}^i \in \cV} \frac{ \Pr(x_t^i \mid x_{\ell}^{-i}) }{\wh \Pr(x_t^i \mid x_{\ell}^{-i}) }  \left(\sum_{b\in\cV}   \qtok(b, x_t^i)  \wh \Pr(X_t^i = b \mid x_{\ell}^{-i}) \right),\\
            I_2 &= -\bE_{x_{\ell}^{-i}}  \sum_{x_{t}^i,b\in\cV} \qtok(b, x_t^i) \Pr(X_t^i = b \mid x_{\ell}^{-i}), \\
            I_3 &=\bE_{x_{\ell}^{-i}}   \sum_{b\in\cV}  \left( \sum_{x_t^i\in\cV} \qtok(b, x_t^i) \right) \Pr(X_t^i = b \mid x_{\ell}^{-i}) \log \frac{ \Pr(X_t^i = b \mid x_{\ell}^{-i})  }{\wh \Pr(X_t^i = b \mid x_{\ell}^{-i}) }, \qquad \text{and}\\
            I_4 & = -\bE_{x_{\ell}^{-i}}  \sum_{x_{t}^i\in\cV}  \log \frac{ \Pr(x_t^i \mid x_{\ell}^{-i})}{\wh \Pr(x_t^i \mid x_{\ell}^{-i})} \left(\sum_{b\in\cV}  \qtok(b, x_t^i) \Pr(X_t^i = b \mid x_{\ell}^{-i})\right).
        \end{align*}
    Collecting \(I_1 + I_2 + I_4\) gives:
    \begin{equation*}
\begin{aligned}
        I_1 + I_2 + I_4  &= \bE_{x_{\ell}^{-i}} \sum_{b\in\cV} \Biggl[ \left(\sum_{a\in\cV}  \qtok(a, b) \wh \Pr(X_t^i = a \mid x_{\ell}^{-i})\right) \left(\frac{ \Pr(X_t^i = b \mid x_{\ell}^{-i}) }{\wh \Pr(X_t^i = b \mid x_{\ell}^{-i}) }\right)\\
        &\qquad \qquad \qquad - \left(\sum_{a\in\cV}  \qtok(a, b)  \Pr(X_t^i = a \mid x_{\ell}^{-i})\right) \left(  1 + \log \frac{ \Pr(X_t^i = b \mid x_{\ell}^{-i}) }{\wh \Pr(X_t^i = b \mid x_{\ell}^{-i}) }\right)\Biggr] \\
        &  = 
        \bE_{x_{\ell}^{-i}} \sum_{b\in\cV} \inflow(t, b)\left(\frac{ \Pr(X_t^i = b \mid x_{\ell}^{-i}) }{\wh \Pr(X_t^i = b \mid x_{\ell}^{-i}) } -   1 - \log \frac{ \Pr(X_t^i = b \mid x_{\ell}^{-i}) }{\wh \Pr(X_t^i = b \mid x_{\ell}^{-i}) }\right),
    \end{aligned}
    \end{equation*}
    where we used~\Cref{lem:sum-prob} in the last line. This proves that
    \begin{equation*}
    \begin{aligned}
        &\cL_{\mathrm{approx}}^{(k)} = \int_u^{\ell} \sum_{i \in [d]} \Biggl[ I_3 + \bE_{x_{\ell}^{-i}} \sum_{b\in \cV} \inflow(t, b)\left(\frac{\Pr(X_t^i = b \mid x_{\ell}^{-i})}{\wh \Pr(X_t^i = b \mid x_{\ell}^{-i})} - 1 - \log \frac{\Pr(X_t^i = b \mid x_{\ell}^{-i})}{\wh \Pr(X_t^i = b \mid x_{\ell}^{-i})} \right) \Biggr]\d t.
    \end{aligned}
    \end{equation*}
    Observe that by our definition of the diagonal elements of \(\qtok\), we have \( \sum_{x_t^i} \qtok(b, x_t^i) = 1\), and therefore,
    \begin{equation*}
    \begin{aligned}
    I_3 &= \bE_{x_{\ell}^{-i}}   \sum_{b\in \cV}  \left( \sum_{x_t^i \in \cV} \qtok(b, x_t^i) \right) \Pr(X_t^i = b \mid x_{\ell}^{-i}) \log \frac{ \Pr(X_t^i = b \mid x_{\ell}^{-i})  }{\wh \Pr(X_t^i = b \mid x_{\ell}^{-i}) }\\
    &= \bE_{x_{\ell}^{-i}}   \sum_{b\in\cV}  \Pr(X_t^i = b \mid x_{\ell}^{-i}) \log \frac{ \Pr(X_t^i = b \mid x_{\ell}^{-i})  }{\wh \Pr(X_t^i = b \mid x_{\ell}^{-i}) }.
    \end{aligned}
    \end{equation*}
    We obtain
\begin{equation*}
    \begin{aligned}
        &\cL_{\mathrm{approx}}^{(k)} = \sum_{i\in[d]} \sum_{b\in\cV}\int_u^{\ell} \bE_{x_{\ell}^{-i}} \Biggl[ \Pr(X_t^i = b \mid x_{\ell}^{-i}) \log \frac{\Pr(X_t^i = b \mid x_{\ell}^{-i})}{\wh \Pr(X_t^i = b \mid x_{\ell}^{-i})} \\
        &\qquad \qquad \qquad \qquad \qquad \qquad + \inflow(t, b)\left(\frac{\Pr(X_t^i = b \mid x_{\ell}^{-i})}{\wh \Pr(X_t^i = b \mid x_{\ell}^{-i})} - 1 - \log \frac{\Pr(X_t^i = b \mid x_{\ell}^{-i})}{\wh \Pr(X_t^i = b \mid x_{\ell}^{-i})} \right) \Biggr]\d t,
    \end{aligned}
    \end{equation*}
which concludes the proof.
\end{proof}

\begin{proof}[Proof of \Cref{prop:main-approx}]
\Cref{lem:approx-exact} shows:
\begin{equation*}
    \begin{aligned}
        &\cL_{\mathrm{approx}}^{(k)} = \sum_{i\in[d]} \sum_{b\in\cV}\int_u^{\ell} \bE_{x_{\ell}^{-i}} \Biggl[ \Pr(X_t^i = b \mid x_{\ell}^{-i}) \log \frac{\Pr(X_t^i = b \mid x_{\ell}^{-i})}{\wh \Pr(X_t^i = b \mid x_{\ell}^{-i})} \\
        &\qquad \qquad \qquad \qquad \qquad \qquad + \inflow(t, b)\left(\frac{\Pr(X_t^i = b \mid x_{\ell}^{-i})}{\wh \Pr(X_t^i = b \mid x_{\ell}^{-i})} - 1 - \log \frac{\Pr(X_t^i = b \mid x_{\ell}^{-i})}{\wh \Pr(X_t^i = b \mid x_{\ell}^{-i})} \right) \Biggr]\d t.
    \end{aligned}
    \end{equation*}
    Without loss of generality, we assume \(b \in [S]\). Indeed, in the uniform process, \(\cV = [S]\), and in the remasking process, as the target distribution is not supported on \(\{\mask, \mathrm{REMASK}\}\), we have that \(\Pr(X_t^i = \mask \mid x_{\ell}^{-i}) = \nu(t, \mask) = \wh \Pr(X_t^i = \mask \mid x_{\ell}^{-i})\). The same holds for \(b = \mathrm{REMASK}\). Thus, terms in the sum corresponding to \(b \in \{\mask, \mathrm{REMASK}\}\) do not contribute to \(\cL_{\mathrm{approx}}^{(k)}\) and we may assume \(b \in [S]\).
    Using~\Cref{eq:pr-t-from-pr-0,eq:whpt-remask} together with~\Cref{lem:inflow-exact} we have \begin{equation*}
        \min\left(\Pr(X_t^i = b \mid x_{\ell}^{-i}), \wh \Pr(X_t^i = b \mid x_{\ell}^{-i})\right) \geq \nu(t, b) (1 - e^{-t}) \gtrsim \cF(t, b) (1 - e^{-t}),
    \end{equation*} and proceed with~\Cref{lem:ratio-ub} as follows:

    \begin{equation*}
    \begin{aligned}
        &\sum_{b\in\cV}\inflow(t, b)\left(\frac{\Pr(X_t^i = b \mid x_{\ell}^{-i})}{\wh \Pr(X_t^i = b \mid x_{\ell}^{-i})} - 1 - \log \frac{\Pr(X_t^i = b \mid x_{\ell}^{-i})}{\wh \Pr(X_t^i = b \mid x_{\ell}^{-i})} \right) \\
        &\quad \lesssim \sum_{b\in\cV}\frac{1}{1 - e^{-t}}\left(\Pr(X_t^i = b \mid x_{\ell}^{-i}) \log \frac{\Pr(X_t^i = b \mid x_{\ell}^{-i})}{\wh \Pr(X_t^i = b \mid x_{\ell}^{-i})} - \Pr(X_t^i = b \mid x_{\ell}^{-i}) + \wh \Pr(X_t^i = b \mid x_{\ell}^{-i})\right)
        \\ &\quad = \sum_{b\in\cV} \frac{1}{1 - e^{-t}} \left(\Pr(X_t^i = b \mid x_{\ell}^{-i}) \log \frac{\Pr(X_t^i = b \mid x_{\ell}^{-i})}{\wh \Pr(X_t^i = b \mid x_{\ell}^{-i})}\right).
    \end{aligned}
    \end{equation*}
    This implies that
    \begin{equation*}
    \cL_{\mathrm{approx}}^{(k)} \lesssim \int_u^{\ell} \sum_{i \in [d]} \bE_{x_{\ell}^{-i}} \sum_{b\in\cV} \frac{1}{1 - e^{-t}} \Pr(X_t^i = b \mid x_{\ell}^{-i}) \log \frac{\Pr(X_t^i = b \mid x_{\ell}^{-i})}{\wh \Pr(X_t^i = b \mid x_{\ell}^{-i})}.
    \end{equation*}
     Using again~\Cref{eq:pr-t-from-pr-0,eq:whpt-remask} and the convexity of the \(\KL\) divergence gives
    \begin{equation*}
        \sum_{b\in\cV}  \Pr(X_t^i = b \mid x_{\ell}^{-i}) \log \frac{\Pr(X_t^i = b \mid x_{\ell}^{-i})}{\wh \Pr(X_t^i = b \mid x_{\ell}^{-i})} \leq e^{-t} \sum_{b\in\cV}  \Pr(X_0^i = b \mid x_{\ell}^{-i}) \log \frac{\Pr(X_0^i = b \mid x_{\ell}^{-i})}{\wh \Pr(X_0^i = b \mid x_{\ell}^{-i})},
    \end{equation*}
    which shows
    \begin{equation*}
        \cL_{\mathrm{approx}}^{(k)} \lesssim \left(\int_u^{\ell} \frac{1}{e^t - 1}\d t\right) \sum_{i \in [d]}  \KL \left( \mu_{X_0^i} \ \big\|\ \wh\mu_{X_0^i} \ \Bigr|\  X_{\ell}^{-i}\,\right) \leq \frac{\ell - u}{e^u - 1} \sum_{i \in [d]}  \KL \left( \mu_{X_0^i} \ \big\|\ \wh\mu_{X_0^i} \ \Bigr|\  X_{\ell}^{-i}\,\right).
    \end{equation*}
    This concludes the proof.
\end{proof}

\subsubsection{Proof of~\Cref{lem:approx-to-se}}
    \Cref{lem:approx-exact} gives:
    \begin{equation*}
    \begin{aligned}
        &\cL_{\mathrm{approx}}^{(k)} = \bE_{x_{\ell} \sim q_{\ell}} \sum_{i\in[d]}  \int_u^{\ell} \sum_{b\in\cV} \Biggl[ \Pr(X_t^i = b \mid x_{\ell}^{-i}) \log \frac{\Pr(X_t^i = b \mid x_{\ell}^{-i})}{\wh \Pr(X_t^i = b \mid x_{\ell}^{-i})} \\
        &\qquad \qquad \qquad \qquad \qquad \qquad + \inflow(t, b)\left(\frac{\Pr(X_t^i = b \mid x_{\ell}^{-i})}{\wh \Pr(X_t^i = b \mid x_{\ell}^{-i})} - 1 - \log \frac{\Pr(X_t^i = b \mid x_{\ell}^{-i})}{\wh \Pr(X_t^i = b \mid x_{\ell}^{-i})} \right) \Biggr]\d t.
    \end{aligned}
    \end{equation*}

    \noindent
    \textbf{Uniform process.}
    We first prove the result for the uniform process, where \(\inflow(t, b) = \nicefrac{1}{S}\).
    The first term is the KL divergence, which is the Bregman divergence for \(\phi(\mu) = \sum_{b \in \cV} \mu(b) \log \mu(b)\), and the second term is the Bregman divergence for \(\phi(\mu) = -\sum_{b \in \cV} \log \mu(b)\). Here, \(\mu\) is a probability distribution over \(\cV\). Let \(t \in [u, \ell]\).
    For a general Bregman divergence \(\breg_{\phi}\) with \(\phi: \bR^n \to \bR\), using Hadamard's lemma, we have
    \begin{equation*}
        \breg_{\phi}(x, y) \defn \phi(x) - \phi(y) - (x - y)^\top \nabla \phi(y) = (x - y)^\top \left(\int_0^1 (1 - r) \nabla^2 \phi(y + r(x - y))\d r \right)(x - y).
    \end{equation*}
    For fixed \(t \in [u, \ell]\) and \(i \in [d]\) this gives, using \((x \log x)'' = \nicefrac 1 x\),
        \begin{equation*}
    \begin{aligned}
        f(t) &\defn \sum_{b \in \cV} \Pr(X_t^i = b \mid x_{\ell}^{-i}) \log \frac{\Pr(X_t^i = b \mid x_{\ell}^{-i})}{\wh \Pr(X_t^i = b \mid x_{\ell}^{-i})} \\
        &\quad = \sum_{b \in \cV}\left(\Pr(X_t^i = b \mid x_{\ell}^{-i}) - \wh \Pr(X_t^i = b \mid x_{\ell}^{-i})\right)^2 \int_0^1 \frac{1 - r}{r\Pr(X_t^i = b \mid x_{\ell}^{-i}) + (1 - r) \wh \Pr(X_t^i = b \mid x_{\ell}^{-i}) }\d r,
    \end{aligned}
    \end{equation*}
    and using \((-\log x)'' = \nicefrac 1 x^2\):
    \begin{equation*}
    \begin{aligned}
        g(t) &\defn \sum_{b \in \cV} \left(\frac{\Pr(X_t^i = b \mid x_{\ell}^{-i})}{\wh \Pr(X_t^i = b \mid x_{\ell}^{-i})} - 1 - \log \frac{\Pr(X_t^i = b \mid x_{\ell}^{-i})}{\wh \Pr(X_t^i = b \mid x_{\ell}^{-i})}\right) \\
        &\quad = \sum_{b \in \cV}\left(\Pr(X_t^i = b \mid x_{\ell}^{-i}) - \wh \Pr(X_t^i = b \mid x_{\ell}^{-i})\right)^2 \int_0^1 \frac{1 - r}{\left(r\Pr(X_t^i = b \mid x_{\ell}^{-i}) + (1 - r) \wh \Pr(X_t^i = b \mid x_{\ell}^{-i})\right)^2 }\d r.
    \end{aligned}
    \end{equation*}
    To upper bound \(f(t)\) with \(f(\ell)\), we need to upper bound, for fixed \(r \in [0, 1]\), 
    \begin{equation*}
        \frac{1}{r\Pr(X_t^i = b \mid x_{\ell}^{-i}) + (1 - r) \wh \Pr(X_t^i = b \mid x_{\ell}^{-i}) } \quad \text{with} \quad \frac{1}{r\Pr(X_{\ell}^i = b \mid x_{\ell}^{-i}) + (1 - r) \wh \Pr(X_\ell^i = b \mid x_{\ell}^{-i}) }.
    \end{equation*}
    Recall that \(\Pr(X_\ell^i = b \mid x_{\ell}^{-i}) = e^{-(\ell - t)}  \Pr(X_t^i = b \mid x_{\ell}^{-i}) + \frac{1}{S} (1 - e^{-(\ell - t)})\), similarly for \(\wh \Pr(X_\ell^i = b \mid x_{\ell}^{-i})\). Letting \(\alpha = \Pr(X_t^i = b \mid x_{\ell}^{-i})\) and \(\beta = \wh \Pr(X_t^i = b \mid x_{\ell}^{-i})\), we obtain:
    \begin{equation}
    \label{eq:comp-ft-with-fl}
    \begin{aligned}
        \frac{r\Pr(X_{\ell}^i = b \mid x_{\ell}^{-i}) + (1 - r) \wh \Pr(X_\ell^i = b \mid x_{\ell}^{-i})}{r\Pr(X_t^i = b \mid x_{\ell}^{-i}) + (1 - r) \wh \Pr(X_t^i = b \mid x_{\ell}^{-i}) } &= \frac{e^{-(\ell - t)} (r \alpha + (1 - r) \beta) +  \nicefrac {(1 - e^{-(\ell - t)})} S}{r \alpha + (1 - r) \beta} \\
        & = e^{-(\ell - t)} + \frac{\nicefrac {(1 - e^{-(\ell - t)})} S}{r \alpha + (1 - r) \beta} \\
        & \leq e^{-(\ell - t)} + \frac{ \nicefrac {(1 - e^{-(\ell - t)})} S }{ \nicefrac {(1 - e^{-t})} S} \\
        & = \frac{1 - e^{-\ell}}{1 - e^{-t}},
    \end{aligned}
    \end{equation}
    where the inequality follows as \(\alpha, \beta \geq \nicefrac { (1 - e^{-t})} S\). Since
    \begin{equation*}
        \left(\Pr(X_t^i = b \mid x_{\ell}^{-i}) - \wh \Pr(X_t^i = b \mid x_{\ell}^{-i})\right)^2 = e^{2(\ell - t)} \left(\Pr(X_\ell^i = b \mid x_{\ell}^{-i}) - \wh \Pr(X_\ell^i = b \mid x_{\ell}^{-i})\right)^2, 
    \end{equation*}
    we obtain
    \begin{equation*}
        \sum_{b \in \cV} \Pr(X_t^i = b \mid x_{\ell}^{-i}) \log \frac{\Pr(X_t^i = b \mid x_{\ell}^{-i})}{\wh \Pr(X_t^i = b \mid x_{\ell}^{-i})} \leq e^{\ell - t} \frac{e^\ell - 1}{e^t - 1} \sum_{b \in \cV} \Pr(X_\ell^i = b \mid x_{\ell}^{-i}) \log \frac{\Pr(X_{\ell}^i = b \mid x_{\ell}^{-i})}{\wh \Pr(X_\ell^i = b \mid x_{\ell}^{-i})}.
    \end{equation*}
    Squaring~\Cref{eq:comp-ft-with-fl}, we can also upper bound \(g(t)\) with \(g(\ell)\):
    \begin{equation*}
    \begin{aligned}
        &\sum_{b \in \cV} \left(\frac{\Pr(X_t^i = b \mid x_{\ell}^{-i})}{\wh \Pr(X_t^i = b \mid x_{\ell}^{-i})} - 1 - \log \frac{\Pr(X_t^i = b \mid x_{\ell}^{-i})}{\wh \Pr(X_t^i = b \mid x_{\ell}^{-i})}\right) \\
        &\quad \leq \left(\frac{e^\ell - 1}{e^t - 1}\right)^2 \sum_{b \in \cV} \left(\frac{\Pr(X_{\ell}^i = b \mid x_{\ell}^{-i})}{\wh \Pr(X_{\ell}^i = b \mid x_{\ell}^{-i})} - 1 - \log \frac{\Pr(X_{\ell}^i = b \mid x_{\ell}^{-i})}{\wh \Pr(X_{\ell}^i = b \mid x_{\ell}^{-i})}\right).
    \end{aligned}
    \end{equation*}
    Together with the bound \(\ell - u \leq \kappa \min(1, u)\) this proves that for the uniform process, \(\cL_{\mathrm{SE}}(t, \wh s_t, \wt s_t) \lesssim  \cL_{\mathrm{SE}}(\ell, \wh s_{\ell}, \wt s_{\ell})\), and as \(\wt s_\ell = s_\ell\), gives
    \begin{equation*}
    \cL_{\mathrm{approx}}^{(k)} \defn \bE_{x_{\ell} \sim q_{\ell}} \int_u^{\ell} \cL_{\mathrm{SE}}(t, \wh s_t, \wt s_t)\d t \lesssim (\ell - u) \cL_{\mathrm{SE}}(\ell, \wh s_\ell, s_\ell).
    \end{equation*}

    \noindent
    \textbf{Remasking process.} Now, we focus on the remasking process, where by~\Cref{lem:nu-remask} for \(b \in [S]\),
    \begin{equation*}
        \nu(t, b) = \frac{e^{-t}}{(1 - e^{-t}) S} \left(\cosh(t \sqrt{ 1  -p_M}) - 1\right).
    \end{equation*}
    Using that for all \(v > 0\) we have \(\Pr(X_v^i =b\mid x_{\ell}^{-i}) = e^{-v} \Pr(X_0^i=b \mid x_{\ell}^{-i}) + (1 - e^{-v}) \nu(v, b)\), we express
    \begin{equation*}
        \Pr(X_{\ell}^i = b \mid x_{\ell}^{-i}) = e^{-(\ell - t)} \Pr(X_t^i = b \mid x_{\ell}^{-i}) + (1 - e^{-\ell}) \nu(\ell, b) - (1 - e^{-t}) e^{-(\ell-t)} \nu(t, b),
    \end{equation*}
    and analogously for \(\wh \Pr(X_{\ell}^i = b \mid x_{\ell}^{-i})\).
    Repeating the steps from~\Cref{eq:comp-ft-with-fl}, we get (recall the notation \(\alpha = \Pr(X_t^i = b \mid x_{\ell}^{-i})\) and \(\beta = \wh \Pr(X_t^i = b \mid x_{\ell}^{-i})\))
    \begin{equation}
    \begin{aligned}
         \label{eq:up-b-approx}&\frac{r\Pr(X_{\ell}^i = b \mid x_{\ell}^{-i}) + (1 - r) \wh \Pr(X_\ell^i = b \mid x_{\ell}^{-i})}{r\Pr(X_t^i = b \mid x_{\ell}^{-i}) + (1 - r) \wh \Pr(X_t^i = b \mid x_{\ell}^{-i}) } \\
         &\quad = \frac{e^{-(\ell - t)} (r \alpha + (1 - r) \beta) + (1 - e^{-\ell}) \nu(\ell, b) - (1 - e^{-t})e^{-(\ell - t)} \nu(t, b)}{r \alpha + (1 - r) \beta} \\
        & = e^{-(\ell - t)} + \frac{(1 - e^{-\ell}) \nu(\ell, b) - (1 - e^{-t})e^{-(\ell - t)} \nu(t, b)}{r \alpha + (1 - r) \beta}.
    \end{aligned}
    \end{equation}
    If \(p_M = 1\), we have that \(\nu(t, b) = 0\) for \(b \in [S]\) and all \(t \geq 0\), thus the latter expression equals \(e^{-(\ell - t)} \leq 1\). When \(p_M < 1\), we continue~\Cref{eq:up-b-approx} as follows:
    \begin{equation}
        \begin{aligned}
        &\frac{r\Pr(X_{\ell}^i = b \mid x_{\ell}^{-i}) + (1 - r) \wh \Pr(X_\ell^i = b \mid x_{\ell}^{-i})}{r\Pr(X_t^i = b \mid x_{\ell}^{-i}) + (1 - r) \wh \Pr(X_t^i = b \mid x_{\ell}^{-i}) } \\
        & \leq e^{-(\ell - t)} + \frac{(1 - e^{-\ell}) \nu(\ell, b) - (1 - e^{-t})e^{-(\ell - t)} \nu(t, b)}{(1 - e^{-t}) \nu(t, b)} \\
        & = \frac{(1 - e^{-\ell}) \nu(\ell, b)}{(1 - e^{-t})\nu(t, b)},
    \end{aligned}
    \end{equation}
    Using the expression for \(\nu(t, b)\) for the remasking process, we obtain
    \begin{equation*}
    \frac{(1 - e^{-\ell}) \nu(\ell, b)}{(1 - e^{-t})\nu(t, b)} = \frac{e^{-\ell}\left(\cosh(\ell \sqrt{1 - p_M}) - 1\right)}{e^{-t}\left(\cosh(t \sqrt{1 - p_M}) - 1\right)}.
    \end{equation*}
    As the ratio is strictly decreasing with respect to \(p_M\), we consider the case \(p_M = 0\), which gives
    \begin{equation*}
        \frac{(1 - e^{-\ell}) \nu(\ell, b)}{(1 - e^{-t})\nu(t, b)} = \left(\frac{1 - e^{-\ell}}{1 - e^{-t}}\right)^2 \leq (1 + \kappa)^2 \lesssim 1.
    \end{equation*}
    The rest of the proof follows closely the argument for the uniform process. This concludes the proof.

\section{Proofs of results in~\Cref{sec:technical}}
\subsection{Proof of~\Cref{prop:it-results}}
    The first property follows immediately from the definition of the mutual information, the total correlation, and the dual total correlation. The second and third properties follow as:
\begin{equation}
\label{eq:mi-der}
    \begin{aligned}
        \frac{\partial}{\partial v} \ent(X_t^i \mid  X_v^{-i})
        &= -\bE_{x_t^i, x_v^{-i}} \left[\left(Q^{-i}_v \log \Pr(x_t^i \mid \cdot\ )\right)(x_v^{-i})\right], \\ \frac{\partial}{\partial t} \ent(X_t^i \mid  X_v^{-i})
        &= -\bE_{x_t^i, x_v^{-i}} \left[\left(Q^{i}_t \log \Pr(\, \cdot \mid x_v^{-i}\,)\right)(x_t^i)\right], \quad \text{and} \\
        \mi(X_t^i\ ;\, X_v^{-i}) &= \ent(X_t^i) - \ent(X_t^i \mid X_v^{-i}).
     \end{aligned}
\end{equation}
For the fourth property, we proceed as follows:
    
    \begin{equation}
    \label{eq:ent}
    \begin{aligned}
    \frac{\d}{\d v} \ent(X_v) &=
        \frac{\d }{\d v} \left(\bE_{x_v} \log \frac{1}{\Pr(X_v =x_v)}\right) \\
        &= \sum_{x_v\in\cV^{d}} \left(\frac{\partial}{\partial v}\Pr(X_v = x_v) \right)\log \frac{1}{\Pr(X_v =x_v)} \\
        &= \bE_{x_v} \sum_{i\in[d]} \sum_{b\in\cV} \qtok_v(x_v^i, b) \log \frac{\Pr(X_v = x_v)}{\Pr(X_v = x_v \odot_i b)}\\
        &= \bE_{x_v} \sum_{i\in[d]} \sum_{b\in\cV} \qtok_v(x_v^i, b) \log \frac{\Pr(X_v^i = x_v^i \mid X_v^{-i} = x_v^{-i})}{\Pr(X_v^i = b \mid X_v^{-i} = x_v^{-i})}.
    \end{aligned}
    \end{equation}
    Similarly, it can be easily calculated that 
    \begin{equation}
    \label{eq:cont-ent}
    \begin{aligned}
        &\frac{\d }{\d v}  \ent(X_v^i \mid X_v^{-i}) \\
        &\quad = \bE_{x_v} \sum_{j\in[d]} \sum_{b\in\cV}\qtok_v(x_v^j, b) \log \frac{\Pr(X_v^i = x_v^i \mid X_v^{-i} = x_v^{-i}) }{\Pr(X_v^i = (x_v \odot_j b)^i \mid X_v^{-i} = (x_v\odot_j b)^{-i})} \\
        &\quad = \bE_{x_v} \sum_{b\in\cV} \qtok_v(x_v^i, b) \log \frac{\Pr(X_v^i = x_v^i \mid X_v^{-i} = x_v^{-i}) }{\Pr(X_v^i = b \mid X_v^{-i} = x_v^{-i})} + \bE_{x_v} \sum_{j\neq i} \sum_{b\in\cV} \qtok_v(x_v^j, b) \log \frac{\Pr(X_v^i = x_v^i \mid X_v^{-i} = x_v^{-i}) }{\Pr(X_v^i = x_v^i \mid X_v^{-i} = x_v^{-i}\odot_j b)}.
    \end{aligned}
    \end{equation}
    Taking these collectively yields 
    \begin{align*}
        \frac{\d}{\d v} \dtc(X_v) &= \frac{\d}{\d v} \left(\ent(X_v) - \sum_{i\in[d]} \ent(X_v^i \mid X_v^{-i})\right) \\
        & = -\bE_{x_v} \sum_{i \in [d]} \sum_{j\neq i} \sum_{b\in\cV} \qtok_v(x_v^j, b) \log \frac{\Pr(X_v^i = x_v^i \mid X_v^{-i} = x_v^{-i}) }{\Pr(X_v^i = x_v^i \mid X_v^{-i} = x_v^{-i}\odot_j b)} \\
        & = \bE_{x_v} \sum_{i \in [d]} \left[\left(Q^{-i}_v \log \Pr(x_v^i \mid \cdot\ )\right)(x_v^{-i})\right]\\
        & = \sum_{i \in [d]}\frac{\partial}{\partial v} \mi(X_t^i\ ;\, X_v^{-i}) \biggr\rvert_{t = v},    \end{align*}
    which, together with (i), proves (iv). For the last property, we continue from (ii):
\begin{align}
\label{eq:mi-second-der-together}
    \notag &\frac{\partial^2 }{\partial t\partial v} \mi(X_t^i\,;\, X_v^{-i}) \\
    \notag & = \bE_{x_v^{-i}} \frac{\partial}{\partial t} \left(\bE_{x_t^i \mid x_v^{-i}} \sum_{j \neq i} \sum_{b\in\cV} \qtok_v(x_v^j, b)  \log \frac{\Pr(x_t^i \mid x_v^{-i} \odot_j b)}{\Pr(x_t^i \mid x_v^{-i})}\right) \\
    \notag& = \bE_{x_t^i, x_v^{-i}} \sum_{j \neq i} \sum_{b, c\in\cV} \qtok_v(x_v^j, b) \qtok_t(x_t^i, c)  \left(\log \frac{\Pr(X_t^i = c \mid x_v^{-i} \odot_j b)}{\Pr(X_t^i = c \mid x_v^{-i})} - \log \frac{\Pr(x_t^i \mid x_v^{-i} \odot_j b)}{\Pr(x_t^i \mid x_v^{-i})}\right) \\
    & \quad +  \bE_{x_t^i, x_v^{-i}} \sum_{j \neq i} \sum_{b,c\in\cV} \qtok_v(x_v^j, b) \qtok_t(c, x_t^i) \left(\frac{ \Pr(X_t^i = c \mid x_v^{-i} \odot_j b)}{\Pr(x_t^i \mid x_v^{-i} \odot_j b)} - \frac{\Pr(X_t^i = c \mid x_v^{-i})}{\Pr(x_t^i \mid x_v^{-i})}\right). 
\end{align}
For fixed \(x_v^{-i}, j, b\), recall \(r(a) = \frac{\Pr(X_t^i = a \mid X_v = x_v^{-i})}{\Pr(X_t^i = a \mid X_v = x_v^{-i} \odot_j b)}\). Then, we have
\begin{equation}
\begin{aligned}
\label{eq:mi-second-der-first}
    &\bE_{x_t^i, x_v^{-i}} \left[\log \frac{\Pr(X_t^i = c \mid x_v^{-i} \odot_j b)}{\Pr(X_t^i = c \mid x_v^{-i})} - \log \frac{\Pr(x_t^i \mid x_v^{-i} \odot_j b)}{\Pr(x_t^i \mid x_v^{-i})}\right] = \bE_{x_v^{-i}} \sum_{x_t^{-i}\in\cV} \Pr(x_t^i \mid x_v^{-i} \odot_j b) r(x_t^i) \log \frac{r(x_t^i)}{r(c)}, 
\end{aligned}
\end{equation}
and 
\begin{equation}
    \begin{aligned}
    \label{eq:mi-second-der-second}
        &\bE_{x_t^i, x_v^{-i}} \sum_{c\in\cV} \qtok_t(c, x_t^i) \left(\frac{ \Pr(X_t^i = c \mid x_v^{-i} \odot_j b)}{\Pr(x_t^i \mid x_v^{-i} \odot_j b)} - \frac{\Pr(X_t^i = c \mid x_v^{-i})}{\Pr(x_t^i \mid x_v^{-i})}\right) \\
        & \quad = \bE_{x_v^{-i}} \sum_{x_t^i, c\in\cV} \qtok_t(c, x_t^i) \Pr(x_t^i \mid x_v^{-i}) \left(\frac{ \Pr(X_t^i = c \mid x_v^{-i} \odot_j b)}{\Pr(x_t^i \mid x_v^{-i} \odot_j b)} - \frac{\Pr(X_t^i = c \mid x_v^{-i})}{\Pr(x_t^i \mid x_v^{-i})}\right) \\
        & \quad = \bE_{x_v^{-i}} \sum_{x_t^i, c\in\cV} \qtok_t(x_t^i, c) \Pr(X_t^i = c \mid x_v^{-i}) \left(\frac{ \Pr(x_t^i \mid x_v^{-i} \odot_j b)}{\Pr(X_t^i = c \mid x_v^{-i} \odot_j b)} - \frac{\Pr(x_t^i \mid x_v^{-i})}{\Pr(X_t^i = c \mid x_v^{-i})}\right) \\
        & \quad = \bE_{x_v^{-i}} \sum_{x_t^i, c\in\cV} \qtok_t(x_t^i, c) \Pr(x_t^i \mid x_v^{-i} \odot_j b) \left(\frac{ \Pr(X_t^i = c \mid x_v^{-i})}{\Pr(X_t^i = c \mid x_v^{-i} \odot_j b)} - \frac{\Pr(x_t^i \mid x_v^{-i})}{\Pr(x_t^i \mid x_v^{-i} \odot_j b)}\right) \\
        & \quad = \bE_{x_v^{-i}} \sum_{x_t^i, c\in\cV} \qtok_t(x_t^i, c) \Pr(x_t^i \mid x_v^{-i} \odot_j b) \left(r(c) - r(x_t^i)\right),
    \end{aligned}
\end{equation}
where we relabeled \(x_t^i \leftrightarrow c\) in the third line.
Plugging~\Cref{eq:mi-second-der-first,eq:mi-second-der-second} into~\Cref{eq:mi-second-der-together} and 
setting \(a = x_t^i\), we obtain
\begin{equation}
\label{eq:second-par-der}
    \frac{\partial^2 }{\partial t\partial v} \mi(X_t^i\ ;\, X_v^{-i}) = \bE_{x_v^{-i}} \sum_{j \neq i} \sum_{a, b, c}\qtok_v(x_v^j, b) \qtok_t(a, c)\Pr(X_t^i = a \mid x_v^{-i} \odot_j b) r(c) \left[\frac{r(a)}{r(c)} \log \frac{r(a)}{r(c)} + 1- \frac{r(a)}{r(c)} \right] \geq 0,
\end{equation}
as \(x \log x + 1 - x \geq 0\) for all \(x \geq 0\).

\subsection{Proof of~\Cref{prop:cond-exp}}
As \(\wt Q_{T-t}(x \odot_i x_t^i, x \odot_i b) = \qtok_t(b, x_t^i) \wt s_t(x \odot_i b, x \odot_i x_t^i)\) and \(\back Q_{T-t}(x_t, x_t \odot_i b) = \qtok_t(b, x_t^i) s_t(x_t \odot_i b, x_t)\), we proceed by showing 
\begin{equation*}
    \wt s_t(x \odot_i b, x \odot_i x_t^i) =
        \bE_{x_t^{-i}}\Bigl[s_t(x_t \odot_i b, x_t) \Bigr\rvert X_{\ell}^{-i} = x_{\ell}^{-i}, X_t^i = x_t^i\Bigr].
\end{equation*}
    Here and below we omit stochastic process notation, such as \(X_t = \cdot\), \(X_{\ell}^{-i} = \cdot\), etc., when the time and dimension indices are clear from the context. We have
        \begin{align*}
         \bE_{x_t^{-i}}\Bigl[s_t(x_t \odot_i b, x_t) \Bigr\rvert X_{\ell}^{-i} = x_{\ell}^{-i}, X_t^i = x_t^i\Bigr] & =\bE\left[\frac{\Pr(x_t \odot_i b)}{\Pr(x_t)}\mid x_\ell^{-i}, x_t^i\right] \\
         &= \sum_{x_t^{-i}\in\cV^{d-1}} \frac{\Pr(x_t \odot_i b)}{\Pr(x_t)} \Pr\left(x_t^{-i}\ \big|\ x_\ell^{-i}, x_t^i\right) \\
        & = \sum_{x_t^{-i}\in\cV^{d-1}} \frac{\Pr(x_t \odot_i b)}{\Pr(x_t)} \cdot \frac{\Pr\left(x_\ell^{-i}\ \big|\ x_t^{-i}\right) \Pr\left(x_t^{-i}\ \big|\  x_t^i\right)}{\Pr\left(x_\ell^{-i}\ \big|\  x_t^i\right)} \\
        & = \sum_{x_t^{-i}\in\cV^{d-1}} \frac{\Pr(x_t \odot_i b)}{\Pr(x_t^i) \cancel{\Pr(x_t^{-i} \mid x_t^i)}} \cdot \frac{\Pr(x_\ell^{-i} \mid x_t^{-i}) \cancel{\Pr(x_t^{-i} \mid x_t^i)}}{\Pr(x_\ell^{-i} \mid x_t^i)}. \\
    \end{align*}
    Observe that in the denominator we have 
        \(\Pr(x_t^i)\Pr(x_\ell^{-i} \mid x_t^i)= \Pr(x_{\ell}^{-i},\, x_t^i) = \sum_{x_t^{-i}\in\cV^{d-1}} \Pr(x_t) \Pr(x_{\ell}^{-i} \mid x_t^{-i})\). 
    We continue:
    \begin{align*}
        \bE_{x_t^{-i}}\Bigl[s_t(x_t \odot_i b, x_t) \Bigr\rvert X_{\ell}^{-i} = x_{\ell}^{-i}, X_t^i = x_t^i\Bigr] & = \frac{\sum_{x_t^{-i}} \Pr(x_t \odot_i b) \Pr(x_\ell^{-i} \mid x_t^{-i})}{\sum_{x_t^{-i}} \Pr(x_t) \Pr(x_\ell^{-i} \mid x_t^{-i})} \\
        & = \frac{\sum_{x_t^{-i}} \Pr(x_t \odot_i b) \Pr(x_\ell^{-i} \mid x_t^{-i}) / \Pr(x_\ell^{-i})}{\sum_{x_t^{-i}} \Pr(x_t) \Pr(x_\ell^{-i} \mid x_t^{-i}) / \Pr(x_\ell^{-i})} \\
        & = \frac{\sum_{x_t^{-i}} \Pr(x_t \odot_i b \mid x_{\ell}^{-i})}{\sum_{x_t^{-i}} \Pr(x_t \mid x_{\ell}^{-i})} \\
        & = \frac{\Pr(X_t^i = b \mid x_{\ell}^{-i})}{\Pr(X_t^i = x_t^i \mid x_{\ell}^{-i})}.
        \end{align*}
    Recalling that by~\Cref{def:wt-score}, \(\wt s_t(x \odot_i b, x \odot_i x_t^i) =\Pr(X_t^i = b \mid x_{\ell}^{-i})\ /\,\Pr(X_t^i = x_t^i \mid  x_{\ell}^{-i})\) finishes the proof.

\subsection{Proof of~\Cref{lem:ratio-ub}}
    First, assume that \(q < p\). In this case, as 
    \begin{equation*}
        x - 1 - \log x \leq x \log x - x + 1 \qquad \text{for} \quad x \geq 1,
    \end{equation*}
    we have
    \begin{equation*}
        \frac{\frac p q  - 1 - \log \frac p q}{q\left(\frac p q \log \frac p q - \frac p q + 1 \right)}  \leq \frac 1 q =  \frac 1 {\min(p, q)} \leq \frac 1 \alpha. 
    \end{equation*}
    The case \(p < q\) follows similarly as
    \begin{equation*}
        x(x - 1 - \log x) \leq x \log x - x + 1 \qquad \text{for} \quad x \leq 1.
    \end{equation*}
    Finally, as \(
        \lim_{x \to 1} \frac{x-  1 - \log x}{x\log x - x + 1} = 1\),
    the case \(p =q\) follows by continuity.

\subsection{Proof of~\Cref{lem:sum-prob}}
Recall the definition of \(\nu(t, a)\) from~\Cref{eq:def-nu}. We have
\begin{equation*}
\begin{aligned}
    \Pr(X_t^i = a) &= \Pr(X_0^i = a) e^{-t} + \nu(t, a)(1 - e^{-t}), \text{ and} \\
     \Pr(X_t^i = a \mid X_0^i = c) &= \bI\{a = c\} e^{-t} + \nu(t, a)(1 - e^{-t}).
\end{aligned}
\end{equation*}
Taking the difference and summing over all \(a \in \cV\), we get
\begin{equation*}
\begin{aligned}
    \sum_{a\in\cV} \qtok(a, b) \left(\Pr(X_t^i = a \mid X_0^i = c) - \Pr(X_t^i = a) \right) &=  \left(\sum_{a\in\cV} \qtok(a, b) \bI\{a = c\} - \sum_{a\in\cV} \qtok(a, b) \Pr(X_0^i = a) \right) e^{-t} \\
    & = \left(\qtok(c,b) - \sum_{a\in\cV} \qtok(a, b) \Pr(X_0^i = a) \right) e^{-t}. 
\end{aligned}
\end{equation*}
For the uniform process, the term in the bracket equals zero as all \(\qtok(a, b) = \nicefrac{1}{S}\). For the remasking process, as \(c \in [S]\) we consider three cases: (i) \(b \in [S]\), (ii) \(b = \mathrm{REMASK}\), and (iii) \(b = \mask\). In the first case, we have \(\qtok(c, b) = 0\) and the only \(a \in \cV\) such that \(\qtok(a, b) \neq 0\) is \(a = \mathrm{REMASK}\), for which \(\Pr(X_0^i = a) = 0\). In the second case, \(\qtok(c, b) = 1 - p_M = \sum_{a \in \cV} \qtok(a, b) \Pr(X_0^i = a)\). The third case follows equivalently. This proves~\Cref{eq:indep-initial}.  
Next,  we have
    \begin{equation*}
    \begin{aligned}
        &\sum_{a\in\cV} \qtok(a, b) \Pr(X_t^i = a \mid X_{u}^{-i} = x^{-i}) \\
        &\quad  = \sum_{x_0^i\in\cV} \Pr(X_0^i = x_0^i \mid X_u^{-i} = x^{-i}) \left(\sum_{a\in\cV} \qtok(a, b) \Pr(X_t^i = a \mid X_{0}^{i} = x_0^{i})\right) \\
        & \quad = \inflow(t, b) \sum_{x_0^i\in\cV} \Pr(X_0^i = x_0^i \mid X_u^{-i} = x^{-i}) \\
        &\quad = \inflow(t, b),
    \end{aligned}
    \end{equation*}
    which proves~\Cref{eq:indep-second}. Finally, using~\Cref{eq:whpr-eq-pr}, we can replace \(\Pr(\cdot \mid \cdot)\) with \(\wh \Pr(\cdot \mid \cdot)\) in~\Cref{eq:indep-initial,eq:indep-second}, which concludes the proof.
\section{Connection to the effective total correlation}

In~\cite{dmitriev26efficient}, for the masking noising process, the effective total correlation is studied:
\begin{equation*}
    \cD(q_{\mathrm{data}}) \defn \int_0^{\infty} \min(1, t) \sum_{i \neq j} \mi(X_t^i\ ;\,  X_t^j \mid X_t^{-i,j}) \d t.
\end{equation*}
The next proposition connects this quantity with the presented results.
\begin{proposition}
    For the masking noising process,
    \begin{equation*}
        \int_0^1 \dtc(X_t) \d t =  \cD(q_{\mathrm{data}}).
    \end{equation*}
\end{proposition}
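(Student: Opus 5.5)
The plan is to reduce the identity to a pointwise statement about the time derivative of $\dtc(X_t)$ and then integrate by parts. Concretely, I would first show that for the masking process
\begin{equation*}
    -\frac{\d}{\d t}\dtc(X_t) = \sum_{i\neq j} \mi(X_t^i\ ;\, X_t^j \mid X_t^{-i,j}) \qquad \text{for all } t>0.
\end{equation*}
Then I would write $\dtc(X_t) = -\int_t^\infty \frac{\d}{\d s}\dtc(X_s)\,\d s$, which uses $\dtc(X_s)\to 0$ as $s\to\infty$. Integrating over $t\in[0,1]$ and applying Fubini, which is justified because the integrand is nonnegative, gives
\begin{equation*}
    \int_0^1 \dtc(X_t)\,\d t = \int_0^\infty \left(\int_0^{\min(1,s)} \d t\right)\left(-\frac{\d}{\d s}\dtc(X_s)\right)\d s = \int_0^\infty \min(1,s)\sum_{i\neq j}\mi(X_s^i\ ;\,X_s^j\mid X_s^{-i,j})\,\d s = \cD(q_{\mathrm{data}}).
\end{equation*}

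For the derivative identity I would start from the computation in the proof of \Cref{prop:it-results} (iv). That proof gives
\begin{equation*}
    \frac{\d}{\d v}\dtc(X_v) = -\bE_{x_v}\sum_{i}\sum_{j\ne i}\sum_b \qtok(x_v^j,b)\log\frac{\Pr(X_v^i=x_v^i\mid X_v^{-i}=x_v^{-i})}{\Pr(X_v^i=x_v^i\mid X_v^{-i}=x_v^{-i}\odot_j b)}.
\end{equation*}
For masking, the only off-diagonal rate is $\qtok(a,\mask)=1$ for $a\in[S]$. Hence the inner sum reduces to $\bI\{x_v^j\ne\mask\}$ times the log-ratio with $b=\mask$.

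The key observation concerns the masking clock of coordinate $j$. This clock is independent of $X_0$ and of all other coordinates' clocks. Consequently the event $\{X_v^j=\mask\}$ is independent of $(X_v^i, X_v^{-i,j})$, which gives
\begin{equation*}
    \Pr(X_v^i=a\mid X_v^{-i,j}=y,\ X_v^j=\mask)=\Pr(X_v^i=a\mid X_v^{-i,j}=y).
\end{equation*}
Each term therefore becomes
\begin{equation*}
    \bE\left[\bI\{x_v^j\ne\mask\}\log\frac{\Pr(x_v^i\mid x_v^{-i})}{\Pr(x_v^i\mid x_v^{-i,j})}\right].
\end{equation*}
By the same independence, on $\{x_v^j=\mask\}$ the log-ratio vanishes identically, so the indicator can be dropped. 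What remains is exactly $\mi(X_v^i;X_v^j\mid X_v^{-i,j})$, which establishes the pointwise identity.

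I expect the main care to be needed in the limiting and integrability details rather than the algebra. The step $\dtc(X_s)\to0$ follows from $\dtc(X_s)\le \ent(X_s)$ and the fact that $X_s$ concentrates on the all-$\mask$ string with probability $\ge 1-de^{-s}$. This makes $\ent(X_s)$, and hence $\dtc(X_s)$, decay to $0$, so no boundary term survives. Differentiability of $\dtc(X_t)$ for $t>0$ is inherited from the finite-state Kolmogorov equations, and convergence of $\int_0^\infty \min(1,s)(\cdots)\,\d s$ then follows from the monotone equality above.
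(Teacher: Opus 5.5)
Your proof is correct and follows the paper's route. Both establish the pointwise identity $-\frac{\d}{\d t}\dtc(X_t)=\sum_{i\neq j}\mi(X_t^i\,;\,X_t^j\mid X_t^{-i,j})$ for the masking process and then integrate against $\min(1,t)$ using $\dtc(X_t)\to 0$. The differences are in execution only: you use Tonelli where the paper integrates by parts, and you get the derivative identity by simplifying the generator formula from the proof of \Cref{prop:it-results}~(iv) through the independence of coordinate $j$'s masking clock, whereas the paper expresses the same erasure fact through the chain rule $\mi(X_t^i;X_v^{-i})=\mi(X_t^i;X_v^{-(i,j)})+\mi(X_t^i;X_v^j\mid X_v^{-i,j})$ and per-coordinate time derivatives.
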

\begin{proof}
    We show that both expressions are equal to
    \begin{equation*}
        -\int_0^{\infty} \min(1, t) \frac{\d}{\d t}\dtc(X_t)\d t.
    \end{equation*}
    Indeed,
    \begin{equation*}
    \begin{aligned}
        -\int_0^{\infty} \min(1, t) \frac{\d}{\d t}\dtc(X_t)\d t &= -\int_0^1 t \frac{\d}{\d t}\dtc(X_t)\d t - \int_1^{\infty}  \frac{\d}{\d t}\dtc(X_t)\d t \\
        & = t \dtc(X_t) \biggr\rvert_{t=1}^{t=0} + \int_0^1 \dtc(X_t) \d t - \dtc(X_t)\biggr\rvert_{t=\infty}^{t=1} \\
        & = \int_0^1 \dtc(X_t) \d t.
    \end{aligned}
    \end{equation*}
    Furthermore, observe that as \(\mi(X_t^i\ ;\, X_v^{-i}) = \mi(X_t^i\ ;\,X_v^{-(i,j)}) + \mi(X_t^i\ ;\, X_v^j \mid X_v^{-i,j})\) for any \(j \neq i\), we have
    \begin{equation*}
        \frac{\partial}{\partial v} \mi(X_t^i\ ;\, X_v^{-i}) = \sum_{j \neq i} \frac{\partial}{\partial v_j} \mi(X_t^i\ ;\, X_v^{-i}) = \sum_{j \neq i} \frac{\partial}{\partial v_j} \mi(X_t^i\ ;\, X_v^j \mid X_v^{-i,j}),
    \end{equation*}
    where \(\frac \partial {\partial v_j}\) denotes taking the derivative only with respect to the \(j\)-th coordinate of \(X_v^{-i}\).
    Next, for the masking noising process,
    \begin{equation*}
        \frac{\partial}{\partial v_j} \mi(X_t^i\ ;\, X_v^j \mid X_v^{-i,j}) = \mi(X_t^i\ ;\, \mask \mid X_v^{-i,j}) -\mi(X_t^i\ ;\, X_v^j \mid X_v^{-i,j})=  -\mi(X_t^i\ ;\, X_v^j \mid X_v^{-i,j}).
    \end{equation*}
    Using that \(
        \sum_i\frac{\partial}{\partial v} \mi(X_t^i\ ;\, X_v^{-i}) \biggr\rvert_{t = v} = \frac{\d}{\d v} \dtc(X_v)\), as shown in~\Cref{prop:it-results} (iv), we obtain
    \begin{equation*}\int_0^{\infty} \min(1, t) \sum_{i \neq j} \mi(X_t^i\ ;\,  X_t^j \mid X_t^{-i,j}) \d t =
        -\int_0^{\infty} \min(1, t) \frac{\d}{\d t}\dtc(X_t)\d t,
    \end{equation*}
    which finishes the proof.
\end{proof}
For the masking process, as \(\cF(t, b) = 0\) for \(b \in [S]\), one can show a stronger version of~\Cref{cor:second-der-ub}:
\begin{equation}
    \frac{\partial^2}{\partial t \partial v} \mi(X_t^i\ ;\, X_v^{-i}) = -\frac{\partial}{\partial v} \mi(X_t^i\ ;\, X_v^{-i}),
\end{equation}
which implies that the discretization error can be written as
\begin{equation}
    \sum_{k=0}^{N - 1} \cL_{\mathrm{discr}}^{(k)} \lesssim - \kappa \int_0^{\infty} \min(1, t) \frac{\d}{\d t}\dtc(X_t)\d t = \kappa \cD(q_{\mathrm{data}}).
\end{equation}
Therefore, our results recover exactly the best previously known bound for the masking noising process and obtain the first adaptive bound for the uniform and remasking noising processes.

\bibliographystyle{apalike}
\bibliography{references}

\end{document}